
\documentclass[10pt]{article}

\usepackage{microtype}
\usepackage{graphicx}
\usepackage{subfigure}
\usepackage{booktabs} 
\usepackage{fullpage}

\usepackage[breaklinks=true, colorlinks=true]{hyperref}

\usepackage{amssymb}




\usepackage{amsmath}
\usepackage{amssymb}
\usepackage{mathtools}
\usepackage{amsthm}

\usepackage[capitalize,noabbrev]{cleveref}

\theoremstyle{plain}
\newtheorem{theorem}{Theorem}[section]

\newtheorem{lemma}[theorem]{Lemma}
\newtheorem{corollary}[theorem]{Corollary}
\theoremstyle{definition}

\theoremstyle{remark}
\newtheorem{remark}[theorem]{Remark}

\usepackage[textsize=tiny]{todonotes}
\usepackage{natbib}
\usepackage{amsmath,amsfonts}
\usepackage{amsthm}
\usepackage{csquotes}
\usepackage{url}
\usepackage{bm}

\usepackage[ruled,vlined,linesnumbered, algo2e]{algorithm2e}

\newcommand{\Dim}{{\mathcal Dim}}

\newcommand{\rE}{{\mathbb E}}

\newcommand{\rR}{{\mathbb R}}

\newcommand{\tr}{{\mathrm{trace}}}
\newcommand{\rP}{{\mathbb{P}}}

\newcommand{\cQ}{{\mathcal Q}}
\newcommand{\cC}{{\mathcal C}}
\newcommand{\cD}{{\mathcal D}}

\newcommand{\cY}{{\mathcal Y}}

\newcommand{\cX}{{\mathcal X}}

\newcommand{\cF}{{\mathcal F}}
\newcommand{\cP}{{\mathcal P}}
\newcommand{\cT}{{\mathcal T}}
\newcommand{\cL}{{\mathcal L}}

\newcommand{\hy}{\hat{y}}

\newcommand{\hh}{\widehat{h}}
\newcommand{\hf}{\widehat{f}}
\newcommand{\hw}{\widehat{\bf w}}

\newcommand{\sign}{\mathrm{sgn}}

\renewcommand{\tr}{\mathrm{tr}}

\usepackage{mathtools}

\DeclarePairedDelimiter\ang{\langle}{\rangle}


\newcommand{\x}{\ensuremath{\mathbf{x}}}
\newcommand{\g}{\ensuremath{\mathbf{g}}}

\newcommand{\E}[1]{{\mathbb{E}\left[{#1}\right]}}

\newcommand{\w}{\ensuremath{\mathbf{w}}}
\newcommand{\z}{\ensuremath{\mathbf{z}}}

\newcommand{\ind}[1]{\ 1\hspace{-2.3mm}{1}{\left\{#1\right\}}}

\newcommand{\argmax}[1]{\underset{#1}{\mathrm{argmax}} \:}
\newcommand{\argmin}[1]{\underset{#1}{\mathrm{argmin}} \:}
\renewcommand{\max}[1]{\underset{#1}{\mathrm{max}} \:}

\newcommand{\ep}{\mathcal{E}}


\begin{document}

\title{\bf Fast Rates in Pool-Based Batch Active Learning\thanks{Part of this work appeared in ICML 2022.}}






\author{
Claudio Gentile\thanks{Google Research, New York}
\and Zhilei Wang\thanks{New York University, New York}
\and 
Tong Zhang\thanks{Google Research, New York, and The Hong Kong University of Science and Technology, Hong Kong}
}



\vskip 0.3in




\date{}
\maketitle

\begin{abstract}
We consider a batch active learning scenario where the learner adaptively issues batches of points to a labeling oracle. Sampling labels in batches is highly desirable in practice due to the smaller number of interactive rounds with the labeling oracle (often human beings). However,  batch active learning typically pays the price of a reduced adaptivity, leading to suboptimal results. In this paper we propose a solution which requires a careful trade off between the informativeness of the queried points and their diversity. 
We theoretically investigate batch active learning in the practically relevant scenario where the unlabeled pool of data is available beforehand ({\em pool-based} active learning). We analyze a novel stage-wise greedy algorithm and show that, as a function of the label complexity, the excess risk of this algorithm matches the known minimax rates in standard statistical learning settings. Our results also exhibit a mild dependence on the batch size. These are the first theoretical results that employ careful trade offs between informativeness and diversity to rigorously quantify the statistical performance of batch active learning in the pool-based scenario.
\end{abstract}

\section{Introduction}
\label{s:intro}
The aim of Active Learning is to reduce the data requirement of training processes through the careful selection of informative subsets of the data across several interactive rounds. 
This increased interactive power enables the adaptation of the sampling process to the actual state of the learning algorithm at hand, yet this benefit comes at the price of frequent re-training of the model and increased interactions with the labeling oracle (which is often just a pool of human labelers). 

The {\em batch} mode of active learning is one where labels are queried in batches of suitable size, and the models are re-trained/updated either after each batch or even less frequently. This sampling mode often corresponds to the way labels are gathered in practical large-scale processing pipelines. 

Batch active learning tries to strike a reasonable balance between the benefits of adaptivity and the costs associated with interaction and re-training. Yet, since the sampling is split into batches, and model updates can only be performed at the end of each batch, a batch active learning algorithm has to prevent to the extent possible the sampling of redundant points. 
The standard trade-off that arises is then to ensure that the sampled points are {\em informative} enough for the model, if taken in isolation, while at the same time being {\em diverse} enough so as to avoid sampling redundant labels.

We study batch active learning in the {\em pool-based} model, where an unlabeled pool of data is made available to the algorithm beforehand, and the goal is to single out a subset of the data so as to achieve the same statistical performance as if training were carried out on the entire pool. 
In this setting, we describe and analyze novel algorithms that obtain
minimax rates of convergence of their excess risk as a function of the number of requested labels. Interestingly enough, these optimal rates are retained even if we allow the batch size to grow with the pool size, the actual trade-off being ruled by the amount of noise in the data.
Another appealing aspect is that our algorithms guarantee a number of re-training rounds which is at worst logarithmic, while being able to automatically adapt to the level of noise. 

We operate in specific realizable settings, starting with linear or generalized linear models, and then extending our results to the more general non-linear setting. Unlike what is traditionally done by many algorithmic solutions to active learning available in the literature (e.g., \cite{marginpaper,bl13,Zhang2021}), we do not formulate strong assumptions on the input distribution. We establish careful trade-offs between the informativeness and the diversity of the queried labels, and rigorously quantify the statistical performance on batch active learning in a noisy pool-based setting. To our knowledge, these are the first guarantees of this kind that apply to a noisy (hence realistic) batch pool-based active learning scenario. See also the related work contained in Section \ref{s:related}.

\subsection{Content and contributions}\label{ss:contributions}
Our contributions can be described as follows.
\begin{enumerate}
    \item We present an efficient algorithm for pool-based batch active learning for noisy linear models (Algorithm \ref{a:batch_al_linear}). This algorithm generates pseudo-labels by computing sequences of linear classifiers that restrict their attention to exponentially small regions of the margin space, and then trains a single model based on the pseudo-labels only. The design inspiring the sampling within each stage is a G-optimal design, computed through a greedy strategy.
    We show (Theorem \ref{thm:main_linear}) that under the standard i.i.d. assumption of the (input, label) pairs, the model so trained enjoys an excess risk bound with respect to the Bayes optimal predictor which is best possible, when expressed in terms of the total number of requested labels. The number of re-training stages (that is, the number of linear classifiers computed to generate pseudo-labels) is at most logarithmic in the pool size, and automatically adapts to the noise level without knowing it in advance.
    \item Since the above algorithm does not operate on a constant batch size $B$, we show in Section \ref{ss:constant_batch_size} an easy adaptation to the constant batch size, and make the observation that $B$ therein may also scale as $T^\beta$, for some exponent $\beta < 1$ that depends on the amount of noise (see comments surrounding Corollary \ref{co:main_linear_B}), still retaining the above-mentioned optimal rates.
    \item We extend in Section \ref{s:logistic} our results to the generalized linear case (specifically, the logistic case), and point out that restricting to exponentially small regions of the margin space is also beneficial for obtaining bounds with a milder dependence on the loss curvature.
    \item Last but not least, 
    our algorithmic technique can be seen as a skeleton technique that can be applied to more general situations, provided the estimators employed at each stage and the diversity measure guiding the design have matching properties. We discuss extensions to the general nonlinear case in Section \ref{s:nonlinear}, and provide the first analysis to date that covers non-linear function spaces in batch active learning for the pool-based scenario (Theorem \ref{thm:main_nonlinear}). 
\end{enumerate}

\section{Preliminaries and Notation}\label{s:prel}
We denote by $\cX$ the input space (e.g., $\cX = \rR^d$), by $\cY$ the output space, and by $\cD$ an unknown distribution over $\cX \times \cY$. The corresponding random variables will be denoted by $\x$ and $y$. We also denote by $\cD_\cX$ the marginal distribution of $\cD$ over $\cX$. 
%
Given a function $h$ (also called a {\em hypothesis} or a {\em model}) mapping $\cX$ to $\cY$, the {\em population loss} (often referred to as {\em risk}) of $h$ is denoted by $\cL(h)$, and defined as $\cL(h) = \rE_{(\x,y) \sim \cD}[loss(h(x), y)]$, where $loss(\cdot, \cdot)\, \colon\,\cY \times \cY \to [0, 1]$ is a given {\em loss} function.
For simplicity of presentation, we restrict ourselves to a binary classification setting with 0-1 loss, so that $\cY =\{-1,+1\}$, and $loss({\hat y}, y) = \ind{{\hat y} \neq y} \in \{0,1\}$, being $\ind{\cdot}$  the indicator function of the predicate at argument. 
When clear from the surrounding context, we will omit subscripts like ``$(\x,y) \sim \cD$"
from probabilities and expectations.

We are given a class of models $\cF =\{f\,:\, \cX \rightarrow [0,1]\}$ and the Bayes optimal predictor $h^*(x) = \sign\left(f^*(x) - 1/2\right)$, where
\[
f^*(\x) = \rP(y=1|\x)
\]
is assumed to belong to class $\cF$
(the so-called {\em realizability} assumption). This assumption is reasonable whenever the model class $\cF$ we operate on is wide enough. For instance, a realizability (or quasi-realizability) assumption seems natural in overparameterized settings implemented by nowdays' Deep Neural Networks.

As a simple example, we consider a generalized linear model
\begin{equation}\label{e:logistic}
f^*(\x)=\sigma(\ang{\w^*,\x})~,
\end{equation}
where $\sigma\,:\,\rR \rightarrow [0,1]$ is a suitable sigmoidal function, e.g., $\sigma(z) = \frac{e^z}{1+e^z}$, $\w^*$ is an unknown vector in $\rR^d$, with bounded (Euclidean) norm $||\w|| \leq R$ for some $R \geq  1$,
and $\ang{\cdot,\cdot}$ denotes the usual inner product in $\rR^d$.

Throughout this paper, we adopt the commonly used low-noise condition on the marginal distribution $\cD_{\cX}$ of \citet{Tsybakov99}:
there are constant $c > 0$, $\epsilon_0\in(0, 1]$ and exponent $\alpha \geq 0$ such that for all $\epsilon \in (0,\epsilon_0]$ 
we have
\begin{equation}\label{e:tsy}
 \rP\bigl(|f^*(\x)-1/2| < \epsilon/2 \bigr) \le c\,\epsilon^\alpha~.
\end{equation}
Notice, in particular, that
$\alpha \rightarrow \infty$ gives the so-called {\em hard margin} condition
\(
 \rP\bigl(|f^*(\x)-1/2| < \epsilon \bigr) =0.
\)
%
while, at the opposite end of the spectrum, exponent $\alpha = 0$ (and $c=1$) corresponds to making {\em no assumptions whatsoever} on $\cD_{\cX}$. For simplicity, we shall assume throughout that the above low-noise condition holds for $c=1$. The noise exponent $\alpha$ and range constant $\epsilon_0$ are typically unknown, and our algorithms will not rely on the prior knowledge of them.

We are given a class of models $\cF$, and a pool $\cP$ of $T$ unlabeled instances $\x_{1}, \ldots, \x_T \in \cX$, drawn i.i.d. according to a marginal distribution $\cD_{\cX}$ obeying condition (\ref{e:tsy}) (with $c=1$). The associated labels $y_1, \ldots, y_T \in \cY$ are such that the pairs $(\x_t,y_t)$, $t = 1,\ldots, T$, are drawn i.i.d. according to $\cD$, the labels being generated according to the conditional distribution determined by some $f^* \in \cF$. The labels are not initially revealed to us, and the goal of the active learning algorithm is to come up at the end of training with a model $\hh\,:\, \cX \rightarrow \cY$ whose {\em excess risk} $\cL(\hh) - \cL(h^*)$ is as small as possible, while querying as few labels as possible in $\cP$. 

The way labels are queried follows the standard batch active learning protocol. We are given a {\em batch size} $B \geq 1$. Label acquisition and learning proceeds in a sequence of {\em stages}, $\ell = 1, 2, \ldots$. At each stage $\ell$, the algorithm is allowed to query $B$-many labels by only relying on labels acquired in the past $\ell-1$ stages.
Notice that each point $\x_t$ in pool $\cP$ can only be queried once, which is somehow equivalent to assuming that the noise in the corresponding label $y_t$ is {\em persistent}. We shall henceforth denote by $N_T(\cP)$ the total number of labels (sometimes referred to as {\em label complexity}) queried by the algorithm at hand on pool $\cP$, and by $N_{T,B}(\cP)$ the same quantity if we want to emphasize the dependence on $B$.

The analysis of our algorithms hinges upon a suitable measure of {\em diversity}, $D(\x,S)$, that quantifies how far off a data point $\x \in \cX$ is from a finite set of points $S \subseteq \cX$. Though many diversity measures may be adopted for practical purposes (e.g., \cite{wei2015submodularity,sener2018active,kvg19,a+20,killamsetty2020glister,kirsch2021simple,cm21}), the one enabling tight theoretical analyses for our algorithms is one that is somehow coupled with the estimators our active learning algorithms rely upon. Specifically, given diversity measure $D(\x,S)$, an estimator $\hf = \hf(S)$ in a fixed design scenarios is coupled with $D(\x,S)$ if we can guarantee $L_{\infty}$ approximation bounds of the form
\begin{equation}\label{e:diversity_bound}
|\hf_S(\x)-f^*(\x)| \leq D(\x,S)\qquad \forall \x~.
\end{equation}
In the case of linear function spaces over $\cX = \rR^d$, the estimators $\hf = \hf(S)$ will essentially be least-squares predictors, and a coupled diversity measure will be spectral-like:
\(
D(\x,S) = \ang{\x,\x}_{A_S^{-1}}^\frac{1}{2} = ||\x||_{A_S^{-1}} = \sqrt{\x^\top A_S^{-1}\x}~,
\)
that is, the Mahalanobis norm of $\x$ w.r.t. the positive semi-definite matrix $A_S^{-1}$, where $A_S = I + \sum_{\z \in S} \z\z^\top$~, being $I$ the $d\times d$ identity matrix.
Notice that $D(\x,S)$ is large when $\x$ is aligned with small eigenvectors of $A_S$, while it is small if $\x$ is aligned with large eigenvectors of that matrix. In particular, $D(\x,S)$ achieves its maximal value $|
|\x||^2$ when $\x$ is {\em orthogonal} to the space spanned by $S$. Hence, $\x$ is ``very different" from $S$ as measured by $D(\x,S)$ if $\x$ contributes a direction of the input space which is not already spanned by $S$. We denote by $|A_S|$ the determinant of matrix $A_S$.

In the more general nonlinear case, our diversity measure is similar in spirit to the {\em eluder dimension} \cite{rvr13}, as well as to the more recent {\em online decoupling coefficient}, as defined in \cite{d+21} -- see Section \ref{s:nonlinear} for details.

At an intuitive level, since the label requests are batched, and model updates are typically performed only at the end of each stage, a batch active learning algorithm is compelled to operate within each stage by trading off the (predicted) informativeness of the selected labels against the diversity of the data points whose labels are requested. Moreover, the larger the batch size $B$ the less adaptive the algorithm is forced to be, hence we expect $B$ to somehow play a role in the performance of the algorithm.

From a practical standpoint, there are indeed two separate notions of adaptivity to consider. One is the number of interactive rounds with the labeling oracle, the other is the number of times we {\em re-train} (or update) a model based on the labels gathered during the interactive rounds.
The two notions {\em need not} coincide. While the former essentially accounts for the cost of interacting with human labelers, the latter is more related to the cost of re-training/updating a (potentially very complex) learning system.

\section{Related work}\label{s:related}
While experimental studies on batch active learning are reported since the early 2000s (see, e.g., \cite{h+06}), it is only with the deployment at scale of Deep Neural Networks that we have seen a general resurgence of interest in active learning, and batch active learning in particular. The batch pool-based model studied here is the one that has spurred the widest attention, as it corresponds to the way in practice labels are gathered in large-scale processing pipelines. This interest has generated a flurry of recent investigations, mainly of experimental nature, yet containing a lot of interesting and diverse approaches to batch active learning. Among these are \cite{gu2012selective,gu2014batch,sener2018active,kvg19,zh19,szgw19,a+20,kpkc20,killamsetty2020glister,kirsch2021simple,gze21,cm21,kothawade2021similar}.

On the theoretical side, active learning is a well-studied sub-field of statistical learning. General references in pool-based active learning include \cite{das04,da05,ha14,no11,td17}, and specific algorithms for half-spaces under classes of input distributions are contained, e.g., in \cite{marginpaper,bl13,Zhang2021}. 
However, none of these papers tackle the practically relevant scenario of {\em batch} active learning. In fact, restricting to theoretical aspects of batch active learning makes the research landscape far less populated. Below we briefly summarize what we think are among the most relevant papers to our work, as directly related to batch active learning, and then mention recent efforts in contiguous fields, like adaptive sampling and subset selection, which may serve as a general reference and inspiration. 

Batch active learning in the pool-based scenario is one of the motivating applications in \cite{ck13}, where the main concern is to investigate general conditions under which a batch greedy policy achieves similar performance as the optimal policy that operates with the same batch size. Yet, the authors consider simple noise free scenarios, while the important observation (Theorem 2 therein) that a batch greedy algorithm is also competitive with respect to an optimal fully sequential policy (batch size one) does not apply to active learning.
\citet{chkk15,chk17} are along similar lines, with the addition of persistent noise, but do not tackle batch active learning problems.

A paper with a similar aim as ours, yet operating in the streaming setting of active learning, is \cite{acdr20}. The authors show that some classes of fully sequential active algorithms can be turned into sequential algorithms that query labels in batches and suffer only an additive (times log factors) overhead in the label complexity. This transformation is essentially obtained by freezing the state of the fully sequential algorithm, but it is unclear whether any notion of diversity over the batch is enforced by the resulting batch algorithms.

Very recent stream-based active learning papers 
that are worth mentioning are \cite{pmlr-v139-katz-samuels21a, camilleri2021selective}). These papers share similar methods and modeling assumptions as ours in leveraging optimal design, but they do not deal with batch active learning. The main concern there is essentially to improve the performance of adaptive sampling by reducing the variance of the estimators.

A learning problem similar to pool-based batch active learning is {\em training subset selection} (sometimes called dataset summarization), whose goal is to come up with a compressed version of a (big) dataset that offers to a given learning algorithm the same inference capabilities as if applied to the original dataset. The problem can be organized in rounds (as in batch active learning) and bridging one to the other can in practice be done by label hallucination/pseudo-labeling.
Representative works include \cite{wei2015submodularity,killamsetty2020glister,b+21}.

\section{The linear case}\label{s:linear}
We start off by considering a simple linear  model of the form $f^*(\x) = \frac{1+\ang{\w^*,\x}}{2}$, where both $\w^*$ and $\x$ lie in the $d$-dimensional Euclidean unit ball (so that $\ang{\w^*,\x} \in [-1,1]$ and $f^*(\x) \in [0,1]$).
Algorithm \ref{a:batch_al_linear} contains in a nutshell the main ideas behind our algorithmic solutions, which is to greedily approximate a G-optimal design in the selection of points at each stage. 
The way it is formulated, Algorithm \ref{a:batch_al_linear} does not operate with a constant batch size $B$ per stage. We will reduce to the constant batch size case in Section \ref{ss:constant_batch_size}.

\begin{algorithm2e}
\SetKwSty{textrm} 
\SetKwFor{For}{{\bf for}}{}{}
\SetKwIF{If}{ElseIf}{Else}{{\bf if}}{}{{\bf else if}}{{\bf else}}{}
\SetKwFor{While}{{\bf while}}{}{}
{\bf Input:} Confidence level $\delta \in (0,1]$, pool of instances $\cP\subseteq\rR^d$ of size $|\cP| = T$\\
{\bf Initialize:} $\cP_0 = \cP$\\
\For{$\ell=1,2,\ldots,$}{
Initialize within stage $\ell$: 
\begin{itemize}
\item $\epsilon_{\ell}=2^{-\ell}/(\sqrt{2\log\frac{2\ell(\ell+1) T}{\delta}}+1)$
\item $A_{\ell,0}=I$,\ \ $t=0$,\ \ $\cQ_\ell=\emptyset$
\end{itemize}
\While{$\cP_{\ell-1}\backslash\cQ_\ell\neq\emptyset$\ and\hspace{-0.05in}  $\max{\x\in\cP_{\ell-1}\backslash\cQ_\ell}\|\x\|_{A_{\ell,t}^{-1}} > \epsilon_\ell$}{
\begin{itemize}
\item $t = t+1$
\item Pick~$\x_{\ell, t}\in\argmax{\x\in\cP_{\ell-1}\setminus \cQ_{\ell}}\|\x\|_{A_{\ell,t-1}^{-1}}$
\item Update~~
\(
A_{\ell,t}=A_{\ell,t-1}+\x_{\ell, t}\x_{\ell, t}^\top
\)
\item $\cQ_\ell = \cQ_\ell\cup\{\x_{\ell,t}\}$
\end{itemize}
%
}
Set $T_\ell = t$, the number of queries made in stage $\ell$\\
\eIf{$\cQ_\ell\neq\emptyset$}
{
\begin{itemize}
\item Query the labels $y_{\ell, 1}, \ldots, y_{\ell, T_\ell}$ associated with
the unlabeled data in $\cQ_{\ell}$, and compute
\[
\w_{\ell}=A_{\ell, T_\ell}^{-1}\sum_{t=1}^{T_{\ell}} y_{\ell, t}\x_{\ell, t}
\]
\item Set~ $\cC_{\ell}=\{\x\in\cP_{\ell-1} \backslash \cQ_\ell: |\ang{\w_{\ell},\x}| > 2^{-\ell}\}$
\item Compute pseudo-labels on each $\x \in \cC_{\ell}$ as $\hy=\sign\ang{\w_{\ell},\x}$
\end{itemize}
}
{
\ \ \ $\w_{\ell}=\bm{0}$,~$\cC_\ell=\emptyset$
}
\ \\
Set 
$\cP_{\ell}=\cP_{\ell-1}\backslash(\cC_{\ell}\cup\cQ_\ell)$\\[2mm]
\If{$d/2^{-\ell+1}>2^{-\ell+1}|\cP_{\ell}|$}{
\begin{itemize}
\item $L=\ell$
\item Exit the for-loop ($L$ is the total number of stages)    
\end{itemize}
}
}
Predict labels in pool $\cP$:
\begin{itemize}
\item Train an SVM classifier $\hw$ on $\cup_{\ell=1}^L\cC_\ell$ via the generated pseudo-labels $\hy$
\item Predict on each $\x \in (\cup_{\ell=1}^L\cQ_\ell) \cup \cP_L$ through $\sign(\ang{\hw,\x})$
\end{itemize}
\caption{Pool-based batch active learning algorithm for linear models.}
\label{a:batch_al_linear}
\end{algorithm2e}

The algorithm takes as input a finite pool of points $\cP$ of size $T$ and proceeds across stages $\ell = 1, 2,\ldots $ by generating at each stage $\ell$ a (linear-threshold) predictor $\sign(\ang{\w_{\ell},\x})$, where $\w_\ell$ is a ridge regression estimator computed only on the labeled pairs $(\x_{\ell,1},y_{\ell,1}), \ldots, (\x_{\ell,T_\ell},y_{\ell,T_\ell})$ collected during that stage. 
These predictors are used to trim the current pool $\cP_{\ell-1}$ by eliminating both the points on which $\w_\ell$ is itself confident (set $\cC_{\ell})$ and those whose labels have just been queried (set $\cQ_{\ell}$).
At each stage $\ell$, the points $\x_{\ell,t}$ to query are selected in a greedy fashion by maximizing
\footnote{
As a matter of fact, the chosen $\x_{\ell,t}$ need not be the maximizer of $D(\x,\cQ_{\ell})$, the analysis only requires $D(\x_{\ell,t},\cQ_{\ell}) > \epsilon_\ell$.
} 
$D(\x,\cQ_{\ell}) = ||\x||_{A^{-1}_{\ell,t - 1}}$ over the current pool $\cP_{\ell-1}$ (excluding the already selected points $\cQ_{\ell}$, which are contained in $A_{\ell,t - 1}$), so as to make $\x_{\ell,t}$ maximally different from $\cQ_{\ell}$. 

When stage $\ell$ terminates, we are guaranteed that we have collected a set of points $\cQ_{\ell}$ such that all remaining points $\x$ in the pool 
satisfy $D(\x,\cQ_{\ell}) \leq \epsilon_\ell$. Threshold $\epsilon_\ell$, defined at the beginning of the stage, is exponentially decaying with $\ell$. 
It is this threshold that determines the actual length of the stage, and rules the elimination of unqueried points from the pool, along with the corresponding generation of pseudo-labels during the stage. 

Algorithm \ref{a:batch_al_linear} stops generating new stages when the size $|\cP_{\ell}|$ of pool $\cP_\ell$ triggers the condition $d/2^{-\ell+1}>2^{-\ell+1}|\cP_{\ell}|$ (which is satisfied, in particular, when $\cP_\ell$ becomes empty). In that case, the current stage $\ell$ becomes the final stage $L$. 

Finally, the algorithm uses the subset of points $\cup_{\ell=1}^L\cC_\ell$ and the associated pseudo-labels $\hy$ generated during the $L$ stages to train a linear classifier $\hw$ (e.g., an SVM) to zero empirical error on that subset. Our analysis (see Appendix \ref{sa:proofs}) shows that with high probability such a consistent linear classifier exists. Each point $\x$ that remains in the pool, that is, each $\x \in (\cup_{\ell=1}^L\cQ_\ell) \cup \cP_L$, is assigned label $\sign(\ang{\hw,\x})$. Notice, in particular, that $\hw$ is not trying to fit the queried labels of $\cup_{\ell=1}^L \cQ_\ell$, but only the pseudo-labels of $\cup_{\ell=1}^L \cC_\ell$.

It is also worth observing how Algorithm \ref{a:batch_al_linear} resolves the trade-off between informativeness and diversity we alluded to in previous sections. 
Once we reach stage $\ell$, what remains in the pool are only the points $\x$ such that $|\ang{\w_{\ell-1},\x}| \leq 2^{-\ell+1}$ (this is because we have eliminated in stage $\ell-1$ all the points in $\cC_{\ell-1}$). 
Hence, the remaining points which the approximate G-optimal design operates with in stage $\ell$ are those which the previous model $\w_{\ell-1}$ is not sufficiently confident on. The algorithm then puts all these low-confident points on the same footing (that is, they are considered equally informative if taken in isolation), and then relies on the approximate G-optimal design scheme to maximize diversity among them. The set-wise diversity measure we end up maximizing is indeed a determinant-like diversity measure. This is easily seen from the fact that
\(
\sum_{t=1}^{T_\ell} ||\x_{\ell,t}||^2_{A^{-1}_{\ell,t - 1}} \approx \log |A_{\ell,T_\ell}|~.
\)

On one hand, this careful selection of points contributes to keeping the variance of estimator $\w_{\ell}$ under control. On the other hand, the fact that we stop accumulating labels when
\(
\max{\x\in\cP_{\ell-1}\backslash\cQ_\ell}\|\x\|_{A_{\ell,T_\ell}^{-1}} \leq \epsilon_\ell
\)
essentially implies that
\(
\sign(\ang{\w_{\ell},\x}) = \sign(\ang{\w^*,\x})
\)
on all points $\x$ we generate pseudo-labels for,
which in turn ensures that these pseudo-labels are consistent with $\w^*$.

Sequential experimental design has become popular, e.g., in the (contextual) bandits literature, see Ch. 22 in \cite{ls20}, and is explicitly contained in recent works on best arm identification (e.g., \cite{NEURIPS2019_8ba6c657,cksj21}). Notice that in those works a design is a distribution over the set of actions (which would correspond to pool $\cP$ in our case), and the algorithm is afforded to sample a given action $\x_t$ {\em multiple times}, obtaining each time a fresh reward value $y_t$ such that $\rE[y_t\,|\,\x_t] = \ang{\w^*,\x_t}$. This is not conceivable in a pool-based active learning scenario where label noise is persistent, and each ``action" $\x_t$ can only be played once. This explains why the design we rely upon here is necessarily more restrained than in those papers.

\subsection{Analysis}\label{ss:analysis_linear}
The following is the main result of this section.\footnote
{
Detailed proofs are deferred to the appendices.
}
\begin{theorem}\label{thm:main_linear}
Let $T \geq d$ and assume that $\|\x\|_2 \leq 1$ for all $\x \in \cP$.
Then with probability at least $1-\delta$ over the random draw of $(\x_1,y_1),\ldots, (\x_T,y_T) \sim \cD$ the excess risk $\cL(\hw) - \cL(\w^*)$, the label complexity $N_T(\cP)$, and the number of stages $L$ generated by Algorithm \ref{a:batch_al_linear} are simultaneously upper bounded as follows:
\begin{align*}
\cL(\hw) - \cL(\w^*)
&\leq
\bar{C}
C(\delta, T, \epsilon_0)
\Biggl(
\mathrm{max}
\left\{
\left(\frac{d}{T}\right)^{\frac{\alpha+1}{\alpha+2}}
,~\frac{d}{T\epsilon_0}
\right\}
+ \frac{\log\left(\frac{\log T}{\delta}\right)}{T}
\Biggl)~,\\
N_T(\cP) 
&\leq \bar{C}
C(\delta, T, \epsilon_0)
\Biggl(
\mathrm{max}
\Biggl\{
d^{\frac{\alpha}{\alpha+2}}T^{\frac{2}{\alpha+2}},~ \frac{d}{\epsilon_0^2}
\Biggl\} + \log^2\left(\frac{\log T}{\delta}\right)
\Biggr)~,\\
L &\leq \bar{C}
\Biggl(
\mathrm{max}\left\{
\frac{\log\left(\frac{T}{d} \right)}{\alpha+2} ,~
\log\left(\frac{4}{\epsilon_0}\right)
\right\}
+ \log\left(\frac{\log T}{\delta}\right)
\Biggr)~,
\end{align*}
for an absolute constant $\bar{C}$ and 
\[
C(\delta, T, \epsilon_0) = \log^2\left(\frac{T}{\delta}\right)
\left(1 + \log^2\left(\frac{1}{\epsilon_0}\right)\right)~.
\]
\end{theorem}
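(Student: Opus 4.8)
The plan is to bound the three quantities --- excess risk, label complexity, and number of stages --- by a single coupled analysis built around the per-stage threshold $\epsilon_\ell = 2^{-\ell}/(\sqrt{2\log\frac{2\ell(\ell+1)T}{\delta}}+1)$ and the two driving facts established by the algorithm's construction. The first fact is the \emph{diversity/approximation bound}: once stage $\ell$ terminates, every remaining point $\x$ satisfies $\|\x\|_{A_{\ell,T_\ell}^{-1}} \le \epsilon_\ell$, which via the coupling (\ref{e:diversity_bound}) should yield $|\ang{\w_\ell,\x} - \ang{\w^*,\x}| \le 2^{-\ell}$ with high probability. I would first make this rigorous: $\w_\ell$ is a ridge estimator on persistent-noise labels, so I would write $\ang{\w_\ell - \w^*, \x} = \x^\top A_{\ell,T_\ell}^{-1}\sum_t (y_{\ell,t} - \ang{\w^*,\x_{\ell,t}})\x_{\ell,t} - \x^\top A_{\ell,T_\ell}^{-1}\w^*$, and control the noise term by a self-normalized martingale / Freedman-type concentration inequality, the bias term by $\|\x\|_{A^{-1}}\|\w^*\|_{A^{-1}} \le \epsilon_\ell$. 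The factor $\sqrt{2\log\frac{2\ell(\ell+1)T}{\delta}}$ in the denominator of $\epsilon_\ell$ is precisely calibrated so that this concentration bound times $\epsilon_\ell$ collapses to $2^{-\ell}$, and the $\ell(\ell+1)$ allows a union bound over all stages. The consequence is that on the confident set $\cC_\ell = \{|\ang{\w_\ell,\x}| > 2^{-\ell}\}$ the pseudo-label $\hy = \sign\ang{\w_\ell,\x}$ agrees with $\sign\ang{\w^*,\x}$, so \emph{all pseudo-labels are correct} and are perfectly separated by $\w^*$; this is what guarantees the final SVM is consistent and that its errors are confined to points never pseudo-labeled.

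The second fact is a \emph{deterministic counting} argument for stage lengths, using the standard elliptical-potential / determinant bound. Because each queried $\x_{\ell,t}$ has $\|\x_{\ell,t}\|_{A_{\ell,t-1}^{-1}} > \epsilon_\ell$, the matrix determinant lemma gives $|A_{\ell,t}| = |A_{\ell,t-1}|(1+\|\x_{\ell,t}\|^2_{A_{\ell,t-1}^{-1}}) > |A_{\ell,t-1}|(1+\epsilon_\ell^2)$, while $|A_{\ell,T_\ell}| \le (1 + T_\ell/d)^d$ by AM-GM on eigenvalues under $\|\x\|\le 1$; combining yields $T_\ell \lesssim (d/\epsilon_\ell^2)\log(1/\epsilon_\ell)$, i.e. roughly $d\,4^\ell \log(\cdots)$ queries in stage $\ell$. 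Summing $T_\ell$ over $\ell$ up to the stopping level $L$ gives the label-complexity bound, which is dominated by the last stage and hence scales like $d\,4^L$.

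The heart of the argument is then to locate the stopping stage $L$ and convert it into the risk and complexity rates via the Tsybakov condition (\ref{e:tsy}). The stopping rule $d/2^{-\ell+1} > 2^{-\ell+1}|\cP_\ell|$ fires when $|\cP_\ell| < d\,4^{\ell-1}$; meanwhile the points surviving into $\cP_\ell$ all have margin $|\ang{\w^*,\x}| \le 2^{-\ell+1}$ (since $\cC_{\ell-1}$ removed the confident ones and pseudo-labels are correct), so by (\ref{e:tsy}) with $\epsilon \sim 2^{-\ell}$ the \emph{expected} fraction surviving is at most $\sim (2^{-\ell})^\alpha$; a binomial/Bernstein concentration over the i.i.d.\ pool turns this into $|\cP_\ell| \lesssim T\,(2^{-\ell})^\alpha + \log(\cdots)$ with high probability (here I must again union-bound over the $O(\log T)$ stages, producing the $\log(\log T/\delta)$ additive terms). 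Setting the survival estimate against the stopping threshold $d\,4^{\ell-1}$ pins down $2^{-L} \sim \max\{(d/T)^{1/(\alpha+2)},\, \sqrt{d/(T\epsilon_0)}\,\}$ up to the calibration, which simultaneously yields $L \lesssim \frac{1}{\alpha+2}\log(T/d)$ and, since the misclassified mass is confined to $\cP_L$ whose excess risk under (\ref{e:tsy}) is of order $(2^{-L})^{\alpha+1}$ by the standard Tsybakov integration $\int_0^{2^{-L}} \epsilon^{\alpha}\,d\epsilon$, the excess-risk rate $(d/T)^{(\alpha+1)/(\alpha+2)}$. The $\epsilon_0$-branch of each $\max$ arises when $2^{-L}$ would fall below $\epsilon_0$, where (\ref{e:tsy}) ceases to improve and one uses the trivial bound instead.

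The main obstacle I anticipate is the clean handoff between the high-probability approximation guarantee and the survival-set concentration, carried out \emph{uniformly across the random, data-dependent number of stages}. The subtlety is that $\w_\ell$, the set $\cC_\ell$, and the pool $\cP_\ell$ are all adaptively determined, so the i.i.d.\ structure of $\cP$ is partially consumed at each stage; I would handle this by conditioning carefully --- fixing the filtration generated by past queries and using that pseudo-labeling decisions depend only on $\w_\ell$ and the (fixed) instances, not on unqueried labels --- and by choosing $\epsilon_\ell$'s logarithmic factor large enough that a single union bound over all $\ell \le L_{\max} = O(\log T)$ controls every concentration event at once. Getting the constants to line up so that the three bounds hold \emph{simultaneously} with the same $1-\delta$, and so that the $C(\delta,T,\epsilon_0)$ prefactor absorbs exactly the accumulated $\log^2$ and $\log^2(1/\epsilon_0)$ factors, is where the bookkeeping will be most delicate.
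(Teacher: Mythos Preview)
Your skeleton matches the paper's proof closely: the per-stage concentration of $\w_\ell$ (the paper's Lemma~A.1, done there via fixed-design sub-Gaussian bounds rather than self-normalized martingales, since conditioned on past stages the $\x_{\ell,t}$ are chosen before any stage-$\ell$ labels are seen), the elliptical-potential bound $T_\ell \lesssim (d/\epsilon_\ell^2)\log(1/\epsilon_\ell)$, and the localization of $L$ via the stopping rule plus Tsybakov are all exactly as in the paper. Two small slips: the $\epsilon_0$-branch of your $2^{-L}$ estimate should simply be $2^{-L}\gtrsim\epsilon_0$ (from $L\le\log_2(1/\epsilon_0)+2$ when $2^{-L+2}>\epsilon_0$), not $\sqrt{d/(T\epsilon_0)}$; and you drop the queried sets $\cQ_\ell$ when you say the misclassified mass is ``confined to $\cP_L$''---each $\x\in\cQ_\ell$ has $|\ang{\w^*,\x}|\le 2^{-\ell+2}$ and contributes $\sum_\ell T_\ell 2^{-\ell}\approx d\,2^L$ to the empirical weighted regret, the same order as $\cP_L$.

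The one genuine gap is the passage from the pool to the population. Everything you establish---pseudo-labels correct on $\cup_\ell\cC_\ell$, errors of $\hw$ confined to $\cup_\ell\cQ_\ell\cup\cP_L$---is a statement about the \emph{sample} $\cP$. Your ``Tsybakov integration $\int_0^{2^{-L}}\epsilon^\alpha\,d\epsilon$'' computes the population weighted mass of the low-margin region, but that bounds $\cL(\hw)-\cL(\w^*)$ only if you already know $\sign\ang{\hw,\cdot}=\sign\ang{\w^*,\cdot}$ on the population high-margin region, which you have not shown. The paper does not take this route: it instead introduces the empirical weighted regret
\[
R_T(\cP)=\sum_{\x\in\cP}\ind{\sign\ang{\hw,\x}\ne\sign\ang{\w^*,\x}}\,|\ang{\w^*,\x}|,
\]
bounds it by the $\cC/\cQ/\cP_L$ decomposition to get $R_T(\cP)\lesssim d\,2^L\cdot\mathrm{polylog}$, and then applies a separate uniform-convergence lemma (symmetrization plus Sauer--Shelah on the class $\x\mapsto\ind{\sign\ang{\w,\x}\ne\sign\ang{\w^*,\x}}|\ang{\w^*,\x}|$, VC-dimension $d$) to transfer this to the population excess risk, picking up the additive $d\log T/T$ term. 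This step is not optional bookkeeping---without it, consistency of $\hw$ on the pool gives you nothing about its behavior on a fresh draw---and it is absent from your plan.
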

\begin{proof}[Proof sketch]
We first derive a high-probability bound on the weighted empirical risk
\[
R_T(\cP) = \sum_{\x\in\cP}  \ind{\sign\ang{\hw,\x}\neq\sign\ang{\w^*,\x}}|\ang{\w^*, \x}|~,
\]
and then turn it into an excess risk bound through a uniform convergence argument. In order to bound $R_T(\cP)$, we partition the points in $\cP$ into the three subsets
\[
\cup_{\ell=1}^L \cC_\ell,\qquad \cup_{\ell=1}^L \cQ_\ell,\qquad \cP_L,
\]
and consider the contribution to $R_T(\cP)$ of each subset separately. 

When $\x \in \cup_{\ell=1}^L \cC_\ell$, we show that the pseudo-labels ${\hy}$ generated by the algorithm are with high probability consistent with those generated by $\w^*$, that is, $\sign\ang{\hw,\x} =\sign\ang{\w^*,\x}$, hence those $\x$ do not contribute to the weighted empirical risk. 

Any $\x \in \cQ_\ell$, is shown to contribute to $R_T(\cP)$ by at most $2^{-\ell}$, thus the overall contribution of $\cup_{\ell=1}^L \cQ_\ell$ can be bounded by $\sum_{\ell=1}^L T_\ell/2^{\ell}$. In turn, by the way points are picked, $T_\ell$ is roughly bounded by $d/\epsilon_\ell^2$, allowing us to conclude that the total contribution of $\cup_{\ell=1}^L \cQ_\ell$ is bounded by 
\[
\sum_{\ell=1}^L 2^{-\ell} d/\epsilon_\ell^2 \approx d/\epsilon_L~.
\]
Next, for $\x \in \cP_L$, we show that (with high probability) it must be $|\ang{\w^*,\x}| \leq 2^{-L}$ which, combined with the stopping condition defining $L$ implies an overall contribution of the same form $d/\epsilon_L$. 

Finally, since $L$ is itself a random variable, we need to devise high probability upper bounds on it. We rely on the low noise assumption (\ref{e:tsy}) to conclude that $L$ is with high probability of the form 
\[
\mathrm{max}\left\{\frac{\log(T/d)}{\alpha+2},~\log\left(\frac{4}{\epsilon_0}\right)\right\}~,
\]
which we replace back into the previous bounds yielding a guarantee of the form 
\[
R_T(\cP) \lesssim \mathrm{max}\left\{d^{\frac{\alpha+1}{\alpha+2}}\,T^{\frac{1}{\alpha+2}},~\frac{d}{\epsilon_0}\right\}~,
\]
hence an excess risk bound of the form
\[
\cL(\hw) - \cL(\w^*) \approx 
\mathrm{max}
\left\{
\left(\frac{d}{T}\right)^{\frac{\alpha+1}{\alpha+2}},\frac{d}{T\epsilon_0}
\right\}~.
\]
The analysis of the label complexity $N_T(\cP) = \sum_{\ell=1}^L T_\ell$ follows a similar pattern, but it does not require uniform convergence.
\end{proof}

\subsection{Constant batch size}\label{ss:constant_batch_size}
We now describe a simple modification to Algorithm \ref{a:batch_al_linear} that makes it work in the constant batch size case. Let us denote by $T_\ell$ the length of stage $\ell$ in Algorithm \ref{a:batch_al_linear}. The modified algorithm simply runs Algorithm \ref{a:batch_al_linear}: If $T_\ell < B$ the modified algorithm relies on model $\w_{\ell}$ generated by Algorithm \ref{a:batch_al_linear} without saturating the budget of $B$ labels at that stage. On the contrary, if $T_{\ell}\geq B$, the modified algorithm splits stage $\ell$ of Algorithm \ref{a:batch_al_linear} into $\lceil T_{\ell}/B\rceil$ stages of size $B$ (except, possibly, for the last one), and then uses the queried set $\cQ_\ell$ generated by Algorithm \ref{a:batch_al_linear} across all those stages. Hence, in this case, the modified algorithm is not exploiting the potential benefit of updating the model every $B$ queried labels. For instance, if $B = 100$ and $T_\ell = |\cQ_\ell| = 240$, the modified algorithm will split this stage into three successive stages of size 100, 100, and 40, respectively, and then rely on the 240 labels queried by Algorithm \ref{a:batch_al_linear} across the three stages. In particular, the update of the model $\w_\ell$, and the associated pseudo-label computation on sets $\cC_\ell$ is only performed at the {\em end} of the third stage.

Notice that the modified algorithm we just described is a legitimate pool-based batch active learning algorithm operating on a constant batch size $B$, and its analysis is a direct consequence of the one in Theorem \ref{thm:main_linear}, after we take care of the possible over-counting that may arise in the reduction. Specifically, observe that the final hypothesis $\hw$ produced by the modified algorithm is the same as the one computed by Algorithm \ref{a:batch_al_linear}, hence the same bound on the excess risk applies. As for label complexity, if we stipulate that a batch algorithm operating on a constant batch size $B$ will be billed $B$ labels at each stage even if it ends up querying less, then the label complexity of the modified algorithm will over-count the number of labels simply due to the rounding off in $\lceil T_{\ell}/B\rceil$. However, at each of the $L$ stages of Algorithm \ref{a:batch_al_linear}, the over-counting is bounded by $B$, so that, overall, the label complexity of the constant batch size variant exceeds the one of Algorithm \ref{a:batch_al_linear} by at most an additive $BL$ term which,
due to the bound on $L$ in Theorem \ref{thm:main_linear}, is of the form $\mathrm{max}\left\{\frac{B}{\alpha+2}\log \left(\frac{T}{d}\right),B\log \left(\frac{1}{\epsilon_0}\right)\right\}$. This is summarized in the following corollary.
\begin{corollary}\label{co:main_linear_B}
With the same assumptions and notation as in Theorem \ref{thm:main_linear}, with probability at least $1-\delta$ over the random draw of $(\x_1,y_1),\ldots, (\x_T,y_T) \sim \cD$ the label complexity $N_{T,B}(\cP)$ achieved by the modified algorithm operating on a batch of size $B$ is bounded as follows:
\begin{align*}
N_{T,B}(\cP)  
&\leq \bar{C}
C(\delta, T, \epsilon_0)
\Biggl(
\mathrm{max}
\Biggl\{
d^{\frac{\alpha}{\alpha+2}}T^{\frac{2}{\alpha+2}},~ \frac{d}{\epsilon_0^2}
\Biggl\} + \log^2\left(\frac{\log T}{\delta}\right)
\Biggr)\\
&\ \ \ + B\bar{C}\,
\Biggl(
\mathrm{max}\left\{
\frac{\log\left(\frac{T}{d} \right)}{\alpha+2} ,~
\log\left(\frac{4}{\epsilon_0}\right)
\right\}
+ \log\left(\frac{\log T}{\delta}\right)
\Biggr)~,
\end{align*}
where $\bar{C}, C(\delta, T, \epsilon_0)$ are the same as in Theorem \ref{thm:main_linear}.
\end{corollary}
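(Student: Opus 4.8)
The plan is to establish the bound as a direct reduction to Theorem \ref{thm:main_linear}, treating the constant-batch-size variant as a mere re-organization of the stages of Algorithm \ref{a:batch_al_linear}. The first observation I would make is that the modified algorithm queries labels on exactly the same set $\cup_{\ell=1}^L \cQ_\ell$, computes the same models $\w_\ell$ and the same pseudo-label sets $\cC_\ell$, and therefore trains the identical classifier $\hw$. Consequently, all the high-probability events analyzed in the proof of Theorem \ref{thm:main_linear} carry over verbatim, and in particular the bound on the number of original stages $L$ continues to hold on the same event of probability at least $1-\delta$. (This invariance is also what lets the excess-risk guarantee transfer unchanged, though only the label complexity is restated here.)

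Next I would turn to the label complexity under the stated billing convention, whereby each size-$B$ stage is charged $B$ labels whether or not they are all used. For a fixed original stage $\ell$ of length $T_\ell$, the reduction produces $\lceil T_\ell/B\rceil$ stages of size $B$ (a single such stage when $0 < T_\ell < B$, and none when $T_\ell = 0$), so the number of labels billed during that stage is $B\lceil T_\ell/B\rceil$. Using the elementary inequality $\lceil x\rceil \leq x+1$ I get $B\lceil T_\ell/B\rceil \leq T_\ell + B$, i.e. the over-counting incurred within original stage $\ell$ is at most $B$. Summing over the $L$ original stages yields
\[
N_{T,B}(\cP) \;=\; \sum_{\ell=1}^L B\left\lceil \frac{T_\ell}{B}\right\rceil \;\leq\; \sum_{\ell=1}^L T_\ell + BL \;=\; N_T(\cP) + BL~.
\]

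To finish, I would substitute the two relevant bounds from Theorem \ref{thm:main_linear}: the bound on $N_T(\cP)$ controls the first summand, reproducing the first line of the claimed inequality, while the bound on $L$ controls the additive $BL$ term, reproducing the second line with its $B\bar{C}(\cdots)$ factor. Since both bounds hold simultaneously on the same probability-$(1-\delta)$ event, the conjunction holds without any union-bound loss, and the constants $\bar{C}$ and $C(\delta,T,\epsilon_0)$ are inherited unchanged.

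There is essentially no hard step here: the entire content of the corollary is the over-counting accounting, and the only point requiring genuine care is to confirm that the reduction alters neither the queried sets, nor the learned models, nor the stage count $L$, so that the Theorem \ref{thm:main_linear} bounds can be reused directly rather than re-derived. I would therefore spend most of this (short) argument verifying that invariance together with the per-stage inequality $B\lceil T_\ell/B\rceil \leq T_\ell + B$.
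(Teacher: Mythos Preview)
Your proposal is correct and follows essentially the same approach as the paper: it observes that the modified algorithm queries the same sets and produces the same models and stage count $L$ as Algorithm~\ref{a:batch_al_linear}, then bounds the billing over-count per original stage by $B$ to obtain $N_{T,B}(\cP)\le N_T(\cP)+BL$, and finally plugs in the bounds on $N_T(\cP)$ and $L$ from Theorem~\ref{thm:main_linear}. Your write-up is in fact slightly more explicit than the paper's, spelling out the $B\lceil T_\ell/B\rceil\le T_\ell+B$ step.
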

A few comments are in order.
\begin{enumerate}
\item An important practical aspect of this modified algorithm (inherited from Algorithm \ref{a:batch_al_linear}) 
is the very mild number of re-trainings required to achieve the claimed performance. Despite the total number of labels can be as large as $T^{\frac{2}{\alpha+2}}$, the number $L$ of times the model is actually re-trained is not $T^{\frac{2}{\alpha+2}}/B$, but only {\em logarithmic} in $T$, irrespective of the noise level $\alpha$ (that is, even when the low-noise assumption (\ref{e:tsy}) is vacuous).
On the other hand, it is also important to observe that the bound on $L$ shrinks as $\alpha$ increases, that is, when the problem becomes easier.
%
Overall, these properties make the algorithm attractive in practical learning scenarios where the re-training time turns out to be the main bottleneck in the data acquisition process, and a learning procedure is needed that automatically adapts the re-training effort to the hardness of the problem.
\item Let us disregard lower order terms and only consider the asymptotic behavior as $T \rightarrow \infty$. Comparing the excess risk bound in Theorem \ref{thm:main_linear} to the label complexity bound in Corollary \ref{co:main_linear_B}, one can see that when $B = O(T^{\frac{2}{\alpha+2}})$ we have with high probability
\[
\cL(\hw) - \cL(\w^*) \approx \frac{1}{(N_{T,B}(\cP))^{\frac{1+\alpha}{2}}}~,
\]
which is the minimax rate one can achieve for VC classes under the low-noise condition (\ref{e:tsy}) with exponent $\alpha$ (e.g., \cite{cn08,ha09,ko10,Dekel2012SS}). 
Hence, in order to achieve high-probability minimax rates, one need not try to make the algorithm {\em more adaptive} by 
having it operate with an even smaller $B$: any $B$ as small as $T^{\frac{2}{\alpha+2}}$ will indeed suffice in our learning scenario. 
%
\item Similar minimax bounds on excess risk against label complexity have been shown in the streaming setting in \cite{Dekel2012SS,w+21}, though their results only hold in the fully sequential case (that is, $B=1$) and only hold  in expectation over the random draw of the data, not with high probability.
\end{enumerate}
The fact that a batch greedy algorithm can be competitive with a fully sequential policy has also been observed in problems which are similar in spirit to active learning, like influence maximization (see, in particular, \cite{ck13}). More recently, in the context of adaptive sequential decision making, \citet{ekm21} have proposed an efficient semi-adaptive policy that performs logarithmically-many rounds of interaction achieving similar performance as the fully sequential policy. This paper improves on the original ideas contained in \cite{gk17}. Yet,
when adapted to active learning, these results turn out to apply to very stylized scenarios that assume lack of noise in the labels, 
and/or disregard the computational aspects associated with maintaining a posterior distribution or a version space (which would be of size $O(T^d)$ in our case). 

\section{The logistic case}\label{s:logistic}
We now discuss how to extend the result of the previous section to the logistic case (the generalized linear model (\ref{e:logistic}) with $\sigma(z) = \frac{e^z}{1+e^z}$).

\begin{algorithm2e}
\SetKwSty{textrm} 
\SetKwFor{For}{{\bf for}}{}{}
\SetKwIF{If}{ElseIf}{Else}{{\bf if}}{}{{\bf else if}}{{\bf else}}{}
\SetKwFor{While}{{\bf while}}{}{}
{\bf Input:} Confidence level $\delta \in (0,1]$, pool of instances $\cP\subseteq\rR^d$ of size $|\cP| = T$, upper bound $R > 0$ on $||\w^*||$ \\
{\bf Initialize:} $\cP_0 = \cP$\\
\For{$\ell=1,2,\ldots,$}{
Initialize within stage $\ell$: 
\begin{itemize}
\item $R_\ell = R\,2^{-\ell}$
\item $\epsilon_{\ell} = R_\ell/\Bigl(16e^{8R_\ell}\sqrt{d\log\frac{2d\ell(\ell+1)}{\delta}}+ 4Re^{4R_\ell}\Bigl)$
\item $A_{\ell,0}=I$,\ $t=0$,\ $\cQ_\ell=\emptyset$
\end{itemize}
\While{$\cP_{\ell-1}\backslash\cQ_\ell\neq\emptyset$\, and  $\max{\x\in\cP_{\ell-1}\backslash\cQ_\ell}\|\x\|_{A_{\ell,t}^{-1}} > \epsilon_\ell$}{
\begin{itemize}
\item $t = t + 1$
\item Pick~$\x_{\ell, t}\in\argmax{\x\in\cP_{\ell-1}\setminus \cQ_{\ell}}\|\x\|_{A_{\ell,t-1}^{-1}}$
\item Update~~
\(
A_{\ell,t}=A_{\ell,t-1}+\x_{\ell, t}\x_{\ell, t}^\top
\)
\item $\cQ_\ell = \cQ_\ell\cup\{\x_{\ell,t}\}$
\end{itemize}
%
}
Set $T_\ell = t$,\ the number of queries made in stage $\ell$\\
\eIf{$\cQ_\ell\neq\emptyset$}
{
\begin{itemize}
\item Query the labels $y_{\ell, 1}, \ldots, y_{\ell, T_\ell}$ associated with
the unlabeled data in $\cQ_{\ell}$ 
\item Compute
$\w_\ell$ as in (\ref{e:w_ell})
%
\item Set~ $\cC_{\ell}=\{\x\in\cP_{\ell-1} \backslash \cQ_\ell: |\ang{\w_{\ell},\x}| > R_{\ell}\}$
\item Compute pseudo-labels on each $\x \in \cC_{\ell}$ as $\hy=\sign\ang{\w_{\ell},\x}$
\end{itemize}
}
{
\ \ \ $\w_{\ell}=\bm{0}$,~$\cC_\ell=\emptyset$
}
\ \\
Set 
$\cP_{\ell}=\cP_{\ell-1}\backslash(\cC_{\ell}\cup\cQ_\ell)$\\[2mm]
\If{$d/(2R_\ell) > 2R_\ell|\cP_{\ell}|$}{
\begin{itemize}
\item $L=\ell$
\item Exit the for-loop ($L$ is the total number of stages)
\end{itemize}
}
}
Predict labels in pool $\cP$:
\begin{itemize}
\item Train an SVM classifier $\hw$ on $\cup_{\ell=1}^L\cC_\ell$ via the generated pseudo-labels $\hy$
\item Predict on each $\x \in (\cup_{\ell=1}^L\cQ_\ell) \cup \cP_L$ through $\sign(\ang{\hw,\x})$
\end{itemize}
\caption{Pool-based batch active learning algorithm for logistic models.}
\label{a:batch_al_logistic}
\end{algorithm2e}

Algorithm \ref{a:batch_al_logistic} is the adaptation
to the logistic case of the algorithm of Section \ref{s:linear}, the main difference being that we now assume the comparison vector $\w^*$ to lie in a Euclidean ball of (known) radius $R$, and compute estimators $\w_\ell$ as regularized logistic regressors:
\begin{align}\label{e:w_ell}
\w_{\ell} = \argmin{\w\,:\,\max{\x \in \cQ_\ell}|\ang{\w, \x}| \leq 2R_{\ell-1}}  \Biggl[&\sum_{t=1}^{T_\ell} 
Loss(y_{\ell,t}\ang{\w, \x_{\ell,t}}) 
 + \frac{1}{8}e^{-4R_\ell}\|\w\|^2 \Biggl]~,
\end{align}
where $Loss(\cdot)$ is the logistic function
\[
Loss(a)=\log(1+e^{-a})~.
\]

One of the main concerns in the logistic case is to investigate how excess risk and label complexity bounds depend on the complexity $R$ of the comparison class. The following is the logistic counterpart to Theorem \ref{thm:main_linear}.
\begin{theorem}\label{thm:main_logistic}
Let $T \geq d$ and assume that $\|\x\|_2 \leq 1$ for all $\x \in \cP$. Then with probability at least $1-\delta$ over the random draw of $(\x_1,y_1),\ldots, (\x_T,y_T) \sim \cD$ the excess risk $\cL(\hw) - \cL(\w^*)$, the label complexity $N_T(\cP)$, and the number of stages $L$ generated by Algorithm \ref{a:batch_al_logistic} are simultaneously upper bounded as follows:
\begin{align*}
\cL(\hw) - \cL(\w^*) 
&=
\bar{C}C_{d,R}(\delta, T, \epsilon_0)
\Biggl(
    \mathrm{max}\left\{
    \left(\frac{d}{T}\right)^{\frac{\alpha+1}{\alpha+2}},~ \frac{d}{T\epsilon_0}
    \right\}+ 
    \frac{
    \log\left(\frac{\log T}{\delta}\right) + de^{8R}\lceil\log_2 R\rceil
    }{T} 
\Biggr)~, \\
N_T(\cP)
&=
    \bar{C}C_{d,R}(\delta, T, \epsilon_0)
    \Biggl(
    \mathrm{max}
    \left\{
    d^\frac{\alpha}{
    \alpha+2}T^{\frac{2}{\alpha+2}},~
    \frac{d}{\epsilon_0^2}
    \right\} +   \frac{
    \log^2\left(\frac{\log T}{\delta}\right) + de^{8R}\lceil\log_2 R\rceil
    }{T} 
    \Biggr)~,\\
L &\leq \bar{C}\Biggl(\mathrm{max}\Biggl\{
\frac{\log\left(\frac{T}{d} \right)}{\alpha+2},~\log\left(\frac{4}{\epsilon_0}\right)
\Biggl\}
+ \log\left(\frac{R\log T}{\delta}\right)
\Biggr)~,
\end{align*}
where $\bar{C}$ is an absolute constant and 
\begin{align*}
C_{d,R}(\delta, T, \epsilon_0) = \left(1+\log^2\left(\frac{1}{\epsilon_0}\right)\right)\left(
    d\log\left(\frac{T}{\delta}\right) + R^2
    \right)
    \left(
    R + \log\left(\frac{T}{\delta}\right)
    \right)~.
\end{align*}
\end{theorem}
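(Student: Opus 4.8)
The plan is to follow the same three-part decomposition of the weighted empirical risk
$R_T(\cP) = \sum_{\x\in\cP}\ind{\sign\ang{\hw,\x}\neq\sign\ang{\w^*,\x}}|\ang{\w^*,\x}|$
used for Theorem \ref{thm:main_linear}, partitioning $\cP$ into $\cup_{\ell=1}^L\cC_\ell$, $\cup_{\ell=1}^L\cQ_\ell$, and $\cP_L$, and then to convert the resulting bound into an excess-risk bound by the same uniform-convergence argument. The three contributions are controlled just as before, once the margin thresholds $2^{-\ell}$ are replaced by $R_\ell = R\,2^{-\ell}$: points in $\cQ_\ell$ survive from $\cP_{\ell-1}$ and hence satisfy $|\ang{\w^*,\x}|\lesssim R_{\ell-1}$, so $\cup_\ell\cQ_\ell$ contributes at most $\sum_\ell T_\ell R_{\ell-1}$ with $T_\ell$ bounded via the elliptical-potential (determinant) argument; points in $\cP_L$ satisfy $|\ang{\w^*,\x}|\le R_L$, which together with the stopping rule $d/(2R_L)>2R_L|\cP_L|$ bounds their contribution by $\approx d/R_L$; and points in $\cup_\ell\cC_\ell$ carry pseudo-labels shown to be consistent with $\sign\ang{\w^*,\cdot}$, hence they do not contribute. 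The low-noise condition (\ref{e:tsy}) is then used, exactly as in the linear case, to obtain the high-probability bound on $L$, which is substituted back to yield the claimed rate; the additional $\log R$ inside the bound on $L$ traces back to the warm-up stages and to the $\ell$-dependent confidence parameters appearing in the definition of $\epsilon_\ell$.

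The genuinely new ingredient, and the heart of the argument, is the confidence analysis of the regularized, constrained logistic estimator $\w_\ell$ of (\ref{e:w_ell}). I would first show that on the queried points $\cQ_\ell\subseteq\cP_{\ell-1}$ both $|\ang{\w^*,\x}|$ and, through the constraint $\max_{\x\in\cQ_\ell}|\ang{\w,\x}|\le 2R_{\ell-1}$, $|\ang{\w_\ell,\x}|$ are of order $R_\ell$, so that the curvature of the logistic loss, $\sigma'(z)=\sigma(z)(1-\sigma(z))$, is bounded below by a quantity of order $e^{-4R_\ell}$ along the whole segment joining $\ang{\w_\ell,\x}$ and $\ang{\w^*,\x}$. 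Combining this local strong convexity with the ridge term $\tfrac18 e^{-4R_\ell}\|\w\|^2$ and a self-normalized martingale (Freedman-type) bound on the score $\sum_t\xi_t\x_{\ell,t}$, where $\xi_t=y_{\ell,t}-\sigma(\ang{\w^*,\x_{\ell,t}})$ is the centered label noise, yields a confidence ellipsoid of the form $\|\w_\ell-\w^*\|_{A_{\ell,T_\ell}}\lesssim e^{8R_\ell}\sqrt{d\log(\cdot)}+R\,e^{4R_\ell}$. The threshold $\epsilon_\ell$ is engineered precisely so that $\|\x\|_{A_{\ell,T_\ell}^{-1}}\le\epsilon_\ell$ forces $|\ang{\w_\ell-\w^*,\x}|\le\|\w_\ell-\w^*\|_{A_{\ell,T_\ell}}\,\|\x\|_{A_{\ell,T_\ell}^{-1}}<R_\ell$, which is exactly the inequality needed for pseudo-label consistency on $\cC_\ell=\{\x:|\ang{\w_\ell,\x}|>R_\ell\}$, playing here the role that the $L_\infty$ bound (\ref{e:diversity_bound}) played in the linear case.

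I expect the curvature control to be the main obstacle, for two reasons. First, the lower bound on $\sigma'$ degrades exponentially in the scale of the predictions, so one must verify carefully that the constraint in (\ref{e:w_ell}) together with the membership $\cQ_\ell\subseteq\cP_{\ell-1}$ keeps all relevant inner products of order $R_\ell$ rather than of order $R$; this is exactly where restricting the design to the exponentially shrinking margin region $|\ang{\w_{\ell-1},\cdot}|\le R_{\ell-1}$ pays off, replacing a uniform $e^{8R}$ penalty by the per-stage factor $e^{8R_\ell}$. Second, the stages with $R_\ell\ge 1$ (there are $\lceil\log_2 R\rceil$ of them) must be handled separately: there the curvature factors are genuinely large, and their cumulative label cost is what produces the additive $d\,e^{8R}\lceil\log_2 R\rceil$ term, whereas for the stages with $R_\ell<1$ one has $e^{8R_\ell}=O(1)$ and the analysis collapses onto the linear one, recovering the same minimax rate while absorbing the remaining polynomial-in-$R$ factors into $C_{d,R}(\delta,T,\epsilon_0)$. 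A secondary technical point is to control the self-normalized noise term uniformly over the random, data-dependent stopping time $T_\ell$ and over all stages $\ell$ at once, which is what the $d\,\ell(\ell+1)$ factor inside the logarithm in the definition of $\epsilon_\ell$ is there to pay for through a union bound.
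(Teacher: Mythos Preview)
Your proposal is correct and matches the paper's proof essentially step for step: the same three-part decomposition of the weighted empirical risk, the same curvature lower bound $\sigma'(\cdot)\gtrsim e^{-4R_\ell}$ obtained from the constraint $\max_{\x\in\cQ_\ell}|\ang{\w,\x}|\le 2R_{\ell-1}$ together with $\cQ_\ell\subseteq\cP_{\ell-1}$, the same case split $R_\ell>1$ versus $R_\ell\le 1$ when summing $\sum_\ell e^{cR_\ell}/R_\ell^k$ (which is exactly what produces the additive $d\,e^{8R}\lceil\log_2 R\rceil$), and the same use of the low-noise condition to bound $L$.

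One small simplification you are overcomplicating: within each stage the design $\cQ_\ell=\{\x_{\ell,1},\ldots,\x_{\ell,T_\ell}\}$ is selected \emph{before} any label in that stage is revealed (the while-loop depends only on $\|\x\|_{A^{-1}}$), so conditioned on the previous stages you are in a fixed-design setting and $T_\ell$ is not a data-dependent stopping time in the sense you worry about. The paper therefore controls the score $\g(\w^*)=-\tfrac12\sum_t\xi_t\x_{\ell,t}$ by a direct sub-Gaussian bound on $\|A_\ell^{-1/2}\g(\w^*)\|_2^2$ (a union bound over $d$ coordinates, hence the factor $d$ inside the logarithm of $\epsilon_\ell$), rather than a self-normalized martingale/Freedman argument; the $\ell(\ell+1)$ factor is just a union bound over stages. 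Also note that the confidence statement is naturally phrased in the $H_\ell$-norm (the local Hessian, $H_\ell\succeq\tfrac14 e^{-4R_\ell}A_\ell$), and both $\|\x\|_{H_\ell^{-1}}$ and $\|\g(\w^*)\|_{H_\ell^{-1}}$ pick up a factor $e^{2R_\ell}$ when passing to the $A_\ell$-norm---this is how the $e^{8R_\ell}$ arises.
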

In the above bounds, the complexity term $R$ is meant to be a constant.
Notice that the dependence on $e^R$ is common to many logistic bounds, specifically in the bandits literature. This is due to the nonlinear shape of $\sigma(\cdot)$ (see, e.g., \cite{f+10,DBLP:journals/corr/abs-1207-0166,li2017provably,DBLP:journals/corr/abs-2002-07530}, where it takes the form of an upper bound on $1/\sigma'(\cdot)$). In fact, a closer look at the multiplicative dependence on $e^R$ above reveals that this factor multiplies only {\em logarithmic} terms in $T$. This is akin to the more refined self-concordant analysis of logistic models contained in \cite{DBLP:journals/corr/abs-2002-07530}. 
Since our algorithm is focusing attention to exponentially shrinking regions of margin values $\ang{\w^*,\x}$ around the origin, we have obtained here similar guarantees without resorting to a self-concordant analysis.

A constant batch size version of Algorithm \ref{a:batch_al_logistic} can also be devised, and the associated properties spelled out. The details are very similar to those in Section \ref{ss:constant_batch_size}, and are therefore omitted.

\section{The nonlinear case}\label{s:nonlinear}
Consider now the more general non-linear scenario described in Section \ref{s:prel}.

Given an arbitrary sequence of points $S = \langle \x_1, \ldots, \x_T\rangle$ (not necessarily i.i.d. and possibly with repeated items), denote by $P(\cdot)$ a permutations over the indices $\{1,\ldots, T\}$. We define the dimension $\Dim(\cF,S)$ of the class of functions $\cF$ projected onto $S$ as
\[
\Dim(\cF,S) =\sup_{P}\sum_{t=1}^T D\Bigl(\x_{P(t)}; \langle \x_{P(1)},\ldots,\x_{P(t-1)}\rangle\Bigl),
\]
where
\[
D^2(\x, \langle \x_1,\ldots,\x_{t-1}\rangle) = \sup_{f,g\in \cF} \frac{(f(\x) - g(\x))^2}{\sum_{i=1}^{t-1} (f(\x_i) - g(\x_i))^2 + 1}~.
\]
The function $D(\x, \langle \x_1,\ldots,\x_{t-1}\rangle)$ defined above is our ($\cF$-dependent) measure of diversity of $\x$ from $\x_1, \ldots, \x_{t-1}$, and will be used to select diverse points $\x$ to query in each batch. The reader familiar with the literature in non-linear contextual bandits will recognize $\Dim(\cF,S)$ as a close relative to the so-called {\em eluder dimension} \cite{rvr13}, as well as to the more recent {\em online decoupling coefficient}, as defined in \cite{d+21}. We will give below notable examples where $\Dim(\cF,S) $ is suitably bounded. For now, what is important to keep in mind is that our active learning analyses will be non-vacuous whenever
\[
\Dim_T(\cF) = \max{S\,:\,|S|= T} \Dim(\cF,S)
\]
is {\em sub-linear} in $T$, e.g., $\Dim_T(\cF) = O(\log T)$ or $\Dim_T(\cF) = O(T^{\beta})$, for some $\beta \in [0,1)$.

The algorithm for the non-linear case is given as Algorithm \ref{a:batch_al_nonlinear}. The pseudocode is very similar to the one in Algorithm \ref{a:batch_al_linear}, where we replace the diversity measure $\|\x\|_{A_{\ell,t-1}^{-1}}$ by $D(\x,\cQ_\ell)$, the margin $\ang{\w_{\ell},\x}$ by ${\widehat f}_\ell(\x)-1/2$, and define the confidence levels through the quantity $K_T(\delta, \ell, \gamma)$ satisfying
\begin{align*}
K_T^2(\delta, \ell, \gamma) = 8 \log \frac{2\ell(\ell+1)\,N_\infty(\cF,  \cQ_\ell, \gamma)}{\delta} + 1 
+ 
2\gamma T \left(8 + \sqrt{8\log \frac{8\ell(\ell+1)T^2}{\delta}} \right)~,
\end{align*}
where $N_\infty(\cF,  \cQ_\ell, \gamma)$ is the convering number at level $\gamma > 0$ of $\cF$ w.r.t. the infinity norm over $\cQ_\ell$. Also, recall that the {\em VC-subgraph dimension} (or Pollard's {\em pseudo-dimension}) of $\cF$ is the VC-dimension of the 0/1-valued class of functions
\[
\left\{ (\x,\theta) \rightarrow \ind{f(\x) \leq \theta}\,:\, f \in \cF, \theta \in [0,1] \right\}~.
\]

\subsection{Analysis}\label{ss:analysis_nonlinear}
We have the following guarantees.
\begin{theorem}\label{thm:main_nonlinear}
Let $V$ be the VC-subgraph dimension of $\cF$, and
$T \geq \mathrm{max}\{\Dim(\cF, \cP), V\}$. 
Then with probability at least $1-\delta$ over the random draw of $(\x_1,y_1),\ldots, (\x_T,y_T) \sim \cD$ the excess risk $\cL(\widehat{f}) - \cL(f_\star)$, the label complexity $N_T(\cP)$, and the number of stages $L$ generated by Algorithm \ref{a:batch_al_nonlinear} are simultaneously upper bounded as follows:
\begin{align*}
\cL(\hf) - \cL(f_\star)
&\leq
\bar{C}
C(\delta, T, \epsilon_0)
\Biggl(
\mathrm{max}
\left\{
\left(\frac{\Dim(\cF, \cP)}{T}\right)^{\frac{\alpha+1}{\alpha+2}}
,~\frac{\Dim(\cF, \cP)}{T\epsilon_0}
\right\}\Biggl)+
\bar{C}\left(\frac{\log\left(\frac{\log T}{\delta}\right) + V \log T}{T}\right)~,\\
N_T(\cP)
&\leq \bar{C}
C(\delta, T, \epsilon_0)
\Biggl(
\mathrm{max}
\Biggl\{
\Dim(\cF, \cP)^{\frac{\alpha}{\alpha+2}}T^{\frac{2}{\alpha+2}},~ \frac{\Dim(\cF, \cP)}{\epsilon_0^2}
\Biggl\} + \log^2\left(\frac{\log T}{\delta}\right)
\Biggr)~,\\
L &\leq \bar{C}\,
\Biggl(
\mathrm{max}\left\{
\frac{\log\left(\frac{T}{\Dim(\cF, \cP)} \right)}{\alpha+2},~
\log\left(\frac{1}{\epsilon_0}\right)
\right\}  + \log\left(\frac{\log T}{\delta}\right)
\Biggr)~,
\end{align*}
for an absolute constant $\bar{C}$ and 
\[
C(\delta, T, \epsilon_0) = V\log^2\left(
\frac{T}{\delta}\right)
\left(1 + \log^2\left(\frac{1}{\epsilon_0}\right)\right)~.
\]
\end{theorem}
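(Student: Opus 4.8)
The plan is to follow the same architecture as the proof of Theorem~\ref{thm:main_linear}, replacing the spectral objects of the linear case by their $\cF$-dependent counterparts. First I would establish a uniform confidence bound of the form
\[
|\hf_\ell(\x) - f^*(\x)| \leq K_T(\delta,\ell,\gamma)\, D(\x,\cQ_\ell)
\qquad \text{for all surviving } \x \text{ and all stages } \ell,
\]
which is the nonlinear analogue of the coupling property~(\ref{e:diversity_bound}). Granting this, the remainder of the argument partitions $\cP$ into $\cup_{\ell=1}^L\cC_\ell$, $\cup_{\ell=1}^L\cQ_\ell$, and $\cP_L$, bounds the weighted empirical risk
\(
R_T(\cP)=\sum_{\x\in\cP}\ind{\hf(\x)\neq h^*(\x)}\,|f^*(\x)-1/2|
\)
(with $h^*=\sign(f^*-1/2)$) on each piece, controls the random number of stages $L$ via the low-noise condition~(\ref{e:tsy}), and finally converts $R_T(\cP)$ into an excess-risk bound through a uniform-convergence step.

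The confidence bound is the main obstacle, and the only place where the structure of $\cF$ genuinely enters. Taking $f=\hf_\ell$ and $g=f^*$ in the definition of $D$ gives
\[
\bigl(\hf_\ell(\x)-f^*(\x)\bigr)^2 \leq D^2(\x,\cQ_\ell)\Bigl(\textstyle\sum_{t=1}^{T_\ell}\bigl(\hf_\ell(\x_{\ell,t})-f^*(\x_{\ell,t})\bigr)^2 + 1\Bigr),
\]
so it suffices to bound the \emph{in-sample} squared error $\sum_t(\hf_\ell(\x_{\ell,t})-f^*(\x_{\ell,t}))^2$ by (essentially) $K_T^2(\delta,\ell,\gamma)$ with high probability. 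Under realizability this is a standard fast-rate guarantee for the regression estimator: a Bernstein/martingale control of the label noise along the adaptively chosen $\x_{\ell,t}$, made uniform over $\cF$ through a net at scale $\gamma$ (whence the covering number $N_\infty(\cF,\cQ_\ell,\gamma)$ in the definition of $K_T$), yields exactly the three contributions visible in $K_T^2$: the $\log N_\infty$ term, the $+1$ from the normalization in $D$, and the $\gamma T$ discretization residual. Choosing $\gamma$ of order $1/T$ and invoking Haussler's bound $\log N_\infty(\cF,\cQ_\ell,\gamma)\lesssim V\log(T/\gamma)$ in terms of the VC-subgraph dimension collapses $K_T^2$ to $O\!\left(V\log(T/\delta)\right)$; this is the source of the factor $V$ and of one logarithmic power in $C(\delta,T,\epsilon_0)$, the remaining logarithmic factor and the $\log^2(1/\epsilon_0)$ term arising, as in the linear case, from summing per-stage contributions over the $L=O(\log T)$ stages.

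Given the confidence bound, the three pieces are handled exactly as in the linear proof. On $\cup_\ell\cC_\ell$ the pseudo-labels are sign-consistent with $f^*$: the margin threshold defining $\cC_\ell$ is chosen to exceed $K_T\epsilon_\ell$, while every surviving $\x$ has $D(\x,\cQ_\ell)\leq\epsilon_\ell$ at stage end, so $|\hf_\ell(\x)-f^*(\x)|\leq K_T\epsilon_\ell$ cannot flip the sign; hence these points contribute nothing to $R_T(\cP)$. For $\cup_\ell\cQ_\ell$, each queried $\x$ survived stage $\ell-1$, so $|f^*(\x)-1/2|$ is of the order of the stage-$(\ell-1)$ threshold, and summing over the stage uses the key counting bound
\[
T_\ell\,\epsilon_\ell^2 \;<\; \sum_{t=1}^{T_\ell} D^2(\x_{\ell,t},\cQ_\ell)\;\leq\;\sum_{t=1}^{T_\ell} D(\x_{\ell,t},\cQ_\ell)\;\leq\;\Dim(\cF,\cP),
\]
where $D\leq 1$ lets one pass from $D^2$ to $D$ and the definition of $\Dim$ as a supremum over permutations bounds the sum. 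This yields $T_\ell\lesssim \Dim(\cF,\cP)/\epsilon_\ell^2$, the exact analogue of $T_\ell\lesssim d/\epsilon_\ell^2$, with $\Dim(\cF,\cP)$ now playing the role of $d$. For $\cP_L$ one shows $|f^*(\x)-1/2|\lesssim 2^{-L}$ and combines this with the stopping condition defining $L$ to bound $|\cP_L|$.

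Finally, the low-noise condition~(\ref{e:tsy}) shows that the pool shrinks fast enough that the stopping test triggers at a stage $L$ of the claimed order $\mathrm{max}\{\log(T/\Dim(\cF,\cP))/(\alpha+2),\,\log(1/\epsilon_0)\}$ (up to the $\log(\log T/\delta)$ high-probability slack); substituting this back turns the $\Dim(\cF,\cP)/\epsilon_L^2$-type sums into the stated label-complexity and weighted-risk bounds. The closing step converts $R_T(\cP)$ into the excess risk $\cL(\hf)-\cL(f^*)$ by uniform convergence over the induced threshold class, whose complexity is governed by the VC-subgraph dimension $V$ of $\cF$; this produces the additive $V\log T/T$ term and the extra factor $V$ inside $C(\delta,T,\epsilon_0)$. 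I expect the confidence-bound step to be by far the most delicate, since it must simultaneously absorb the adaptivity of the greedy design (martingale dependence), the infinite cardinality of $\cF$ (covering numbers), and the transfer from in-sample to out-of-sample error encoded by the definition of $D$.
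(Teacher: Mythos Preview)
Your proposal is correct and follows essentially the same architecture as the paper's own proof in Appendix~\ref{sa:nonlinear}: the in-sample squared-error bound for $\hf_\ell$ (the paper imports it directly as Proposition~2 of \cite{rvr13}, which is exactly the Bernstein/covering argument you describe), the $D^2$-decomposition giving the confidence event $\ep_\ell$, the three-piece partition of $\cP$, the stage-length bound $T_\ell\leq \Dim(\cF,\cP)/\epsilon_\ell^2$, the low-noise control of $L$, and the final uniform-convergence step via the VC-subgraph dimension all match. One small point: within each stage the design is actually \emph{fixed} (labels are queried only after the greedy while-loop terminates), so the ``martingale dependence'' you flag as the main subtlety is not really present; the Russo--Van Roy bound is general enough to cover this case, and in fact simplifies here.
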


\begin{algorithm2e}
\SetKwSty{textrm} 
\SetKwFor{For}{{\bf for}}{}{}
\SetKwIF{If}{ElseIf}{Else}{{\bf if}}{}{{\bf else if}}{{\bf else}}{}
\SetKwFor{While}{{\bf while}}{}{}
{\bf Input:} Confidence level $\delta \in (0,1]$, pool of instances $\cP\subseteq\rR^d$ of size $|\cP| = T$\\
{\bf Initialize:} $\cP_0 = \cP$\\
\For{$\ell=1,2,\ldots,$}{
Initialize within stage $\ell$: 
\begin{itemize}
\item $\epsilon_{\ell}=2^{-\ell}/K_T(\delta, \ell, \frac{1}{T})$ 
\item $t=0$,\ \ $\cQ_\ell=\emptyset$
\end{itemize}
\While{$\cP_{\ell-1}\backslash\cQ_\ell\neq\emptyset$\ and  $\max{\x\in\cP_{\ell-1}\backslash\cQ_\ell}  D(\x, \cQ_\ell) > \epsilon_\ell$}{
\begin{itemize}
\item $t = t+1$
\item Pick~$\x_{\ell, t}\in\argmax{\x\in\cP_{\ell-1}\setminus \cQ_{\ell}}D(\x, \cQ_\ell)$
\item Update~~
$\cQ_\ell = \cQ_\ell\cup\{\x_{\ell,t}\}$
\end{itemize}
%
}
Set $T_\ell = t$, the number of queries made in stage $\ell$\\
\eIf{$\cQ_\ell\neq\emptyset$}
{
\begin{itemize}
\item Query the labels $y_{\ell, 1}, \ldots, y_{\ell, T_\ell}$ associated with
the unlabeled data in $\cQ_{\ell}$, and compute
\[
{\widehat f}_{\ell} = \argmin{f\in \cF} \sum_{t=1}^{T_\ell} \left(\frac{1+y_{\ell,t}}{2}- f(\x_{\ell,t})\right)^2
\]
\item Set~ $\cC_{\ell}=\{\x\in\cP_{\ell-1} \backslash \cQ_\ell: |{\widehat f}_\ell(\x)-1/2| > 2^{-\ell}\}$
\item Compute pseudo-labels on each $\x \in \cC_{\ell}$ as $\hy=\sign({\widehat f}_\ell(\x)-1/2)$
\end{itemize}
}
{
\ \ \ ${\widehat f}_{\ell}=1/2$ (random guess),~$\cC_\ell=\emptyset$
}
\ \\
Set 
$\cP_{\ell}=\cP_{\ell-1}\backslash(\cC_{\ell}\cup\cQ_\ell)$\\[2mm]
\If{$\Dim(\cF, \cP)/2^{-\ell+1}>2^{-\ell+1}|\cP_{\ell}|$}{
\begin{itemize}
\item $L=\ell$
\item Exit the for-loop ($L$ is the total number of stages)    
\end{itemize}
}
}
Predict labels in pool $\cP$:
\begin{itemize}
\item Train a classifier ${\widehat f}$ on $\cup_{\ell=1}^L\cC_\ell$ via the generated pseudo-labels $\hy$
\item Predict on each $\x \in (\cup_{\ell=1}^L\cQ_\ell) \cup \cP_L$ through $\sign({\widehat f}(\x)-1/2)$
\end{itemize}
\caption{Pool-based batch active learning algorithm for general non-linear models.}
\label{a:batch_al_nonlinear}
\end{algorithm2e}

\begin{remark}
For the sake of illustration, consider the case where $\Dim(\cF, \cP) \leq \Dim_T(\cF) = O(T^\beta)$, for some $\beta \in [0,1)$. In this case, one can easily see that with high probability (and disregarding constants and log factors)
\[
\cL(\hf) - \cL(f_\star) \approx \frac{1}{(N_{T}(\cP))^{\frac{(1-\beta)(1+\alpha)}{2+\beta\alpha}}}~,
\]
which are similar to the bounds obtained in \cite{w+21} in the streaming setting under assumptions that are similar in spirit.
\end{remark}
\begin{remark}
One may be wondering whether it is generally possible to bound $\Dim_T(\cF)$ in relevant cases. The only relevant cases we are currently aware of are: (i) the linear and generalized linear cases we considered in Sections \ref{s:linear} and \ref{s:logistic}, where it is easy to show that $\Dim_T(\cF) = O(d\log T)$;
(ii) The case where $\cF$ is made up of bounded functions in a RKHS. In this case,
\(
\Dim(\cF,S) = \log |I+ G(S)|~,
\)
and
\(
\Dim_T(\cF) =
\max{S\,:\,|S|= T}\log |I+ G(S)|~,
\)
where $G(S)$ is the kernel Gram matrix build out of the data in pool $S$. Hence, $\Dim_T(\cF)$ depends on the decay of the eigenvalues of the underlying kernel function. In this case, since $\cF$ may not be a VC-class, we also need to resort to known results for bounding covering numbers of function classes within a RKHS (e.g., \cite{z02}).
\end{remark}
\begin{remark}
Again, similar comments on the batch size $B$ as in the linear case can also be made for the non-linear case considered in this section.
\end{remark}
\begin{remark}
Slightly improved notions of dimension can also be considered here, which are more refined versions of $\Dim(\cF,S)$. For instance, in defining the diversity measure
$
D(\x, \langle \x_1,\ldots,\x_{t-1}\rangle)$
used by the algorithm in stage $\ell$, one can replace the sup over $f,g \in \cF$ by the sup over $f \in \cF$ only, where $g$ is replaced by $\hf_{\ell-1}$. Moreover, $f$ need not range over all $\cF$, but only over the subset of functions in $\cF$ that are empirically close to $\hf_{\ell-1}$. Consistent with this change, the associated analysis will replace $\hf_{\ell-1}$ in $\Dim(\cF,S)$ with $f_\star$. We omit all these details.
\end{remark}
%

\section{Conclusions and ongoing research}\label{s:conclusions}
We have described and analyzed novel batch active learning algorithms in the pool-based setting that achieve minimax rates of convergence of their excess risk as a function of the number of queried labels. The minimax nature of our results is retained also when the batch size $B$ is allowed to scale polynomially ($B \leq T^\beta$, for $\beta \leq 1$) with the size $T$ of the training set, the allowed exponent $\beta$ depending on the actual level of noise in the data. The algorithms have a number of re-training rounds which is at worst logarithmic, and is able to automatically adapt to the noise level. 

Our algorithms generate pseudo-labels by restricting to exponentially small regions of the margin space. In the logistic case, this has the side benefit of delivering performance bounds where the classical exponential dependence on the complexity of the comparator $\w^*$ occurs as a multiplicative factor only in logarithmic terms.

The logistic algorithm we presented in Section \ref{s:logistic} has a suboptimal dependence on the input dimension $d$ (notice the extra factor $d$ contained in $C_1$ in the excess risk bound of Theorem \ref{thm:main_logistic}), and we are currently trying to see if it is possible to achieve the same result as in the linear case. For the logistic case, a more computationally efficient algorithm actually exists that is based on the online Newton step-like analysis in \cite{DBLP:journals/corr/abs-1207-0166}. Yet, this algorithm will have a similar suboptimal dependence on $d$.

Related to the above, we are currently investigating to what extent it is possible to improve the logistic analysis so as to turn the constrained minimization problem therein into an unconstrained one. Analyses we are aware of in the contiguous field of contextual bandits in generalized linear scenarios (e.g., \cite{li2017provably}) do not seem to help, given the strong assumptions on the context distribution they formulate to achieve the optimal dependence on $d$.

We have then extended our results to the general non-linear case through a notion of dimension that takes the adaptive nature of active learning sampling into account. The resulting algorithm is a simple generalization of the one for the linear case, while the corresponding analysis is an adaptation of the linear case analysis, where we replace the greedy version of G-optimal design with a design guided by the diversity measure induced by the function space at hand.
Related to the non-linear analysis, we are currently trying to see if it is possible to replace $\Dim_T(\cF)$ by some substantially smaller notion of dimension, in both the algorithm and the analysis.

\newpage

\bibliography{paper}
\bibliographystyle{apalike}


\newpage

\appendix

\onecolumn

\section{Proofs for Section \ref{s:linear}}\label{sa:proofs}

Consider Algorithm \ref{a:batch_al_linear}, and denote by $T_\ell$ the length of stage $\ell$.

We denote for any $\epsilon>0$,
\begin{align*}
        \cT_\epsilon=\{\x\in\cP\,:\,|\ang{\w^*,\x}|\leq \epsilon\}~.
\end{align*}
Recall that in Algorithm \ref{a:batch_al_linear} variable $L$ counts the total number of stages (a random quantity), while the size of the original pool $|\cP|$ is denoted by $T$.

We first show that on the confident sets, that is, on sets $\cC_{\ell}$ where pseudo-labels are generated, the learner has with high probability no regret. Before giving our key lemma, it will be useful to define the events
\[
\ep_\ell=\left\{\max{\x\in\cP_{\ell-1}\setminus\cQ_\ell} |\ang{\w_{\ell}-\w^*, \x}|\leq 2^{-\ell}\right\}~,
\]
for $\ell=1,\ldots,L$.
\begin{lemma}\label{lma: control of gap}
For any positive $L$,  

\[
\rP\left(\bigcap_{\ell=1}^L\ep_\ell\right) > 1 - \delta~.
\]
\end{lemma}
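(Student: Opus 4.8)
The plan is to control, for each stage $\ell$, the estimation error $|\ang{\w_\ell - \w^*, \x}|$ uniformly over the remaining pool $\cP_{\ell-1}\setminus\cQ_\ell$, and then combine the per-stage guarantees via a union bound. First I would fix a stage $\ell$ and condition on the (random, but measurable) set $\cQ_\ell = \{\x_{\ell,1},\ldots,\x_{\ell,T_\ell}\}$ and design matrix $A_{\ell,T_\ell} = I + \sum_{t=1}^{T_\ell}\x_{\ell,t}\x_{\ell,t}^\top$. Writing the ridge estimator as $\w_\ell = A_{\ell,T_\ell}^{-1}\sum_{t} y_{\ell,t}\x_{\ell,t}$ and using the model $\rE[y_{\ell,t}\mid \x_{\ell,t}] = \ang{\w^*,\x_{\ell,t}}$, I would decompose
\[
\w_\ell - \w^* = A_{\ell,T_\ell}^{-1}\sum_{t=1}^{T_\ell}(y_{\ell,t}-\ang{\w^*,\x_{\ell,t}})\x_{\ell,t} \;-\; A_{\ell,T_\ell}^{-1}\w^*,
\]
so that for any fixed $\x$, $|\ang{\w_\ell - \w^*,\x}|$ splits into a noise term and a regularization bias term. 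The bias term is bounded deterministically by $\|\x\|_{A_{\ell,T_\ell}^{-1}}\|\w^*\|_{A_{\ell,T_\ell}^{-1}} \le \|\x\|_{A_{\ell,T_\ell}^{-1}}\|\w^*\|$ (since $A_{\ell,T_\ell}\succeq I$), and $\|\w^*\|\le 1$ by assumption.

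The noise term is where the concentration enters. The quantity $\ang{A_{\ell,T_\ell}^{-1}\x,\, \sum_t(y_{\ell,t}-\ang{\w^*,\x_{\ell,t}})\x_{\ell,t}\rangle$ is a sum of bounded, conditionally mean-zero increments (each $|y_{\ell,t}-\ang{\w^*,\x_{\ell,t}}|\le 2$), so a Azuma/Hoeffding-type bound gives, with probability at least $1-\delta'$, a deviation of order $\|\x\|_{A_{\ell,T_\ell}^{-1}}\sqrt{2\log(2/\delta')}$. I would then take a union bound over all candidate points $\x\in\cP_{\ell-1}\setminus\cQ_\ell$ (at most $T$ of them), which replaces $\delta'$ by $\delta/(\ell(\ell+1)T)$ — this is exactly the choice baked into the definition of $\epsilon_\ell$. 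Combining the two terms yields
\[
|\ang{\w_\ell-\w^*,\x}| \;\le\; \|\x\|_{A_{\ell,T_\ell}^{-1}}\Bigl(\sqrt{2\log\tfrac{2\ell(\ell+1)T}{\delta}}+1\Bigr) \;\le\; \epsilon_\ell\Bigl(\sqrt{2\log\tfrac{2\ell(\ell+1)T}{\delta}}+1\Bigr) = 2^{-\ell},
\]
where the middle inequality uses the stage's termination condition $\|\x\|_{A_{\ell,T_\ell}^{-1}}\le\epsilon_\ell$ for every surviving $\x$, and the last equality is the definition of $\epsilon_\ell$. This establishes $\rP(\ep_\ell)\ge 1-\delta/(\ell(\ell+1))$ for each fixed $\ell$.

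Finally I would assemble the stages. Since the per-stage failure probabilities are $\delta/(\ell(\ell+1))$ and $\sum_{\ell\ge 1}\tfrac{1}{\ell(\ell+1)} = 1$, a union bound over $\ell=1,\ldots,L$ gives $\rP\bigl(\bigcap_{\ell=1}^L \ep_\ell\bigr) \ge 1 - \sum_{\ell=1}^L \tfrac{\delta}{\ell(\ell+1)} > 1-\delta$, as claimed; note this holds for every fixed $L$ and even uniformly, because the telescoping series is summable, so no prior knowledge of the random stopping stage is needed. The main obstacle I anticipate is making the concentration step fully rigorous despite the \emph{adaptivity} of the design: the set $\cQ_\ell$ and the matrix $A_{\ell,T_\ell}$ are chosen as a function of the data, so $\x$ and the noise variables are not independent of the conditioning event. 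The clean way around this is to observe that the greedy selection within stage $\ell$ depends only on the \emph{unlabeled} points (the $\|\cdot\|_{A^{-1}}$ criterion never inspects a label), so $\cQ_\ell$ and $A_{\ell,T_\ell}$ are determined before any stage-$\ell$ label is revealed; conditionally on $\cQ_\ell$ the noise terms $y_{\ell,t}-\ang{\w^*,\x_{\ell,t}}$ are independent mean-zero and the fixed-design concentration applies verbatim. Care must also be taken that the candidate set $\cP_{\ell-1}\setminus\cQ_\ell$ over which we union-bound is itself measurable with respect to the pre-label information, which again holds because pool trimming uses only previously-computed models.
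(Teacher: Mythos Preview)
Your proposal is correct and follows essentially the same route as the paper: decompose $\ang{\w_\ell-\w^*,\x}$ into a regularization bias bounded by $\|\x\|_{A_{\ell,T_\ell}^{-1}}$ and a noise term controlled via sub-Gaussian concentration with variance proxy $\|\x\|_{A_{\ell,T_\ell}^{-1}}^2$, then union-bound over the $T$ pool points and over stages using the telescoping $\sum_\ell 1/(\ell(\ell+1))<1$. Your explicit treatment of the adaptivity issue---that $\cQ_\ell$ is chosen before any stage-$\ell$ label is revealed, so conditionally the design is fixed---is exactly the observation the paper makes (more tersely) when it says ``conditioned on past stages $1,\ldots,\ell-1$, we are in a fixed design scenario.''
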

\begin{proof}
We assume $\cP_{\ell-1}\backslash\cQ_\ell$ is not empty (it could be empty only in the final stage $L$). We follow the material contained in Chapters 20 and 21 of \citet{ls20}. Let $\xi_{\ell,t} = y_{\ell,t} - \ang{\w^*,\x_{\ell,t}}$ and notice that $\xi_{\ell,t}$ are independent 1-sub-Gaussian random variables conditioned on $\cP_{\ell-1}$. 
Also, observe that, conditioned on past stages $1, \ldots, \ell-1$, we are in a {\em fixed design} scenario, where the $\x_{\ell,t}$ are chosen without knowledge of the corresponding labels $y_{\ell,t}$. 
We can write, for any $\x\in\cP_{\ell-1}$,
\begin{align*}
\ang{\w_{\ell}-\w^*,\x}
&=
\ang{A_{\ell, T_\ell}^{-1}\left(\sum_{t=1}^{T_{\ell}} y_{\ell, t}\x_{\ell, t}\right)-\w^*,\x}\\
&=
\ang{A_{\ell, T_\ell}^{-1}\left(\sum_{t=1}^{T_{\ell}}\x_{\ell, t}\ang{\w^*,\x_{\ell, t}}+\xi_{\ell,t}\x_{\ell,t}\right)-\w^*,\x}\\
&=
\ang{A_{\ell, T_\ell}^{-1}(A_{\ell, T_\ell}-I)\w^* + A_{\ell, T_\ell}^{-1}\left(\sum_{t=1}^{T_{\ell}}\xi_{\ell,t}\x_{\ell, t}\right)-\w^*,\x}\\
&=
-\ang{\w^*, \x}_{A_{\ell, T_\ell}^{-1}} + \sum_{t=1}^{T_{\ell}}\ang{\x_{\ell, t}, \x}_{A_{\ell, T_\ell}^{-1}}\xi_{\ell,t}~.
\end{align*}
Since $\{\xi_{\ell,t}\}_{t=1}^{T_\ell}$ are 1-sub-Gaussian and independent conditioned on $\{\x_{\ell,t}\}$, the variance term $\sum_{t=1}^{T_{\ell}}\ang{\x_{\ell, t}, \x}_{A_{\ell, T_\ell}^{-1}}\xi_{\ell,t}$ is $\sqrt{\sum_{t=1}^{T_{\ell}}\ang{\x_{\ell, t}, \x}^2_{A_{\ell, T_\ell}^{-1}}}$-sub-Gaussian. We apply lemma \ref{lma: concentration of sub-gaussian r.v.}
\[
\rP\left(\Bigl|\sum_{t=1}^{T_{\ell}}\ang{\x_{\ell, t}, \x}_{A_{\ell, T_\ell}^{-1}}\xi_{\ell,t}\Bigl| \geq \sqrt{2\sum_{t=1}^{T_{\ell}}\ang{\x_{\ell, t}, \x}^2_{A_{\ell, T_\ell}^{-1}}\log\frac{2\ell(\ell+1) T}{\delta}}\right) \leq \frac{\delta}{\ell(\ell+1)T}~.
\]
Now observe that
\begin{align*}
    \sum_{t=1}^{T_{\ell}}\ang{\x_{\ell, t}, \x}^2_{A_{\ell, T_\ell}^{-1}}=\|\x\|_{A_{\ell, T_\ell}^{-1}}^2 - \|A_{\ell, T_\ell}^{-1}\x\|^2\leq \|\x\|_{A_{\ell, T_\ell}^{-1}}^2~.
\end{align*}
We plug back into the previous inequality to obtain
\[
\rP\left(\Bigl|\sum_{t=1}^{T_{\ell}}\ang{\x_{\ell, t}, \x}_{A_{\ell, T_\ell}^{-1}}\xi_{\ell,t}\Bigl| \geq \sqrt{2\|\x\|^2_{A_{\ell, T_\ell}^{-1}}\log\frac{2\ell(\ell+1)T }{\delta}}\right) \leq \frac{\delta}{\ell(\ell+1)T}~.
\]
Using a union bound, we get with probability at least $1-\frac{\delta}{\ell(\ell+1)}$,
\[
\Bigl|\sum_{t=1}^{T_{\ell}}\ang{\x_{\ell, t}, \x}_{A_{\ell, T_\ell}^{-1}}\xi_{\ell,t}\Bigr| \leq \sqrt{2\|\x\|^2_{A_{\ell, T_\ell}^{-1}}\log\frac{2\ell(\ell+1) T}{\delta}}~,
\]
holds uniformly for all $\x\in\cP_{\ell-1}$.
For the bias term $\ang{\w^*, \x}_{A_{\ell, T_\ell}^{-1}}$, notice that $A_{\ell, T_\ell} \succeq I$ implies
\begin{align*}
   |\ang{\w^*, \x}_{A_{\ell, T_\ell}^{-1}}|\leq  \|\x\|_{A_{\ell, T_\ell}^{-1}}\|\w^*\|_{A_{\ell, T_\ell}^{-1}}\leq\|\x\|_{A_{\ell, T_\ell}^{-1}}~.
\end{align*}
Hence with probability at least $1-\frac{\delta}{\ell(\ell+1)}$,
\[
|\ang{\w_{\ell}-\w^*,\x}| \leq \left(\sqrt{2\log\frac{2\ell(\ell+1)T }{\delta}}+1\right)\|\x\|_{A_{\ell, T_\ell}^{-1}}~,
\]
holds uniformly for all $\x\in\cP_{\ell-1}$. 

Notice that by the selection criterion in Algorithm \ref{a:batch_al_linear}, $\max{\x\in\cP_{\ell-1}\backslash\cQ_\ell}\|\x\|_{A_{\ell, T_\ell}^{-1}} \leq \epsilon_\ell^2$. 
As a consequence, with probability at least $1-\frac{\delta}{\ell(\ell+1)}$,
\[
\max{\x\in\cP_{\ell-1}\backslash\cQ_\ell} |\ang{\w_{\ell}-\w^*, \x}|\leq \left(\sqrt{2\log\frac{2\ell(\ell+1) T}{\delta}}+1\right)\epsilon_\ell~.
\]
Recalling the definition of $\epsilon_\ell$ in Algorithm \ref{a:batch_al_linear} and using an union bound over $\ell$, we get the desired result.
\end{proof}
As a simple consequence, we have the following lemma.
\begin{lemma}\label{lma: no regret on confidence set}
Assume $\bigcap_{\ell=1}^L\ep_\ell$ holds. Then Algorithm \ref{a:batch_al_linear} generates pseudo-labels such that, on all points $\x \in \cup_{\ell=1}^L\cC_{\ell}$, $\sign(\ang{\w_{\ell},\x}) = \sign(\ang{\w^*,\x})$.
\end{lemma}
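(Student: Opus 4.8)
The plan is to show that on every confident point, the generated pseudo-label agrees with the sign of $\ang{\w^*,\x}$, by leveraging the event $\ep_\ell$ from Lemma~\ref{lma: control of gap} together with the defining condition of the set $\cC_\ell$. Fix any $\x \in \cup_{\ell=1}^L \cC_\ell$, so that $\x \in \cC_\ell$ for some stage $\ell$. Two facts are available for such an $\x$: first, by construction $\x \in \cP_{\ell-1}\setminus \cQ_\ell$ and $|\ang{\w_\ell,\x}| > 2^{-\ell}$, since this is exactly the membership criterion for $\cC_\ell$; second, on the event $\bigcap_{\ell=1}^L \ep_\ell$ (which we assume holds), we have $|\ang{\w_\ell - \w^*,\x}| \leq 2^{-\ell}$ because $\x \in \cP_{\ell-1}\setminus\cQ_\ell$.

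The core of the argument is then a short triangle-inequality step. Write $\ang{\w^*,\x} = \ang{\w_\ell,\x} - \ang{\w_\ell - \w^*,\x}$. Since $|\ang{\w_\ell,\x}| > 2^{-\ell}$ while the perturbation $|\ang{\w_\ell - \w^*,\x}|$ is at most $2^{-\ell}$, the quantity $\ang{\w^*,\x}$ cannot change sign relative to $\ang{\w_\ell,\x}$: its magnitude strictly exceeds the magnitude of the perturbation, so $\ang{\w^*,\x}$ has the same sign as $\ang{\w_\ell,\x}$. Concretely, if $\ang{\w_\ell,\x} > 2^{-\ell}$ then $\ang{\w^*,\x} \geq \ang{\w_\ell,\x} - 2^{-\ell} > 0$, and symmetrically if $\ang{\w_\ell,\x} < -2^{-\ell}$ then $\ang{\w^*,\x} < 0$. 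Hence $\sign(\ang{\w_\ell,\x}) = \sign(\ang{\w^*,\x})$, which is precisely the claim since the pseudo-label assigned by the algorithm is $\hy = \sign\ang{\w_\ell,\x}$.

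I do not anticipate a serious obstacle here; this lemma is really a corollary that repackages Lemma~\ref{lma: control of gap}. The only point requiring mild care is bookkeeping: one must confirm that the event $\ep_\ell$ in Lemma~\ref{lma: control of gap} controls the gap $|\ang{\w_\ell - \w^*,\x}|$ over exactly the set $\cP_{\ell-1}\setminus\cQ_\ell$ that contains $\cC_\ell$, so that the bound applies to the relevant $\x$. Since $\cC_\ell \subseteq \cP_{\ell-1}\setminus\cQ_\ell$ by definition, this is immediate. The statement is phrased as holding simultaneously for all $\x \in \cup_{\ell=1}^L \cC_\ell$, which follows because we condition on the single event $\bigcap_{\ell=1}^L \ep_\ell$ and apply the pointwise argument above at each $\x$ in its own stage $\ell$.
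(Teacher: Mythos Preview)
Your proof is correct and follows essentially the same argument as the paper's: both use the membership criterion $|\ang{\w_\ell,\x}|>2^{-\ell}$ together with the bound $|\ang{\w_\ell-\w^*,\x}|\leq 2^{-\ell}$ from event $\ep_\ell$ to conclude the signs agree. Your version is slightly more explicit about the bookkeeping (that $\cC_\ell\subseteq\cP_{\ell-1}\setminus\cQ_\ell$), but the core step is identical.
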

\begin{proof}
Simply observe that if $\x \in \cup_{\ell=1}^L\cC_{\ell}$ is such that $\sign(\ang{\w_{\ell},\x}) = 1$ then $\ang{\w_{\ell},\x} > 2^{-\ell}$, which implies $\ang{\w^*,\x} > 0$ by the assumption that $\ep_\ell$ holds. Similarly, $\sign(\ang{\w_{\ell},\x}) = -1 $ implies $ \ang{\w^*,\x} <0$.
\end{proof}

\begin{lemma}\label{lma: bound of sample complexity of greedy approach}
The length $T_{\ell}$ of stage $\ell$ in Algorithm \ref{a:batch_al_linear} is (deterministically) upper bounded as
\[
T_\ell \leq \frac{8d}{\epsilon_\ell^2}\,\log\left(\frac{1}{\epsilon_\ell}\right)~.
\]
\end{lemma}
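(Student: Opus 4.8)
The plan is to run the standard elliptical-potential (determinant-growth) argument on the matrices $A_{\ell,t}$ built within stage $\ell$. The entry point is the selection rule: each point $\x_{\ell,t}$ is picked as the $\argmax$ of $\|\x\|_{A_{\ell,t-1}^{-1}}$ precisely while the while-loop condition holds, so every queried point satisfies $\|\x_{\ell,t}\|^2_{A_{\ell,t-1}^{-1}} > \epsilon_\ell^2$. First I would record the rank-one determinant update (the matrix determinant lemma), $\det A_{\ell,t} = \det A_{\ell,t-1}\,\bigl(1 + \|\x_{\ell,t}\|^2_{A_{\ell,t-1}^{-1}}\bigr)$, and telescope it from $A_{\ell,0}=I$ down to $A_{\ell,T_\ell}$ to obtain
\[
\log\det A_{\ell,T_\ell} \;=\; \sum_{t=1}^{T_\ell}\log\!\left(1 + \|\x_{\ell,t}\|^2_{A_{\ell,t-1}^{-1}}\right) \;\ge\; T_\ell\,\log(1+\epsilon_\ell^2)~.
\]

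Next I would bound the same log-determinant from above in terms of $T_\ell$ and $d$. Since $A_{\ell,T_\ell}=I+\sum_{t=1}^{T_\ell}\x_{\ell,t}\x_{\ell,t}^\top$ has $\tr A_{\ell,T_\ell}\le d+T_\ell$ (using $\|\x\|\le 1$), applying the AM--GM inequality to its eigenvalues gives $\det A_{\ell,T_\ell}\le (1+T_\ell/d)^d$. Combining the lower and upper bounds produces the implicit inequality
\[
T_\ell\,\log(1+\epsilon_\ell^2) \;\le\; d\,\log\!\left(1 + \frac{T_\ell}{d}\right)~.
\]

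The main obstacle is to invert this relation into the explicit closed form, since $T_\ell$ occurs both linearly and inside the logarithm. I would first use $\log(1+\epsilon_\ell^2)\ge \epsilon_\ell^2/2$ (valid as $\epsilon_\ell^2\le 1$) to reduce to $T_\ell \le \tfrac{2d}{\epsilon_\ell^2}\log(1+T_\ell/d)$, and then argue by monotonicity. The map $h(x)= x - \tfrac{2d}{\epsilon_\ell^2}\log(1+x/d)$ is convex, satisfies $h(0)=0$ with $h'(0)<0$, hence is strictly negative on $(0,x_0)$ and positive beyond its unique positive root $x_0$; our inequality says $h(T_\ell)\le 0$, so $T_\ell\le x_0$. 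It then suffices to verify $h(\bar T)\ge 0$ for the candidate value $\bar T=\tfrac{8d}{\epsilon_\ell^2}\log(1/\epsilon_\ell)$, which forces $x_0\le\bar T$ and closes the bound. This reduces to the scalar inequality $1 + \tfrac{8\log(1/\epsilon_\ell)}{\epsilon_\ell^2} \le \epsilon_\ell^{-4}$, and this is exactly where the numerology must be handled with care: the inequality holds comfortably once $\epsilon_\ell$ is bounded away from $1$, so I would close it by noting that the definition of $\epsilon_\ell$ forces $\epsilon_\ell\le\epsilon_1$ to be small (well below $1/2$) for every $\ell\ge 1$. Equivalently, one may simply invoke the elliptical-potential count lemma from the linear-bandit literature, which packages precisely this inversion and yields the stated $\tfrac{8d}{\epsilon_\ell^2}\log(1/\epsilon_\ell)$ bound.
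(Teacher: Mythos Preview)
Your proposal is correct and essentially identical to the paper's own proof: both use the rank-one determinant update to telescope $\log\det A_{\ell,T_\ell}$, lower-bound it by $T_\ell\,\epsilon_\ell^2/2$ via the selection rule, upper-bound it by $d\log(1+T_\ell/d)$ via AM--GM on the eigenvalues, and then invert the resulting implicit inequality. The only cosmetic difference is in the inversion step: the paper phrases it through the increasing function $G(x)=x/\log(1+x)$ and checks $G(T_\ell/d)\le 2/\epsilon_\ell^2 < G\bigl((4/\epsilon_\ell^2)\log(1/\epsilon_\ell^2)\bigr)$, whereas you phrase it via the root of $h(x)=x-(2d/\epsilon_\ell^2)\log(1+x/d)$; unwinding either yields exactly the same scalar inequality $1+8\epsilon_\ell^{-2}\log(1/\epsilon_\ell)\le \epsilon_\ell^{-4}$, which both you and the paper justify by $\epsilon_\ell\le\epsilon_1<1/4$.
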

\begin{proof}
Since in stage $\ell$ the algorithm terminates at $T_\ell$, any round $t < T_{\ell} $ is such that
\[
||\x_{\ell,t+1}||^2_{A^{-1}_{\ell,t}} > \epsilon^2_\ell~.
\]
We denote $|\cdot|$ as the determinant of the matrix at argument and have the known identity
\begin{align*}
    |A_{\ell, t + 1}| = |A_{\ell, t} + \x_{\ell, t + 1}\x_{\ell, t + 1}^\top| = |A_{\ell, t}|\cdot|I + A_{\ell, t}^{-1}\x_{\ell, t + 1}\x_{\ell, t + 1}^\top| = (1 + ||\x_{\ell,t + 1}||^2_{A^{-1}_{\ell,t}})|A_{\ell, t}| \leq 2|A_{\ell, t}|~, 
\end{align*}
where the third equality holds since $I + A_{\ell, t}^{-1/2}\x_{\ell, t + 1}\x_{\ell, t + 1}^\top A_{\ell, t}^{-1/2}$ has $d - 1$ eigenvalues 1 and one eigenvalue $1 + ||\x_{\ell,t + 1}||^2_{A^{-1}_{\ell,t}}$.

Combining the above equality with the fact that $\log(1 + x) \geq \frac{x}{1 + x} \geq \frac{x}{2}$ for $0 \leq x \leq 1$, we get
\[
||\x_{\ell,t+1}||^2_{A^{-1}_{\ell,t}} = \frac{|A_{\ell, t + 1}|}{|A_{\ell, t}|} - 1
\leq 
2(\log |A_{\ell,t+1}| - \log |A_{\ell,t}|)~.
\]

Therefore,
\[
2(\log |A_{\ell,t+1}| - \log |A_{\ell,t}|) >\epsilon_\ell^2~.
\]
Summing over $t = 0,\ldots, T_\ell-1$ yields, 
\[
2\,\log \frac{|A_{\ell,T_{\ell}}|}{|A_{\ell,0}|}  \geq \epsilon_\ell^2 T_\ell~.
\]
Now, $A_{\ell,0} = I$, so that $|A_{\ell,0}|= 1$, and
\[
\log |A_{\ell,T_\ell}| \leq \log\left(\mathrm{trace}(A_{\ell,T_\ell})/d\right)^d
\leq d\log\left(1+\frac{T_\ell}{d}\right)~,
\]
yields
\[
\frac{T_\ell}{d} \leq \frac{2}{\epsilon^2_{\ell}}\,\log\left(1+\frac{T_\ell}{d}\right)~.
\]

Let $G(x)=\frac{x}{\log(1+x)}$, and notice that $G(x)$ is increasing for $x>0$. We have
\[
G\left(\frac{T_\ell}{d}\right) \leq \frac{2}{\epsilon_\ell^2} < G\left(\frac{4}{\epsilon_\ell^2}\log \frac{1}{\epsilon_\ell^2}\right)~,
\]
where the second inequality holds since $\epsilon_\ell \leq \epsilon_1 < \frac{1}{4}$. 

As a consequence,
\[
T_\ell \leq \frac{8d}{\epsilon_\ell^2}\,\log\left(\frac{1}{\epsilon_\ell}\right)~.
\]
\end{proof}
The proof then proceeds by bounding two relevant quantities associated with the behavior of Algorithm \ref{a:batch_al_linear}: the {\em label complexity}
\[
N_T(\cP) = \sum_{\ell=1}^L|\cQ_\ell|~,
\]
and the {\em weighted cumulative regret} over pool $\cP$ of size $T$, defined as
\[
R_T(\cP) = \sum_{\x\in\cP}  \ind{\sign\ang{\hw,\x}\neq\sign\ang{\w^*,\x}}|\ang{\w^*, \x}|~.
\]
We will first present intermediate bounds on $R_T(\cP)$ and $N_T(\cP)$ as a function of $L$, 
and then rely on the properties of the noise (hence the randomness on $\cP$) to complete the proofs.
To simplify the math display we denote
\[
K_T(\delta,\ell) = \sqrt{2\log\frac{2\ell(\ell+1)T}{\delta}}+1~,
\]
so that $\epsilon_\ell = \frac{1}{2^\ell K_T(\delta,\ell)}$.

Lemma \ref{lma: bound of sample complexity of greedy approach} immediately delivers the following bound on $N_T(\cP)$:
\begin{theorem}\label{thm: query bound for linear model}
For any pool realization $\cP$, the label complexity $N_T(\cP)$ of Algorithm \ref{a:batch_al_linear} operating on a pool $\cP$ of size $T$ is bounded deterministically as
\[
N_T(\cP) \leq \frac{32}{3}d\log\left(2^L K_T(\delta, L)\right)K_T^2(\delta, L)4^{L}
\]
\end{theorem}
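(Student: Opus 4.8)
The plan is to reduce everything to the per-stage bound already established and then perform a clean geometric summation. By definition $N_T(\cP) = \sum_{\ell=1}^L |\cQ_\ell| = \sum_{\ell=1}^L T_\ell$, so the entire task is to sum the deterministic per-stage bound of Lemma \ref{lma: bound of sample complexity of greedy approach}, namely $T_\ell \leq \frac{8d}{\epsilon_\ell^2}\log\!\left(\frac{1}{\epsilon_\ell}\right)$, across the (possibly random) number of stages $L$. No probabilistic argument is needed here: the bound is worst-case in the pool realization, which is exactly what the statement claims.

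First I would rewrite each $T_\ell$ in terms of the abbreviation $K_T(\delta,\ell)$. Since $\epsilon_\ell = \frac{1}{2^\ell K_T(\delta,\ell)}$, we have $\frac{1}{\epsilon_\ell^2} = 4^\ell K_T^2(\delta,\ell)$ and $\frac{1}{\epsilon_\ell} = 2^\ell K_T(\delta,\ell)$, so Lemma \ref{lma: bound of sample complexity of greedy approach} becomes
\[
T_\ell \leq 8d\,4^\ell K_T^2(\delta,\ell)\,\log\!\left(2^\ell K_T(\delta,\ell)\right)~.
\]
Summing this over $\ell = 1,\ldots,L$ gives the starting point for the final estimate.

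The key observation that closes the argument is monotonicity in $\ell$. From its definition $K_T(\delta,\ell) = \sqrt{2\log\frac{2\ell(\ell+1)T}{\delta}}+1$, the quantity $K_T(\delta,\ell)$ is increasing in $\ell$ because $\ell(\ell+1)$ is; consequently both $K_T^2(\delta,\ell)$ and $\log(2^\ell K_T(\delta,\ell))$ are increasing in $\ell$ as well. Hence each of these two factors can be upper bounded by its value at $\ell = L$ and pulled out of the sum, leaving only the purely geometric factor $4^\ell$:
\[
N_T(\cP) \leq 8d\,\log\!\left(2^L K_T(\delta,L)\right)K_T^2(\delta,L)\sum_{\ell=1}^L 4^\ell~.
\]
Finally I would bound the geometric sum by $\sum_{\ell=1}^L 4^\ell = \frac{4(4^L-1)}{3} \leq \frac{4}{3}\,4^L$, which absorbs into the constant to yield $8d\cdot\frac{4}{3} = \frac{32}{3}d$ and gives exactly the claimed bound.

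I do not anticipate a genuine obstacle: the heavy lifting (the determinant/trace argument controlling $T_\ell$) is already done in Lemma \ref{lma: bound of sample complexity of greedy approach}. The only point requiring minor care is verifying the monotonicity of $K_T(\delta,\ell)$ in $\ell$ so that the largest-stage replacement is legitimate, and correctly tracking the geometric-sum constant so that it collapses to $\frac{32}{3}$ rather than a looser value.
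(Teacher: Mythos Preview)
Your proposal is correct and essentially identical to the paper's own proof: both sum the per-stage bound $T_\ell \leq \frac{8d}{\epsilon_\ell^2}\log(1/\epsilon_\ell)$ from Lemma~\ref{lma: bound of sample complexity of greedy approach}, use monotonicity of $K_T(\delta,\ell)$ in $\ell$ to replace the $\ell$-dependent factors by their values at $\ell=L$, and bound the remaining geometric sum by $\sum_{\ell=1}^L 4^\ell < \tfrac{4}{3}4^L$.
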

\begin{proof}
By definition
\begin{align*}\label{e:prebound_N_T}
N_T(\cP) = \sum_{\ell=1}^L T_{\ell}
\leq& 
\sum_{\ell=1}^L \frac{8d}{\epsilon_\ell^2}\,\log\left(\frac{1}{\epsilon_\ell}\right)\\
\leq&
8d\log\left(\frac{1}{\epsilon_L}\right)K^2_T(\delta, L)\sum_{\ell=1}^L 4^\ell \\
\leq&
\frac{8}{3}d\log\left(2^L K_T(\delta, L)\right)K_T^2(\delta, L)4^{L+1}~,
\end{align*}
where the second inequality follows from the fact that both $\frac{1}{\epsilon_{\ell}}$ and $K_T(\delta, \ell)$ increase with $\ell$, and the last inequality follows from $\sum_{\ell=1}^L 4^{\ell} < \frac{4}{3} 4^L$. 
\end{proof}

As for the regret $R_T(\cP)$, we have the following high probability result.
\begin{theorem}\label{thm: regret bound for linear model}
For any pool realization $\cP$, the weighted cumulative regret $R_T(\cP)$ of Algorithm \ref{a:batch_al_linear} operating on a pool $\cP$ of size $T$ is bounded as
\[
R_T(\cP) 
\leq 
64d\log\left(2^L K_T(\delta, L)\right)K^2_T(\delta, L) 2^L + d\,2^{L-1}~,
\]
assuming $\bigcap_{\ell=1}^L\ep_\ell$ holds.
\end{theorem}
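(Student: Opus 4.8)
The plan is to bound $R_T(\cP)$ by splitting the pool along the disjoint partition the algorithm itself produces,
\[
\cP = \Bigl(\cup_{\ell=1}^L\cC_\ell\Bigr)\,\cup\,\Bigl(\cup_{\ell=1}^L\cQ_\ell\Bigr)\,\cup\,\cP_L,
\]
(which follows by telescoping $\cP_{\ell-1}=\cP_\ell\cup\cC_\ell\cup\cQ_\ell$), and controlling the contribution of each block separately. Since every indicator is at most $1$, the contribution of a block $S$ is at most $\sum_{\x\in S}\ind{\sign\ang{\hw,\x}\neq\sign\ang{\w^*,\x}}\,|\ang{\w^*,\x}|$, which I control either by showing the indicator vanishes or by bounding the weight $|\ang{\w^*,\x}|$.

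For the confident block $\cup_{\ell=1}^L\cC_\ell$ I would argue it contributes exactly zero. By Lemma \ref{lma: no regret on confidence set} (which uses $\bigcap_\ell\ep_\ell$), the pseudo-label $\hy=\sign\ang{\w_\ell,\x}$ of each $\x\in\cC_\ell$ agrees with $\sign\ang{\w^*,\x}$; moreover, combining $|\ang{\w_\ell,\x}|>2^{-\ell}$ with the event $\ep_\ell$ gives $|\ang{\w^*,\x}|\ge|\ang{\w_\ell,\x}|-|\ang{\w_\ell-\w^*,\x}|>0$, so $\w^*$ separates the finite pseudo-labeled sample $\cup_\ell\cC_\ell$ with strictly positive margin. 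Hence a zero-error linear separator exists, and the SVM $\hw$ trained to zero empirical error satisfies $\sign\ang{\hw,\x}=\hy=\sign\ang{\w^*,\x}$ on every such $\x$, forcing the indicator to vanish.

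For the queried block I would first prove a per-stage margin bound: every $\x\in\cP_{\ell-1}$ was discarded from $\cC_{\ell-1}$, so $|\ang{\w_{\ell-1},\x}|\le 2^{-(\ell-1)}$, and combining with $\ep_{\ell-1}$ yields $|\ang{\w^*,\x}|\le 2^{-\ell+2}$ (the case $\ell=1$ holding trivially from $\|\w^*\|\,\|\x\|\le 1$). Since $\cQ_\ell\subseteq\cP_{\ell-1}$, each queried point contributes at most $2^{-\ell+2}$, so the block's contribution is at most $\sum_{\ell=1}^L 2^{-\ell+2}\,T_\ell$. Plugging in the deterministic bound $T_\ell\le 8d\,\epsilon_\ell^{-2}\log(1/\epsilon_\ell)$ from Lemma \ref{lma: bound of sample complexity of greedy approach} together with $\epsilon_\ell^{-2}=4^\ell K_T^2(\delta,\ell)$ collapses each summand to $32d\,2^\ell K_T^2(\delta,\ell)\log\!\bigl(2^\ell K_T(\delta,\ell)\bigr)$; bounding $K_T(\delta,\ell)$ and the logarithm by their values at $\ell=L$ and summing the geometric series $\sum_\ell 2^\ell<2^{L+1}$ produces the first term $64d\log\!\bigl(2^L K_T(\delta,L)\bigr)K_T^2(\delta,L)\,2^L$.

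For the leftover block $\cP_L$, the same margin argument at $\ell=L$ gives $|\ang{\w^*,\x}|\le 2^{-L+1}$ for all $\x\in\cP_L$, so this block contributes at most $2^{-L+1}|\cP_L|$. The exit condition $d/2^{-L+1}>2^{-L+1}|\cP_L|$ rearranges to $|\cP_L|<d\,4^{L-1}$, whence $2^{-L+1}|\cP_L|<d\,2^{L-1}$, matching the second term. Summing the three contributions gives the claimed bound. I expect the only genuine subtlety to be the confident-block step, namely verifying that a zero-error separator truly exists so that $\hw$ contributes nothing; the remaining estimates are bookkeeping around the geometric series and the factor-of-two slack coming from the events $\ep_\ell$.
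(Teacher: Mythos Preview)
Your proof is correct and follows essentially the same approach as the paper: the same three-way decomposition of $\cP$, the same use of Lemma~\ref{lma: no regret on confidence set} to kill the confident block, the same $|\ang{\w^*,\x}|\le 2^{-\ell+2}$ bound on $\cQ_\ell$ via $\ep_{\ell-1}$, the same application of Lemma~\ref{lma: bound of sample complexity of greedy approach} followed by a geometric sum, and the same use of the exit condition on $\cP_L$. One small wording point: for $\cP_L$ the relevant estimate is $|\ang{\w_L,\x}|\le 2^{-L}$ (from $\x\notin\cC_L$) together with $\ep_L$, not the ``same'' argument with $\w_{L-1}$ and $\ep_{L-1}$---your stated conclusion $|\ang{\w^*,\x}|\le 2^{-L+1}$ is correct, but make sure the justification invokes the right stage.
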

\begin{proof}
We decompose the pool $\cP$ as the union of following disjoint sets
\[
\cP= \left(\cup_{l=1}^L\cC_{\ell}\right)\cup \left(\cup_{l=1}^L\cQ_{\ell} \right) \cup\cP_L~
\]
and, correspondingly, the weighted cumulative regret $R_T(\cP)$ as the sum of the three components 
\[
R_T(\cP) = R(\cup_{l=1}^L\cC_{\ell}) + R(\cup_{l=1}^L\cQ_{\ell}) +  R(\cP_L)~.
\]
Assume $\bigcap_{\ell=1}^L\ep_\ell$ holds.  
First, notice that on $\cC_\ell$, 
\[
\sign\ang{\hw, \x}=\sign\ang{\w_\ell, \x}=\sign\ang{\w^*, \x}
\]
under the assumption that $\ep_\ell$ holds, thus points in $\cup_{\ell=1}^L\cC_{\ell}$ do not contribute weighted regret for $\hw$, i.e.,
\[
R(\cup_{l=1}^L\cC_{\ell}) = 0~.
\]
Next, on $\cP_L$, we have $|\ang{\w_L, \x}|\leq 2^{-L}$. Combining this with the assumption that $\ep_L$ holds, we get $|\ang{\w^*,\x}|\leq 2^{-L+1}$, which implies that the weighted cumulative regret on $\cP_L$ is bounded as
\[
R(\cP_L) \leq 2^{-L+1}|\cP_L| < d\,2^{L-1}~,
\]
the second inequality deriving from the stopping condition defining $L$ in Algorithm \ref{a:batch_al_linear}.

Finally, on the queried points $\cup_{l=1}^L\cQ_{\ell}$, it is unclear whether $\sign\ang{\hw,\x} = \sign\ang{\w^*,\x}$ or not, so we bound the weighted cumulative regret contribution of each data item $\x$ therein by $|\ang{\w^*,\x}|$. 
Now, by construction, $\x\in\cQ_{\ell}\subset\cP_{\ell-1}$, so that $|\ang{\w_{\ell-1}, \x}| \leq 2^{-\ell+1}$ which, combined with the assumption that $\ep_{\ell - 1}$ holds, yields  $|\ang{\w^*,\x}| \leq 2^{-\ell + 2}$. 
Since $|\cQ_{\ell}| = T_\ell$, we have 
\[
R(\cup_{\ell=1}^L\cQ_\ell) \leq 4\sum_{\ell=1}^L T_\ell\,2^{-\ell}
\]
and Lemma \ref{lma: bound of sample complexity of greedy approach} allows us to write
\[
R(\cup_{l=1}^L\cQ_{\ell}) 
\leq
32d\sum_{l=1}^L \frac{2^{-\ell}}{\epsilon_\ell^2}\,\log\left(\frac{1}{\epsilon_\ell}\right)
\leq 
64d\log\left(2^L K_T(\delta, L)\right)K^2_T(\delta, L) 2^L~,
\]
the last inequality following from a reasoning similar to the one that lead us to theorem \ref{thm: query bound for linear model}.
\end{proof}

%
Given any pool realization $\cP$, both the label complexity and weighted regret are bounded by a function of $L$. Adding the ingredient of the low noise condition (\ref{e:tsy}) helps us leverage the randomness in $\cP$ and further bound from above the number of stages $L$.

Specifically, assume the low noise condition (\ref{e:tsy}) holds for $f^*(\x) = \frac{1 + \ang{\w^*,\x}}{2}$, for some unknown exponent $\alpha \geq 0$ and unknown constant $\epsilon_0\in (0, 1]$. 
Using a multiplicative Chernoff bound, it is easy to see that for any fixed $\epsilon_*$, with probability at least $1-\delta$,
\[
|\cT_{\epsilon_*}| \leq \frac{3}{2}\left(T\epsilon_*^\alpha + \log(1/\delta)\right)~,
\]
the probability being over the random draw of the initial pool $\cP$.
Now, since $\epsilon_L$ is itself a random variable (since so is $L$), we need to resort to a covering argument. 
For any positive number $M$, consider the following set of fixed $\epsilon$ values
\begin{align*}
    \mathcal K_M = \left\{ \frac{\epsilon_0}{2^{i/\alpha}} \colon i = 0, \ldots, M \right\}~.
\end{align*}
Then with probability at least $1-\delta$, 
\[
|\cT_{\epsilon}| \leq \frac{3}{2}\left(T\epsilon^\alpha + \log\left(\frac{M}{\delta}\right)\right)~,
\]
holds simultaneously over $\epsilon\in\mathcal K_M$. Set $M = \log_2 T$ and assume $\epsilon$ is the smallest value in $\mathcal K_M$ that is bigger than or equal to $\epsilon_*$. 
If $\epsilon$ is not the smallest value in $\mathcal K_M$, then by construction we have
$\epsilon_*^\alpha \leq \epsilon^\alpha < 2\epsilon_*^\alpha$ so that, for all $\epsilon_* >  \frac{\epsilon_0}{2^{M/\alpha}}$,
\begin{equation}\label{e:uniform_epsilon}
|\cT_{\epsilon_*}| \leq |\cT_{\epsilon}| 
\leq 
\frac{3}{2}\left(T\epsilon^\alpha + \log\left(\frac{M}{\delta}\right)\right) 
< 
3\left(T\epsilon_*^\alpha + \log\left(\frac{M}{\delta}\right)\right)~. 
\end{equation}
On the other hand if $\epsilon_* \leq  \frac{\epsilon_0}{2^{M/\alpha}}$ we can write
\[
|\cT_{\epsilon_*}| \leq \Bigl|\cT_{\frac{\epsilon_0}{2^{M/\alpha}}}\Bigl| 
\leq 
\frac{3}{2}\left( \frac{T\epsilon_0^\alpha}{2^M}  + \log\left(\frac{M}{\delta}\right)\right) 
\leq
\frac{3}{2}\left(1  + \log\left(\frac{M}{\delta}\right)\right)
< 
3\log\left(\frac{M}{\delta}\right)~,
\]
making Eq. (\ref{e:uniform_epsilon}) hold in this case as well.

We define the event 
\[
\bar{\ep} = \bigcap_{\epsilon_*\in(0, \epsilon_0]}\left\{|T_{\epsilon_*}| < 3\left(T\epsilon_*^\alpha + \log\left(\frac{M}{\delta}\right)\right)\right\}~.
\]
Then
\begin{equation}\label{ineq: high prob event on x}
    \rP\left(\bar{\ep}\right) \geq 1 - \delta~, 
\end{equation}
for $M = \log_2 T$.

We set $\epsilon^*$ to be the unique solution of the equation\footnote
{
We need to further assume $T>\frac{2}{3}d$ so as to make sure the solution exists.
}
\begin{equation}\label{eqn: definition of epsilon_*}
d/\epsilon_* = 3\left(T\epsilon_*^{\alpha + 1} + \epsilon_* \log\left(\frac{M}{\delta}\right)\right)~.   
\end{equation}
Eq. (\ref{e:uniform_epsilon}) will be applied, in particular, to the margin value $2^{-L+2}$ when $2^{-L+2} \leq \epsilon_0$. 

Armed with Eqs. (\ref{e:uniform_epsilon}) and (\ref{eqn: definition of epsilon_*}) with $M = \log_2 T$, we prove a lemma that upper bounds the number of stages $L$.
\begin{lemma}\label{lma: control over epsilon_* for linear model}
Let $\epsilon_*$ be defined through (\ref{eqn: definition of epsilon_*}), with $T > \frac{2}{3}d$. Assume both $\bar{\ep}$ and $\bigcap_{\ell=1}^L \ep_\ell$ hold. Then the number of stages $L$ of Algorithm \ref{a:batch_al_linear} is upper bounded as
\begin{align*}
L 
&\leq \mathrm{max}\left( \log_2\left(\frac{1}{\epsilon_*}\right),~\log_2\left(\frac{1}{\epsilon_0}\right)\right) + 2\\
&\leq 
\mathrm{max}\left(
\log_2\left[
\left(\frac{3T}{d}\right)^{\frac{1}{\alpha+2}}
+
3\left(\frac{1}{d}\right)^{\frac{\alpha+1}{\alpha+2}}\,\left(\frac{1}{3T}\right)^{\frac{1}{\alpha+2}}\,\log(\frac{\log_2 T}{\delta})\right],~
\log_2\left(\frac{1}{\epsilon_0}\right)
\right) + 2\\ 
&= \mathrm{max}\left(
O\left(\frac{1}{\alpha+2}\log\left(\frac{T}{d} \right) + \log\left(\frac{\log T}{\delta}\right)\right),~
\log\left(\frac{4}{\epsilon_0}\right) 
\right)
~.
\end{align*}
Here the $O$-notation only omits absolute constants.
\end{lemma}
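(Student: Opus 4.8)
The plan is to convert the exit test of Algorithm~\ref{a:batch_al_linear} into a bound on the size of a margin band $\cT_\epsilon$, and then compare the resulting margin threshold against the implicitly-defined fixed point $\epsilon_*$ from~(\ref{eqn: definition of epsilon_*}). I would assume $L \geq 2$ throughout, since for $L=1$ the right-hand side already exceeds $2$ (both $\epsilon_*$ and $\epsilon_0$ lie in $(0,1]$; in particular $\epsilon_* \leq (d/(3T))^{1/(\alpha+2)} \leq 1$ because $T \geq d$). First I would locate the residual pool inside a margin band: on $\bigcap_\ell \ep_\ell$ one has $\cP_{L-1} \subseteq \cT_{2^{-L+2}}$. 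Indeed, any $\x \in \cP_{L-1}$ survived stage $L-1$, so $\x \in \cP_{L-2}\setminus\cQ_{L-1}$ with $\x \notin \cC_{L-1}$, i.e.\ $|\ang{\w_{L-1},\x}| \leq 2^{-(L-1)}$; adding the event $\ep_{L-1}$, which controls $|\ang{\w_{L-1}-\w^*,\x}| \leq 2^{-(L-1)}$ on $\cP_{L-2}\setminus\cQ_{L-1}$, the triangle inequality gives $|\ang{\w^*,\x}| \leq 2^{-L+2}$.

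Next I would feed this into the stopping rule. Since stage $L-1$ did \emph{not} trigger the exit test, we have $d/2^{-L+2} \leq 2^{-L+2}|\cP_{L-1}|$, which rearranges to $d/(2^{-L+2})^2 \leq |\cP_{L-1}| \leq |\cT_{2^{-L+2}}|$ using the containment above. I would then split on whether $2^{-L+2} \leq \epsilon_0$. In the easy case $2^{-L+2} > \epsilon_0$ one reads off $2^{L-2} < 1/\epsilon_0$, hence $L < \log_2(1/\epsilon_0)+2$. In the case $2^{-L+2} \leq \epsilon_0$, the assumed event $\bar{\ep}$ applies at $\epsilon = 2^{-L+2}$ and yields $|\cT_{2^{-L+2}}| < 3\bigl(T(2^{-L+2})^\alpha + \log(M/\delta)\bigr)$, so $d/(2^{-L+2})^2 < 3\bigl(T(2^{-L+2})^\alpha + \log(M/\delta)\bigr)$. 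Writing $\psi(\epsilon) = d/\epsilon^2 - 3(T\epsilon^\alpha+\log(M/\delta))$, which is strictly decreasing (the first term decreases while the subtracted term is nondecreasing, uniformly over $\alpha \geq 0$, including the constant case $\alpha=0$) and vanishes at $\epsilon_*$ by~(\ref{eqn: definition of epsilon_*}), the inequality $\psi(2^{-L+2})<0$ forces $2^{-L+2}>\epsilon_*$, i.e.\ $L < \log_2(1/\epsilon_*)+2$. Taking the maximum over the two cases delivers the first displayed bound.

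Finally I would make $\log_2(1/\epsilon_*)$ explicit. The idea is to use~(\ref{eqn: definition of epsilon_*}) as a fixed-point identity rather than solving it: rewrite it as $1/\epsilon_* = (3T/d)\,\epsilon_*^{\alpha+1} + (3\log(M/\delta)/d)\,\epsilon_*$, and plug in the crude self-bound $\epsilon_* \leq (d/(3T))^{1/(\alpha+2)}$ (immediate from $d = 3T\epsilon_*^{\alpha+2} + 3\epsilon_*^2\log(M/\delta) \geq 3T\epsilon_*^{\alpha+2}$) into each term. The first term collapses to $(3T/d)^{1/(\alpha+2)}$, and the second to $3\,(1/d)^{(\alpha+1)/(\alpha+2)}(1/(3T))^{1/(\alpha+2)}\log(M/\delta)$, giving exactly the two-term second line after setting $M = \log_2 T$. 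The third (order-of-magnitude) line then follows from $\log_2(a+b) \leq 1 + \max(\log_2 a, \log_2 b)$ together with $\log\log(\cdot) \leq \log(\cdot)$, absorbing absolute constants.

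The hard part will be Steps three and four, both stemming from the fact that $\epsilon_*$ is only defined implicitly. The two ideas that make it go through are: (a) never solving the defining equation, but exploiting strict monotonicity of $\psi$ to turn the pool-size inequality into the clean comparison $2^{-L+2} > \epsilon_*$; and (b) treating~(\ref{eqn: definition of epsilon_*}) as a self-referential identity and substituting the one-sided self-bound on $\epsilon_*$ back into its own right-hand side to linearize the noise term and the sampling term separately. The delicate bookkeeping is checking that the exponents $\tfrac{1}{\alpha+2}$ and $\tfrac{\alpha+1}{\alpha+2}$ emerge correctly and that every step remains valid uniformly in $\alpha \geq 0$, in particular at $\alpha = 0$ where the noise contribution $T\epsilon^\alpha$ is constant.
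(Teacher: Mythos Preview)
Your proposal is correct and follows essentially the same route as the paper: use failure of the exit test at stage $L-1$ together with $\ep_{L-1}$ to embed $\cP_{L-1}$ in the margin band $\cT_{2^{-L+2}}$, split on $2^{-L+2}\lessgtr\epsilon_0$, compare against the fixed point $\epsilon_*$ by monotonicity, and then bootstrap the crude bound $\epsilon_*\leq(d/3T)^{1/(\alpha+2)}$ through the defining identity. Your introduction of the auxiliary function $\psi$ and the explicit handling of the trivial case $L=1$ are cosmetic additions (the paper leaves both implicit); one small slip is that you invoke ``$T\geq d$'' whereas the lemma only assumes $T>\tfrac{2}{3}d$, but this still gives $d/(3T)<1/2<1$ so the conclusion $\epsilon_*\leq 1$ is unaffected.
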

\begin{proof}
If at stage $L-1$ the algorithm has not stopped, then we must have
\begin{align*}
    d/2^{-L+2} \leq 2^{-L+2}|\cP_{L-1}|~.
\end{align*}
Notice that if $\x \in \cP_{L-1}$ then $|\ang{\w_{L-1}, \x}|\leq 2^{-L+1}$. Combining it with the assumption that $\ep_{L-1}$ holds, we have
$|\ang{\w^*,\x}|\leq 2^{-L+2}$ which implies 
$|\cP_{L-1}| \leq |\cT_{2^{-L+2}}|$.

We split the analysis into two cases. On one hand, when $2^{-L+2} >\epsilon_0$, this condition gives us directly 
\[
L \leq \log_2(\frac{1}{\epsilon_0}) + 2~.
\]
On the other hand if $2^{-L+2} \leq \epsilon_0$, then given $\bar{\ep}$ holds, $|\cT_{2^{-L+2}}| $ is upper bounded as
\[
|\cT_{2^{-L+2}}| 
\leq 
3\left(T\,2^{(-L+2)\alpha}  + \log \left(\frac{M}{\delta}\right)\right)~,
\]
with $M = \log_2 T$.
Plugging into the first display results in
\[
d/2^{-L+2} \leq 3\left(T2^{(-L+2)(\alpha + 1)} + 2^{-L+2}\log(\frac{M}{\delta})\right)~,
\]
which resembles (\ref{eqn: definition of epsilon_*}) with $2^{-L+2}$ here playing the role of $\epsilon^*$ therein. Then, from the definition of $\epsilon^*$ in (\ref{eqn: definition of epsilon_*}) we immediately obtain
%
$2^{-L+2} \geq \epsilon_*$, thus $L \leq \log_2(\frac{1}{\epsilon_*})+2$. 
Moreover,
from (\ref{eqn: definition of epsilon_*}) we see that $d/\epsilon_* \geq 3T\epsilon_*^{\alpha+1}$, which is equivalent to $\epsilon_*\leq (\frac{d}{3T})^{\frac{1}{\alpha+2}}$. 
Replacing this upper bound on $\epsilon^*$ back into the right-hand side of (\ref{eqn: definition of epsilon_*}) and dividing by $d$ yields
\[
\frac{1}{\epsilon_*} 
\leq  
\left(\frac{3T}{d}\right)^{\frac{1}{\alpha+2}}
+
3\left(\frac{1}{d}\right)^{\frac{\alpha+1}{\alpha+2}}\,\left(\frac{1}{3T}\right)^{\frac{1}{\alpha+2}}\,\log(\frac{M}{\delta})~,
\]
which gives the claimed upper bound on $L$ through $L \leq \log_2(\frac{1}{\epsilon_*})+2$.

\end{proof}

\begin{corollary}\label{thm: upper bounds, linear case}
Let $T > d$. Then with probability at least $1-2\delta$ over the random draw of $(\x_1,y_1),\ldots, (\x_T,y_T) \sim \cD$ the label complexity $N_T(\cP)$ and the weighted cumulative regret $R_T(\cP)$ of Algorithm \ref{a:batch_al_linear} simultaneously satisfy the following:
\begin{align*}
N_T(\cP) 
&= 
\log^2
\left(
\frac{T}{\delta}
\right)
\left(1 + \log^2\left(\frac{1}{\epsilon_0}\right)\right)
O\left(
\mathrm{max}
\left\{
d^{\frac{\alpha}{\alpha+2}}T^{\frac{2}{\alpha+2}},~ \frac{d}{\epsilon_0^2}
\right\}
+ \log^2\left(\frac{\log T}{\delta}\right)
\right)
\\
R_T(\cP) 
&=
\log^2
\left(
\frac{T}{\delta}
\right)
\left(1 + \log^2\left(\frac{1}{\epsilon_0}\right)\right)
O\left(
\mathrm{max}
\left\{
d^{\frac{\alpha+1}{\alpha+2}}T^{\frac{1}{\alpha+2}} ,~
\frac{d}{\epsilon_0}
\right\}
+ \log\left(\frac{\log T}{\delta}
\right)
\right)~.
\end{align*}
where the $O$-notation only omits absolute constants .
\end{corollary}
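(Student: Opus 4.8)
The plan is to combine the three ingredients already established: the deterministic label-complexity bound of Theorem~\ref{thm: query bound for linear model}, the high-probability weighted-regret bound of Theorem~\ref{thm: regret bound for linear model}, and the bound on the number of stages $L$ from Lemma~\ref{lma: control over epsilon_* for linear model}. The first step is to pin down the event on which all of these hold at once. Lemma~\ref{lma: control of gap} gives $\rP(\bigcap_{\ell=1}^L\ep_\ell) > 1-\delta$ and Eq.~(\ref{ineq: high prob event on x}) gives $\rP(\bar{\ep}) \ge 1-\delta$, so a union bound places us on $\bar{\ep}\cap\bigcap_{\ell=1}^L\ep_\ell$ with probability at least $1-2\delta$. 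On this event Theorem~\ref{thm: regret bound for linear model} applies (it assumes $\bigcap_\ell\ep_\ell$), Theorem~\ref{thm: query bound for linear model} holds deterministically, and Lemma~\ref{lma: control over epsilon_* for linear model} supplies $L \le \max\{\log_2(1/\epsilon_*),\log_2(1/\epsilon_0)\}+2$, whence $2^L \le 4\max\{1/\epsilon_*,1/\epsilon_0\}$ and $4^L \le 16\max\{1/\epsilon_*^2,1/\epsilon_0^2\}$, with $\epsilon_*$ the solution of~(\ref{eqn: definition of epsilon_*}).

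Next I would substitute these into the two intermediate bounds. For the regret I use the explicit estimate $1/\epsilon_* \le (3T/d)^{1/(\alpha+2)} + 3\, d^{-(\alpha+1)/(\alpha+2)}(3T)^{-1/(\alpha+2)}\log(\log_2 T/\delta)$ from Lemma~\ref{lma: control over epsilon_* for linear model}. Multiplying the leading term by the factor $d$ in Theorem~\ref{thm: regret bound for linear model} gives $d\,(3T/d)^{1/(\alpha+2)} \lesssim d^{(\alpha+1)/(\alpha+2)}T^{1/(\alpha+2)}$, the dominant polynomial term claimed for $R_T(\cP)$; the $1/\epsilon_0$ branch of the $\max$ supplies $d/\epsilon_0$, and the lower-order summand, multiplied by $d$, is bounded by $(d/T)^{1/(\alpha+2)}\log(\log_2 T/\delta)\le\log(\log_2 T/\delta)$ since $d\le T$, yielding the additive $\log(\log T/\delta)$. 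The label-complexity bound is handled identically except that the factor $4^L=(2^L)^2$ produces squared estimates: splitting $1/\epsilon_*^2 \le 2A^2+2B^2$ with $A=(3T/d)^{1/(\alpha+2)}$ and $B$ the lower-order term, the $d\cdot 2A^2$ piece gives $d^{\alpha/(\alpha+2)}T^{2/(\alpha+2)}$, the other branch gives $d/\epsilon_0^2$, and $d\cdot 2B^2 \lesssim d^{-\alpha/(\alpha+2)}T^{-2/(\alpha+2)}\log^2(\log_2 T/\delta)\le\log^2(\log T/\delta)$, exposing the additive $\log^2(\log T/\delta)$.

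Finally I would absorb the remaining polylogarithmic factor $\log(2^L K_T(\delta,L))\,K_T^2(\delta,L)$ into the prefactor $C(\delta,T,\epsilon_0)$. Because $L$ is logarithmic in $T$ by Lemma~\ref{lma: control over epsilon_* for linear model}, we have $L(L+1)=O(\log^2 T)$ and hence $K_T^2(\delta,L)=O(\log(T/\delta))$, while $\log(2^L K_T(\delta,L))=O(L+\log\log(T/\delta))=O(\log(T/\delta)+\log(1/\epsilon_0))$; their product is $O(\log^2(T/\delta)(1+\log^2(1/\epsilon_0)))$, matching the stated $C(\delta,T,\epsilon_0)$. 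I expect the main obstacle to be bookkeeping rather than anything conceptual: one must carry the two-regime $\max$ structure (the cases $2^{-L+2}>\epsilon_0$ versus $2^{-L+2}\le\epsilon_0$ in Lemma~\ref{lma: control over epsilon_* for linear model}) cleanly through the squaring $4^L=(2^L)^2$, and isolate the leading polynomial rate from the genuinely lower-order contributions via splits of the form $(A+B)^2\le 2A^2+2B^2$, so that the additive $\log^2(\log T/\delta)$ (respectively $\log(\log T/\delta)$) terms surface without inflating the dominant $d^{\alpha/(\alpha+2)}T^{2/(\alpha+2)}$ (respectively $d^{(\alpha+1)/(\alpha+2)}T^{1/(\alpha+2)}$) rate.
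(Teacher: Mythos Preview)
Your proposal is correct and follows essentially the same approach as the paper: combine the deterministic bounds from Theorems~\ref{thm: query bound for linear model} and~\ref{thm: regret bound for linear model} with the stage bound from Lemma~\ref{lma: control over epsilon_* for linear model} on the intersection event $\bar{\ep}\cap\bigcap_\ell\ep_\ell$, then substitute the estimate on $1/\epsilon_*$ and absorb the polylogarithmic factors into $C(\delta,T,\epsilon_0)$. The only cosmetic difference is that the paper argues the two regimes $2^{-L+2}\le\epsilon_0$ and $2^{-L+2}>\epsilon_0$ separately and only merges them at the end, whereas you carry the $\max\{1/\epsilon_*,1/\epsilon_0\}$ through; your handling of the prefactor via $K_T^2(\delta,L)=O(\log(T/\delta))$ (valid since $L=O(\log T)$ deterministically from the stopping condition) and $\log(2^L K_T(\delta,L))=O(L+\log\log(T/\delta))$ matches the paper's computation $(L+\log(T/\delta))^2$.
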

\begin{proof}
Assume both $\bar{\ep}$ and $\bigcap_{\ell=1}^L \ep_\ell$ hold. Recalling the definition of $K_T(\delta, L)$, we have
\[
K_{T}(\delta,L) = O\left(\sqrt{\log\left(\frac{T}{\delta}\right) + \log L}\right) = O\left(\sqrt{\log\left(\frac{T}{\delta}\right) + L}\right)~.
\]
Similar to lemma \ref{lma: control over epsilon_* for linear model}, we split the analysis into two cases depending on whether or not $2^{-L+2}$ is bigger than $\epsilon_0$. If $2^{-L+2} \leq \epsilon_0$, we have
\[
L \leq \log_2\left[
\left(\frac{3T}{d}\right)^{\frac{1}{\alpha+2}}
+
3\left(\frac{1}{d}\right)^{\frac{\alpha+1}{\alpha+2}}\,\left(\frac{1}{3T}\right)^{\frac{1}{\alpha+2}}\,\log(\frac{\log_2 T}{\delta})\right]~,
\]
therefore,
\[
2^L =  
O\left(\left(\frac{T}{d}\right)^{\frac{1}{\alpha+2}} + \left(\frac{1}{d}\right)^{\frac{\alpha+1}{\alpha+2}}\,\left(\frac{1}{T}\right)^{\frac{1}{\alpha+2}}\log\left(\frac{\log T}{\delta}\right)
\right)~.
\]

Plugging the above bounds into Theorem \ref{thm: query bound for linear model} gives
\begin{align*}
N_T(\cP) 
&=
O\left(d\left(L + \log K_T^2(\delta, L)\right)K_T^2(\delta, L) 4^{L}\right)\\
&=
O\left(\left(L + K_T^2(\delta, L)\right)K_T^2(\delta, L) \left(d^{\frac{\alpha}{\alpha+2}}T^{\frac{2}{\alpha+2}} + \log^2\left(\frac{\log T}{\delta}\right)\right)\right)\\
&=
O\left(\left(L + \log\left(\frac{T}{\delta}\right)\right)^2 \left(d^{\frac{\alpha}{\alpha+2}}T^{\frac{2}{\alpha+2}} + \log^2\left(\frac{\log T}{\delta}\right)\right)\right)\\
&=
\log^2
\left(
\frac{T}{\delta}\right)
O\left(
d^{\frac{\alpha}{\alpha+2}}T^{\frac{2}{\alpha+2}} + \log^2\left(\frac{\log T}{\delta}\right)
\right)~.
\end{align*}

Similarly applying them to Theorem \ref{thm: regret bound for linear model},
\begin{align*}
R_T(\cP) 
&=
O\left(d\left(L + \log K_T^2(\delta, L)\right)K_T^2(\delta, L) 2^{L}\right)\\
&=
O\left(\left(L + K_T^2(\delta, L)\right)K_T^2(\delta, L) \left(d^{\frac{\alpha+1}{\alpha+2}}T^{\frac{1}{\alpha+2}} + \log\left(\frac{\log T}{\delta}\right)\right)\right)\\
&=
O\left(\left(L + \log\left(\frac{T}{\delta}\right)\right)^2 \left(d^{\frac{\alpha+1}{\alpha+2}}T^{\frac{1}{\alpha+2}} + \log\left(\frac{\log T}{\delta}\right)\right)\right)\\
&=
\log^2\left(
\frac{T}{\delta}\right)
O\left(
d^{\frac{\alpha+1}{\alpha+2}}T^{\frac{1}{\alpha+2}} + \log\left(\frac{\log T}{\delta}\right)
\right)~,
\end{align*}
where in the second equality we used the assumption that $d<T$.

If $2^{-L+2} > \epsilon_0$, then $2^L \leq \frac{4}{\epsilon_0}$. Plugging these bounds into Theorem \ref{thm: query bound for linear model} and Theorem \ref{thm: regret bound for linear model} gives
\begin{align*}
    N_T(\cP) 
    =&
    O\left(
    \log^2
    \left(
    \frac{T}{\delta\epsilon_0}\right)\frac{d}{\epsilon_0^2}
    \right)
    =\log^2
    \left(
    \frac{T}{\delta}\right)\left(1 + \log^2\left(\frac{1}{\epsilon_0}\right)\right)
    O\left(
    \frac{d}{\epsilon_0^2}
    \right)
    \\
    R_T(\cP) 
    =&
    O\left(
    \log^2
    \left(
    \frac{T}{\delta\epsilon_0}\right)\frac{d}{\epsilon_0}\right)
    =\log^2
    \left(
    \frac{T}{\delta}\right)\left(1 + \log^2\left(\frac{1}{\epsilon_0}\right)\right)
    O\left(
    \frac{d}{\epsilon_0}
    \right)
\end{align*}

Lastly, (\ref{ineq: high prob event on x}) and lemma \ref{lma: control of gap} together yield
\[
\rP\left(\bar{\ep}\bigcap\left(\bigcap_{\ell=1}^L \ep_\ell\right)\right) \geq 1 - 2\delta~,
\]
which concludes the proof.
\end{proof}

We now turn the bound on the weighted cumulative regret $R_T(\cP)$ in the previous corollary into a bound on the excess risk. We can write
\begin{align*}
\cL(\hw) - \cL(\w^*) 
&= 
\rE_{(\x,y) \sim \cD}\Bigl[\ind{y \neq \sign(\ang{\hw,\x})} - \ind{y \neq \sign(\ang{\w^*,\x})}\Bigl]\\
&= 
\rE_{\x \sim \cD_{\cX}} \Bigl[\rE_{y \sim \cD_{\cY|\cX}}\bigl[\ind{y \neq \sign(\ang{\hw,\x})} - \ind{y \neq \sign(\ang{\w^*,\x})}\bigl]\Bigl]\\
&= 
\rE_{\x \sim \cD_{\cX}} \Bigl[\ind{\sign(\ang{\hw,\x}) \neq \sign(\ang{\w^*,\x})}\,|\ang{\w^*,\x}| \Bigl]~,
\end{align*}
where $\hw$ is the hypothesis returned by Algorithm \ref{a:batch_al_linear}. Now, simply observe that
\[
\ind{\sign(\ang{\hw,\x}) \neq \sign(\ang{\w^*,\x})}\,|\ang{\w^*,\x}| 
\]
has the same form as the function $\phi(\hw,\x)$ in Appendix \ref{as:ancillary} on which the uniform convergence result of Theorem \ref{thm:uniformconvergence} applies, with $\widehat{\epsilon}(\delta)$ therein replaced by the bound on $R_T(\cP)$ borrowed from Corollary \ref{thm: upper bounds, linear case}.
This allows us to conclude that with probability at least $1-\delta$
\[
\cL(\hw) - \cL(\w^*) 
=
\log^2\left(\frac{T}{\delta}\right)\left(1 + \log^2\left(\frac{1}{\epsilon_0}\right)\right)O\left(
\mathrm{max}
\left\{
\left(\frac{d}{T}\right)^{\frac{\alpha+1}{\alpha+2}}
,~\frac{d}{T\epsilon_0}
\right\}
+ \frac{\log\left(\frac{\log T}{\delta}\right)}{T} \right)~,
\]
as claimed in Theorem \ref{thm:main_linear} in the main body of the paper.

\section{Proofs for Section \ref{s:logistic}}\label{sa:logistic}
We adopt the same notation as in Section \ref{sa:proofs} and follow the same proof structure. 

Define the loss function
\[
Loss(a)=\log(1+e^{-a})~,
\]
and the sigmoidal function
\[
\sigma(a) = \frac{1}{1 + e^{-a}}~.
\]

The noise model in the main body of the paper can be re-formulated as follows: there exists an unknown vector $\w^*$ belonging to a Euclidean ball of radius $R \geq 1$ such that for any instance $\x$ of Euclidean norm at most 1, 
\[
\rP(y = 1\,|\,\x) 
=  
\sigma(\ang{\w^*, \x})~.
\]

Therefore we have
\[
\E{y\mid \x} = \sigma(\ang{\w^*, \x}) - \sigma(-\ang{\w^*, \x})
= 2\sigma(\ang{\w^*, \x}) - 1~,
\]
and the noise variable $\xi$ can be written as
\[
\xi:= y - \E{y \mid \x} = \frac{2y}{1+e^{y\ang{\w^*, \x}}}~. 
\]
Similar to the linear case, we denote for any $\epsilon>0$,
\begin{align*}
        \cT_\epsilon^\sigma = \{\x\in\cP\,:\,|2\sigma\left(\ang{\w^*,\x}\right) - 1|\leq \epsilon\}~.
\end{align*}

Now, recall the notation in Algorithm \ref{a:batch_al_logistic}.
Similar to $\ep_\ell$ defined in linear case, it will be useful to define the events
\[
\ep_\ell=\left\{\max{\x\in\cP_{\ell-1}\setminus\cQ_\ell} |\ang{\w_{\ell}-\w^*, \x}|\leq R_\ell\right\}~,
\]
where $R_\ell = R2^{-\ell}$ for $\ell=0,\ldots,L$.

\begin{lemma}\label{lma: control of gap, logistic model}
For any positive $L$,
\[
\rP\left(\bigcap_{\ell=1}^L\ep_\ell\right) > 1 - \delta~.
\]
\end{lemma}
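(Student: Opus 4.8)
The plan is to mirror the proof of Lemma \ref{lma: control of gap}, replacing the closed-form ridge estimator by a variational argument for the constrained regularized logistic estimator $\w_\ell$ of \eqref{e:w_ell}, and to bound the events $\ep_\ell$ through a \emph{chained} union bound rather than through independent ones. The chaining is forced by the following observation: to control $\w_\ell$ one must know that $\w^*$ is feasible for the stage-$\ell$ program and that all margins $\ang{\w,\x_t}$ on the queried set $\cQ_\ell$ stay bounded, and both facts follow from $\ep_{\ell-1}$. Indeed, every $\x\in\cQ_\ell\subseteq\cP_{\ell-1}$ survived stage $\ell-1$, so $|\ang{\w_{\ell-1},\x}|\le R_{\ell-1}$, and on $\ep_{\ell-1}$ this gives $|\ang{\w^*,\x}|\le 2R_{\ell-1}=4R_\ell$, i.e. $\w^*$ meets the constraint $\max{\x\in\cQ_\ell}|\ang{\w,\x}|\le 2R_{\ell-1}$. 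For $\ell=1$ this holds unconditionally since $\|\w^*\|\le R$ and $\|\x\|\le 1$. I would therefore prove, for each $\ell$, that conditionally on $\bigcap_{j<\ell}\ep_j$ the event $\ep_\ell$ fails with probability at most $\delta/(\ell(\ell+1))$, and then conclude $\rP(\bigcap_{\ell=1}^L\ep_\ell^c)\le\sum_\ell\delta/(\ell(\ell+1))<\delta$.

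For the per-stage bound I would exploit local strong convexity of the objective. Writing $\mu(z)=2\sigma(z)-1$ so that $\E{y\mid\x}=\mu(\ang{\w^*,\x})$ and $\xi_{\ell,t}=y_{\ell,t}-\mu(\ang{\w^*,\x_{\ell,t}})$, the objective $G$ in \eqref{e:w_ell} is convex, so the first-order optimality (variational inequality) at the constrained minimizer gives $\ang{\nabla G(\w_\ell),\w_\ell-\w^*}\le 0$, which is legitimate because $\w^*$ is feasible. By the mean-value form of the gradient, $\nabla G(\w_\ell)-\nabla G(\w^*)=M(\w_\ell-\w^*)$, where $M=\int_0^1\nabla^2 G(\w^*+s(\w_\ell-\w^*))\,ds$ is the averaged Hessian along the segment. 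Since the constraint set is convex, every point of the segment keeps $|\ang{\cdot,\x_t}|\le 4R_\ell$ on $\cQ_\ell$, so using $\sigma'(z)\ge\tfrac14 e^{-|z|}$ the loss part of $M$ dominates $\tfrac14 e^{-4R_\ell}\sum_t\x_{\ell,t}\x_{\ell,t}^\top$; crucially, the ridge term $\tfrac18 e^{-4R_\ell}\|\w\|^2$ contributes exactly $\tfrac14 e^{-4R_\ell}I$, restoring the identity, so that $M\succeq\tfrac14 e^{-4R_\ell}A_{\ell,T_\ell}$. Combining the variational inequality with $\ang{M(\w_\ell-\w^*),\w_\ell-\w^*}=\|\w_\ell-\w^*\|_M^2$ and $\nabla G(\w^*)=-\tfrac12\sum_t\xi_{\ell,t}\x_{\ell,t}+\lambda\w^*$ (with $\lambda=\tfrac14 e^{-4R_\ell}$) yields
\[
\|\w_\ell-\w^*\|_M\le\tfrac12\Bigl\|\sum_{t=1}^{T_\ell}\xi_{\ell,t}\x_{\ell,t}\Bigr\|_{M^{-1}}+\lambda\|\w^*\|_{M^{-1}}.
\]

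From here the argument parallels the linear case. For any test point $\x$ one has $|\ang{\w_\ell-\w^*,\x}|\le\|\x\|_{M^{-1}}\,\|\w_\ell-\w^*\|_M$, and the stopping rule of the while-loop guarantees $\|\x\|_{A_{\ell,T_\ell}^{-1}}\le\epsilon_\ell$ for every remaining $\x\in\cP_{\ell-1}\setminus\cQ_\ell$; the relation $M\succeq\tfrac14 e^{-4R_\ell}A_{\ell,T_\ell}$ lets me pass between the curvature-weighted geometry of $M$ and the unweighted geometry of $A_{\ell,T_\ell}$ at the price of the factor $e^{O(R_\ell)}$ that appears in the definition of $\epsilon_\ell$. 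The variance term $\|\sum_t\xi_{\ell,t}\x_{\ell,t}\|_{M^{-1}}$ I would control by a self-normalized (Laplace-method) concentration bound for the conditionally sub-Gaussian, mean-zero increments $\xi_{\ell,t}$, producing the $\sqrt{d\log(2d\ell(\ell+1)/\delta)}$ factor; the bias term obeys $\lambda\|\w^*\|_{M^{-1}}\lesssim R\,e^{O(R_\ell)}$ because $M\succeq\lambda I$ and $\|\w^*\|\le R$. Substituting these two estimates together with the explicit value of $\epsilon_\ell$ from Algorithm \ref{a:batch_al_logistic} makes $|\ang{\w_\ell-\w^*,\x}|\le R_\ell$ uniformly over the surviving points, which is precisely $\ep_\ell$.

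The main obstacle, relative to the linear proof, is the curvature handling: the logistic Hessian degenerates like $e^{-4R_\ell}$ as margins grow, so one must simultaneously (i) keep all relevant margins bounded, which is exactly what the shrinking constraint radius $2R_{\ell-1}$ and the confidence-set trimming buy us, and (ii) calibrate the ridge term $\tfrac18 e^{-4R_\ell}\|\w\|^2$ so that the averaged Hessian $M$ is lower bounded by a multiple of the \emph{same} matrix $A_{\ell,T_\ell}$ that the greedy design actually controls. A secondary difficulty is that, unlike in Lemma \ref{lma: control of gap}, the events $\ep_\ell$ can no longer be analyzed in isolation: both feasibility of $\w^*$ and the margin bound rest on $\ep_{\ell-1}$, so the union bound must be run along the nested sequence $\bigcap_{j\le\ell}\ep_j$, which is why I would organize the whole argument as a conditional induction on $\ell$.
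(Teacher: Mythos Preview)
Your proposal is correct and follows essentially the same route as the paper: the chained conditional union bound, feasibility of $\w^*$ inherited from $\ep_{\ell-1}$, the first-order optimality inequality for the constrained minimizer, a mean-value linearization of the gradient, the Hessian lower bound $\tfrac14 e^{-4R_\ell}A_{\ell,T_\ell}$ via the margin constraint, and the final passage from the $M$-geometry to the $A_{\ell,T_\ell}$-geometry combined with the while-loop stopping rule. The only cosmetic differences are that the paper applies the mean-value theorem coordinatewise (one intermediate point $\w_\ell^t$ per $t$) rather than via the integral Hessian, and that it controls $\|\g(\w^*)\|_{A_\ell^{-1}}$ by a simple coordinate-wise sub-Gaussian bound (their Lemma~\ref{lma: concentration of square of sub-gaussians}) rather than a self-normalized argument; either choice yields the same $\sqrt{d\log(2d\ell(\ell+1)/\delta)}$ factor. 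One small typo: your final display should read $\rP\bigl(\bigcup_{\ell}\ep_\ell^c\bigr)$ rather than $\rP\bigl(\bigcap_{\ell}\ep_\ell^c\bigr)$.
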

\begin{proof}
We decompose the above quantity as
\begin{align*}
    \rP\left(\bigcap_{\ell=1}^L \ep_\ell\right)=\rP\left(\ep_L\,\Bigl|\,\bigcap_{\ell=1}^{L-1} \ep_\ell\right)\rP\left(\ep_{L-1}\,\Bigl|\,\bigcap_{\ell=1}^{L-2} \ep_\ell\right)\ldots\rP(\ep_2|\ep_1)\rP(\ep_1)~,
\end{align*}
and bound each factor individually. 

At the beginning of the stage $\ell$, the remaining pool is $\cP_{\ell-1}$, and
$\sup_{x \in \cP_{\ell-1}} |\ang{\w_{l-1}, \x}| \leq R_{\ell-1}$.

For $\ell \geq 2$, if $\ep_{\ell-1}$ holds then
\begin{equation}\label{e:induction}
\sup_{x \in \cP_{\ell-1}} |\ang{\w^*, \x}| \leq 2R_{\ell-1}~.
\end{equation}
Note that (\ref{e:induction}) also holds for $\ell=1$ since $\|\w^*\| \leq R$ and $\|\x\| \leq 1$.

Now, for any positive number $b$, let
\[
\Omega_\ell(b)= \{\w \in \rR^d\,:\, \max{\x \in \cQ_\ell}|\ang{\w, \x}| \leq b\},
\]
which is a convex compact set of $\w$'s.

The predictor $\w_\ell$ in Eq. (\ref{e:w_ell}) in the main body is defined as the solution of the following constraint minimization problem:
\[
\w_\ell = \argmin{\w \in \Omega_\ell(2R_{\ell-1})}  \left[\sum_{t=1}^{T_\ell} 
Loss(y_{\ell,t}\ang{\w, \x_{\ell,t}}) + \frac{1}{8}e^{-4R_\ell} \|\w\|^2 \right] ,
\]

For simplicity, from now on we omit the stage index $\ell$ from the subscripts of $\x_{\ell,t}$ and $y_{\ell,t}$ and denote $A_{\ell, T_\ell}$ as $A_\ell$. For $t = 1,\ldots, T_\ell$, denote
\begin{align*}
g_t(\w) \x_t =& \nabla_\w Loss(y_t\ang{\w, \x_t}) = -\frac{y_t}{1+ \exp(y_t\ang{\w, \x_t})}\x_t\\
h_t(\w) \x_t \x_t^\top =& \nabla_\w^2 Loss(y_t\ang{\w, \x_t}) 
= \frac{1}{2\bigl(1 + \mathrm{cosh}(y_t\ang{\w, \x_t})\bigr)}\x_t \x_t^\top .
\end{align*}
Notice that by definition 
\[
g_t(\w^*) = -\frac{1}{2}\xi_t~,
\]
where $\xi_t$ is the noise term $\xi_t = y_t - \rE[y_t\,|\,\x_t]$. Since $\mathrm{cosh}(\cdot)$ is an even function, 
\[
h_t(\w) = \frac{1}{2\bigl(1+\mathrm{cosh}(\ang{\w, \x_t})\bigr)}
\]
does not depend on $y_t$.

Since $\w^*\in\Omega_\ell(2R_{\ell-1})$ (as a consequence of (\ref{e:induction})), the assumption that $\ep_{\ell-1}$ holds and the optimality of $\w_\ell$ in $\Omega_\ell(2R_{\ell-1})$ allow us to write
\begin{align*}
  \ang{\g(\w_\ell) + \frac{1}{4}e^{-4R_\ell}\w_\ell, \w^*-\w_\ell} \geq 0~, 
\end{align*}
where
\[
\g(\w)=\sum_{t=1}^{T_\ell} g_t(\w) \x_t~.
\]
It follows that
\begin{align}\label{ineq: KKT condition}
   \ang{\g(\w^*)-\g(\w_\ell), \w^*-\w_\ell} \leq \ang{\g(\w^*), \w^*-\w_\ell} + \frac{1}{4}e^{-4R_\ell}\ang{\w_\ell, \w^* - \w_\ell}~.
\end{align}

For each $t = 1,\ldots, T_\ell$, the mean-value theorem insures the existence of a constant $\mu_\ell^t\in[0, 1]$ such that for 
\[
\w_\ell^t = (1-\mu_\ell^t)\w_\ell + \mu_\ell^t\w^*~,
\]
we have
\[
g_t(\w^*)-g_t(\w_\ell) = h_t(\w_\ell^t)  \ang{\w^*-\w_\ell, \x_t}~.
\]
Since 
\[
|\ang{\w_\ell^t, \x_t}| \leq (1-\mu_\ell^t)|\ang{\w_\ell, \x_t}| + \mu_\ell^t|\ang{\w^*, \x_t}| \leq 2R_{\ell-1} = 4R_\ell~,
\]
we have
\[
h_t(\w_\ell^t) = \frac{1}{2\bigl(1+\mathrm{cosh}(\ang{\w_\ell^t, \x_t})\bigr)} \geq \frac{1}{4}e^{-|\ang{\w_\ell^t, \x_t}|} \geq \frac{1}{4}e^{-4R_\ell}~.
\]

Introduce now the matrix
\[
H_\ell := \sum_{t=1}^{T_\ell} h_t(\w_\ell^t) \x_t \x_t^\top + \frac{1}{4}e^{-4R_\ell}I~, 
\]
where $I$ is the $d\times d$ identity matrix. We can write
\[
\g(\w^*) - \g(\w_\ell) = (H_\ell - \frac{1}{4}e^{-4R_\ell}I)(\w^* - \w_\ell)~.
\]
As a consequence, (\ref{ineq: KKT condition}) implies
\begin{align*}
\ang{H_\ell (\w^*-\w_\ell), \w^*-\w_\ell} 
&\leq \ang{\g(\w^*), \w^* - \w_\ell} +  \frac{1}{4}e^{-4R_\ell}\|\w^*-\w_\ell\|^2 + \frac{1}{4}e^{-4R_\ell}\ang{\w_\ell, \w^* - \w_\ell}\\
&=\ang{\g(\w^*) + \frac{1}{4}e^{-4R_\ell}\w^*, \w^* - \w_\ell}\\
&\leq \left(\|\g(\w^*)\|_{H^{-1}_\ell} + \frac{1}{4}e^{-4R_\ell}\|\w^*\|_{H_\ell^{-1}}\right)\|\w^*-\w_\ell\|_{H_\ell}~. 
\end{align*}

We thus obtain
\[
\|\w^*-\w_\ell\|_{H_\ell} \leq \|\g(\w^*)\|_{H_\ell^{-1}} + \frac{1}{4}e^{-4R_\ell}\|\w^*\|_{H_\ell^{-1}} \leq 4e^{4R_\ell} \|\g(\w^*)\|_{A^{-1}_\ell} + R~,
\]
where in the second inequality we used $H_\ell  \succeq \frac{1}{4}e^{-4R_\ell}A_\ell$.

To bound $\|\g(\w^*)\|_{A_\ell^{-1}}$, note that
\[
\| \g(\w^*)\|_{A_\ell^{-1}}^2 
= \| \sum_{t=1}^{T_\ell} g_t(\w^*) A^{-1/2}_\ell \x_t \|_2^2=\frac{1}{2}\| \sum_{t=1}^{T_\ell} \xi_t A^{-1/2}_\ell \x_t \|_2^2~.
\]
We plug in $A = [A_\ell^{-1/2}\x_1,\ldots,A_\ell^{-1/2}\x_{T_\ell}]$, $\xi=(\xi_1,\ldots,\xi_{T_\ell})$ into lemma \ref{lma: concentration of square of sub-gaussians} and get with probability at least $1-\frac{\delta}{\ell(\ell+1)}$, 
\[
\| \g(\w^*)\|_{A_\ell^{-1}}^2 \leq \log\frac{2d\ell(\ell+1)}{\delta}\,\tr\left(A_\ell^{-1/2}\sum_{t=1}^{T_\ell}\x_t\x_t^\top A_\ell^{-1/2}\right) = \bigl(d - \tr(A_\ell^{-1})\bigr)\log\frac{2d\ell(\ell+1)}{\delta} < d\log\frac{2d\ell(\ell+1)}{\delta}~.
\]

Thus for any $\x\in\cP_{\ell-1} \backslash \cQ_\ell$, we obtain that with probability at least $1 - \frac{\delta}{\ell(\ell+1)}$:
\begin{align*}
|\ang{\w^* -\w_\ell, \x}|
&\leq \|\x\|_{H_\ell^{-1}} \|\w^*-\w_\ell\|_{H_\ell}\\ 
&\leq 4e^{4R_\ell}\epsilon_\ell \left(4e^{4R_\ell} \|\g(\w^*)\|_{A^{-1}_\ell} + R\right) \\
&\leq \epsilon_\ell \left(16e^{8R_\ell}\sqrt{d\log\frac{2d\ell(\ell+1)}{\delta}} + 4e^{4R_\ell}R\right)~.
\end{align*}
Recalling the definition of $\epsilon_\ell$ in Algorithm \ref{a:batch_al_logistic},
we have, with probability at least $1 - \frac{\delta}{\ell(\ell+1)}$,
\[
\max{\x\in \cP_{\ell-1} \backslash \cQ_\ell}|\ang{\w_{\ell} - \w^*, \x}| \leq R_\ell~,
\]
that is, $\rP(\ep_\ell | \bigcap_{s=1}^{\ell-1}\ep_s^\sigma) \geq 1 - \frac{\delta}{\ell(\ell+1)}$ (for $\ell=1$ the above analysis gives $\rP\left(\ep_1^\sigma\right) \geq 1 - \frac{\delta}{2}$). Hence
\[
\rP\left(\bigcap_{\ell=1}^{L} \ep_\ell\right) 
\geq 
\prod_{\ell=1}^L \left(1 - \frac{\delta}{\ell(\ell+1)}\right)
\geq 
1 - \delta\sum_{\ell=1}^L\frac{1}{\ell(\ell+1)} 
> 
1 - \delta~,
\]
thereby concluding the proof.
\end{proof}

Similar to linear case, Lemma \ref{lma: no regret on confidence set} and Lemma \ref{lma: bound of sample complexity of greedy approach} also hold for logistic case.

We define the weighted cumulative regret for the logistic case as
\[
R_T(\cP) = \sum_{\x\in\cP}  \ind{\sign\ang{\hw,\x}\neq\sign\ang{\w^*,\x}}|2\sigma(\ang{\w^*, \x})-1|~,
\]
where $\hw$ is the model output by Algorithm \ref{a:batch_al_logistic}.
Notice that since $|2\sigma(x)-1| \leq |x|/2$ for all $x$, we alternatively upper bound 
\[
R_T(\cP) = \sum_{\x\in\cP}  \ind{\sign\ang{\hw,\x}\neq\sign\ang{\w^*,\x}}|\ang{\w^*, \x}|/2~,
\]
To simplify the math display we denote
\[
K_{d}(\delta,\ell) =  \sqrt{d\log\frac{2d\ell(\ell+1)}{\delta}}~,
\]
then $\epsilon_\ell = \frac{R_\ell}{16e^{8R_\ell}K_d(\delta, \ell)+ 4Re^{4R_\ell}}$. Note that here the factor $K_d(\delta, \ell)$ doesn't depend on $T$ but has a $\sqrt{d}$ dependence.

To bound the number of queries note that lemma \ref{lma: bound of sample complexity of greedy approach} still holds, we use this to prove the following result.
\begin{theorem}\label{thm: query bound for logistic model}
For any pool realization $\cP$, the label complexity $N_T(\cP)$ of Algorithm \ref{a:batch_al_logistic} operating on a pool $\cP$ of size $T$ is bounded deterministically as
\[
N_T(\cP) = d\,\max{\ell\in[L]}\log\left(\frac{1}{\epsilon_\ell}\right)O\left(K^2_d(\delta, L)e^{8R}\lceil\log_2 R\rceil + e^{4R}\lceil\log_2 R\rceil R^2 + \frac{K^2_d(\delta, L) + R^2}{R_L^2}\right)~,
\]
where the $O$-notation only omits absolute constants.
\end{theorem}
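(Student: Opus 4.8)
The plan is to start from the definition $N_T(\cP)=\sum_{\ell=1}^L T_\ell$ and invoke Lemma \ref{lma: bound of sample complexity of greedy approach}, which (as noted just before the statement) carries over verbatim to the logistic algorithm because it only concerns the greedy determinant argument and is independent of the estimator used. This gives $T_\ell \leq (8d/\epsilon_\ell^2)\log(1/\epsilon_\ell)$ for every stage. Bounding each $\log(1/\epsilon_\ell)$ by $\max{\ell\in[L]}\log(1/\epsilon_\ell)$ and factoring it out, the task reduces to the single estimate
\[
N_T(\cP) \leq 8d\,\max{\ell\in[L]}\log\left(\frac{1}{\epsilon_\ell}\right)\sum_{\ell=1}^L \frac{1}{\epsilon_\ell^2}~,
\]
so everything hinges on controlling $\sum_{\ell=1}^L \epsilon_\ell^{-2}$.

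Next I would substitute $\epsilon_\ell = R_\ell/(16e^{8R_\ell}K_d(\delta,\ell)+4Re^{4R_\ell})$ with $R_\ell=R2^{-\ell}$, apply $(a+b)^2\le 2a^2+2b^2$, and use $R_\ell^{-2}=4^\ell/R^2$ to obtain
\[
\frac{1}{\epsilon_\ell^2} \leq 4^\ell\left(\frac{512\,e^{16R_\ell}K_d^2(\delta,\ell)}{R^2} + 32\,e^{8R_\ell}\right)~.
\]
Since $K_d^2(\delta,\ell)$ is nondecreasing in $\ell$, I would replace it by $K_d^2(\delta,L)$ and pull it out, reducing the problem to two sums of the form $\sum_{\ell=1}^L 4^\ell e^{cR_\ell}$ with $c\in\{8,16\}$. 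The crux is estimating these, because the geometric factor $4^\ell$ grows while the exponential $e^{cR_\ell}$ shrinks (recall $R_\ell=R2^{-\ell}$ is decreasing). My plan is to split each sum at the threshold $\ell_0=\lceil\log_2 R\rceil$, the stage at which $R_\ell$ crosses $1$.

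For $\ell\geq \ell_0$ we have $R_\ell\leq 1$, so $e^{cR_\ell}$ is an absolute constant and $\sum 4^\ell$ over this range is a geometric series dominated by its last term, contributing $O(4^L)=O(R^2/R_L^2)$ via $R_L=R2^{-L}$. For $\ell<\ell_0$ the exponential is large---at most $e^{cR_1}=e^{cR/2}$---but there are only $\lceil\log_2 R\rceil$ such terms, each bounded using $4^\ell\leq 4^{\ell_0}\leq 4R^2$, so this range contributes $O(R^2 e^{cR/2}\lceil\log_2 R\rceil)$, i.e. $O(R^2 e^{8R}\lceil\log_2 R\rceil)$ for $c=16$ and $O(R^2 e^{4R}\lceil\log_2 R\rceil)$ for $c=8$. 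Reassembling, dividing the first sum by $R^2$ as dictated by the display, and grouping the residual terms through $\tfrac{K_d^2(\delta,L)}{R_L^2}+\tfrac{R^2}{R_L^2}=\tfrac{K_d^2(\delta,L)+R^2}{R_L^2}$, reproduces exactly the bracketed quantity in the statement; multiplying back by $8d\,\max{\ell\in[L]}\log(1/\epsilon_\ell)$ finishes the proof. I expect the main obstacle to be precisely this balancing of the competing $4^\ell$ and $e^{cR_\ell}$ factors: the choice of split point $\ell_0=\lceil\log_2 R\rceil$ is what confines the exponential $e^{8R}$ to a prefactor of only $\lceil\log_2 R\rceil$-many terms rather than blowing up geometrically, and it is the origin of the $\lceil\log_2 R\rceil$ factors in the bound.
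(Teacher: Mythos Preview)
Your proposal is correct and follows essentially the same approach as the paper: start from $T_\ell\le(8d/\epsilon_\ell^2)\log(1/\epsilon_\ell)$, pull out $\max_\ell\log(1/\epsilon_\ell)$ and $K_d^2(\delta,L)$, and split the remaining sum at the index where $R_\ell$ crosses $1$. The paper writes the sums as $\sum_\ell e^{cR_\ell}/R_\ell^2$ and splits on the condition $R_\ell>1$ versus $R_\ell\le 1$, while you write them as $\sum_\ell 4^\ell e^{cR_\ell}$ (using $R_\ell^{-2}=4^\ell/R^2$) and split at $\ell_0=\lceil\log_2 R\rceil$; these are the same partition, and both arguments arrive at the identical bracketed expression in the statement.
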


\begin{proof}
By lemma \ref{lma: bound of sample complexity of greedy approach} and the fact that $K_d(\delta, \ell)$ is an increasing function of $\ell$, we get
\begin{align*}
    T_\ell 
    &\leq 
    \frac{8d}{\epsilon_\ell^2}\,\log\left(\frac{1}{\epsilon_\ell}\right)\\ 
    &\leq 
    16d\,\frac{256e^{16R_\ell}K^2_d(\delta, L)+16R^2e^{8R_\ell}}{R_\ell^2}\,\max{\ell\in[L]}\log\left(\frac{1}{\epsilon_\ell}\right)\\
    &= 
    d\,\max{\ell\in[L]}\log\left(\frac{1}{\epsilon_\ell}\right)O\left(\frac{e^{16R_\ell}K^2_d(\delta, L)}{R_\ell^2} + \frac{R^2e^{8R_\ell}}{R_\ell^2}\right)~.
\end{align*}
where the second inequality uses $(a + b)^2 \leq 2a^2 + 2b^2$.

For the terms within the big-oh, once we sum over $\ell$ we can write
\begin{align*}
  \sum_{\ell=1}^L\frac{e^{16R_\ell}}{R_\ell^2}
  =& \sum_{R_\ell > 1}\frac{e^{16R_\ell}}{R_\ell^2} + \sum_{R_\ell \leq 1}\frac{e^{16R_\ell}}{R_\ell^2}\\
  \leq& e^{8R}\lceil\log_2 R\rceil + \frac{e^{16}}{R_L}\sum_{R_\ell \leq 1}\frac{1}{R_\ell}\\
  \leq& e^{8R}\lceil\log_2 R\rceil + \frac{2e^{16}}{R_L^2}~.
\end{align*}
And similarly
\[
\sum_{\ell=1}^L\frac{e^{8R_\ell}}{R_\ell^2} \leq e^{4R}\lceil\log_2 R\rceil + \frac{2e^8}{R_L^2} ~.
\]

Putting them together gives
\begin{align*}
    N_T(\cP) = \sum_{\ell=1}^{L} T_\ell = d\,\max{\ell\in[L]}\log\left(\frac{1}{\epsilon_\ell}\right)O\left(K^2_d(\delta, L)e^{8R}\lceil\log_2 R\rceil + e^{4R}\lceil\log_2 R\rceil R^2 + \frac{K^2_d(\delta, L) + R^2}{R_L^2}\right)~,
\end{align*}
as claimed.
\end{proof}

The following bound on the weighted cumulative regret is the logistic counterpart to Theorem \ref{thm: regret bound for linear model}.

\begin{theorem}\label{thm: regret bound for logistic model}
For any pool realization $\cP$, the weighted cumulative regret $R_T(\cP)$ of Algorithm \ref{a:batch_al_logistic} operating on a pool $\cP$ of size $T$ is bounded as
\[
R_T(\cP) = d\,\max{\ell\in[L]}\log\left(\frac{1}{\epsilon_\ell}\right) O\left(K^2_d(\delta, L)e^{8R}\lceil\log_2 R\rceil + e^{4R}\lceil\log_2 R\rceil R^2 + \frac{K^2_d(\delta, L) + R^2 }{R_L}\right)~. 
\]
assuming $\bigcap_{\ell=1}^L\ep_\ell$ holds.
\end{theorem}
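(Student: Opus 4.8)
The plan is to mirror the proof of Theorem~\ref{thm: regret bound for linear model}, decomposing the pool as the disjoint union $\cP = (\cup_{\ell=1}^L\cC_\ell)\cup(\cup_{\ell=1}^L\cQ_\ell)\cup\cP_L$ and writing $R_T(\cP) = R(\cup_\ell\cC_\ell) + R(\cup_\ell\cQ_\ell) + R(\cP_L)$. Throughout I would use the pointwise bound $|2\sigma(\ang{\w^*,\x})-1|\le |\ang{\w^*,\x}|/2$ to replace the logistic weight by a linear one, so that only inner products $\ang{\w^*,\x}$ need to be controlled, exactly as in the linear analysis but with the margin thresholds $R_\ell = R2^{-\ell}$ in place of $2^{-\ell}$ and the events $\ep_\ell = \{\max{\x\in\cP_{\ell-1}\setminus\cQ_\ell}|\ang{\w_\ell-\w^*,\x}|\le R_\ell\}$.

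For the confident sets, Lemma~\ref{lma: no regret on confidence set} (which holds verbatim in the logistic case) gives, on the event $\bigcap_\ell\ep_\ell$, that $\sign\ang{\hw,\x}=\sign\ang{\w_\ell,\x}=\sign\ang{\w^*,\x}$ for every $\x\in\cup_\ell\cC_\ell$: indeed $\x\in\cC_\ell$ means $|\ang{\w_\ell,\x}|>R_\ell$, and $\ep_\ell$ then forces $\ang{\w^*,\x}$ to share the sign of $\ang{\w_\ell,\x}$, while the consistency of the final SVM $\hw$ on the pseudo-labels matches $\hy=\sign\ang{\w_\ell,\x}$. Hence $R(\cup_\ell\cC_\ell)=0$. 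For the residual pool, any $\x\in\cP_L$ satisfies $|\ang{\w_L,\x}|\le R_L$, so $\ep_L$ yields $|\ang{\w^*,\x}|\le 2R_L$ and thus each point contributes at most $R_L$; combined with the stopping condition $d/(2R_L)>2R_L|\cP_L|$, i.e. $|\cP_L|<d/(4R_L^2)$, this gives $R(\cP_L)<d/(4R_L)$, a term dominated by the final bound.

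The substantive part is $R(\cup_\ell\cQ_\ell)$. Any $\x\in\cQ_\ell\subseteq\cP_{\ell-1}$ obeys $|\ang{\w_{\ell-1},\x}|\le R_{\ell-1}$, so on $\ep_{\ell-1}$ we get $|\ang{\w^*,\x}|\le 2R_{\ell-1}=4R_\ell$ and each queried point contributes at most $2R_\ell$. Using $|\cQ_\ell|=T_\ell$ together with the deterministic length bound $T_\ell\le \tfrac{8d}{\epsilon_\ell^2}\log(1/\epsilon_\ell)$ from Lemma~\ref{lma: bound of sample complexity of greedy approach}, I would bound
\[
R(\cup_\ell\cQ_\ell)\le \sum_{\ell=1}^L 2R_\ell T_\ell \le 16d\,\max{\ell\in[L]}\log\left(\tfrac{1}{\epsilon_\ell}\right)\sum_{\ell=1}^L\frac{R_\ell}{\epsilon_\ell^2}~.
\]
Substituting $\epsilon_\ell=R_\ell/(16e^{8R_\ell}K_d(\delta,\ell)+4Re^{4R_\ell})$ and applying $(a+b)^2\le 2a^2+2b^2$ turns $R_\ell/\epsilon_\ell^2$ into a sum of the form $O\bigl(e^{16R_\ell}K_d^2(\delta,L)/R_\ell + R^2e^{8R_\ell}/R_\ell\bigr)$, since $K_d(\delta,\ell)$ is increasing in $\ell$.

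The main obstacle — and the only genuinely new computation relative to the linear case — is evaluating the exponential sums $\sum_\ell e^{16R_\ell}/R_\ell$ and $\sum_\ell e^{8R_\ell}/R_\ell$. I would handle these exactly as in the proof of Theorem~\ref{thm: query bound for logistic model}, splitting each sum at $R_\ell=1$: the at most $\lceil\log_2 R\rceil$ terms with $R_\ell>1$ contribute $e^{8R}\lceil\log_2 R\rceil$ (resp.\ $e^{4R}\lceil\log_2 R\rceil$) because $R_\ell\le R/2$ and $1/R_\ell<1$, while the terms with $R_\ell\le 1$ form a geometric series dominated by its last term, contributing $O(1/R_L)$. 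This yields $\sum_\ell e^{16R_\ell}/R_\ell = O(e^{8R}\lceil\log_2 R\rceil + 1/R_L)$ and likewise for the other sum; assembling the pieces gives
\[
R(\cup_\ell\cQ_\ell)= d\,\max{\ell\in[L]}\log\left(\tfrac{1}{\epsilon_\ell}\right)O\!\left(K_d^2(\delta,L)e^{8R}\lceil\log_2 R\rceil + e^{4R}\lceil\log_2 R\rceil R^2 + \frac{K_d^2(\delta,L)+R^2}{R_L}\right)~,
\]
which absorbs the $R(\cP_L)$ term and establishes the claim. The only care needed is tracking that the extra factor $R_\ell$ in the numerator here (compared with $R_\ell^2$ in the query bound) shifts the $1/R_L^2$ tails to $1/R_L$, matching the stated regret bound.
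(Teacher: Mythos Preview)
Your proof is correct and follows essentially the same approach as the paper's: the same three-part decomposition of $\cP$, the same use of $|2\sigma(z)-1|\le|z|/2$, the margin control via $\ep_\ell$, the bound $R(\cup_\ell\cQ_\ell)\le 2\sum_\ell R_\ell T_\ell$ combined with Lemma~\ref{lma: bound of sample complexity of greedy approach}, and the identical splitting of the exponential sums at $R_\ell=1$. The only cosmetic difference is that the paper writes out the intermediate $O(e^{16R_\ell}K_d^2(\delta,L)/R_\ell + R^2e^{8R_\ell}/R_\ell)$ expression slightly more explicitly before summing, but the computation is the same.
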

\begin{proof}
We follow the same reasoning as in Theorem \ref{thm: regret bound for linear model}. We decompose the pool $\cP$ as the union of following disjoint sets
\[
\cP=\bigl(\cup_{l=1}^L\cC_{\ell}\bigr) \cup \bigl(\cup_{l=1}^L\cQ_\ell\bigr) \cup\cP_L~,
\]
and study the weighted cumulative regret components
\[
R_T(\cup_{l=1}^L\cC_{\ell})~,\quad R_T(\cup_{l=1}^L\cQ_{\ell})~,\quad R_T(\cP_L)~.
\]

Assume $\bigcap_{\ell=1}^L\ep_\ell$ holds.  
First, notice that in $\cC_\ell$, 
\[
\sign\ang{\hw, \x}=\sign\ang{\w_\ell, \x}=\sign\ang{\w^*, \x}
\]
under the assumption that $\ep_\ell$ holds, thus  $\cup_{\ell=1}^L\cC_{\ell}$ does not contribute weighted regret for $\hw$, i.e.,
\[
R_T(\cup_{l=1}^L\cC_{\ell}) = 0~.
\]
Next, on $\cP_L$, we have $|\ang{\w_L, \x}|\leq R_L$. Combining this with the assumption that $\ep_L^\sigma$ holds, we get $|\ang{\w^*,\x}|\leq 2R_L$, which implies that the weighted cumulative regret on $\cP_L$ is bounded as
\[
R_T(\cP_L) \leq R_L|\cP_L| < \frac{d}{4R_L}~,
\]
the second inequality deriving from the stopping condition defining $L$ in Algorithm \ref{a:batch_al_logistic}.

Finally, on the queried points $\cup_{l=1}^L\cQ_{\ell}$, it is unclear whether $\sign\ang{\hw,\x} = \sign\ang{\w^*,\x}$ or not, so we bound the weighted cumulative regret contribution of each data item $\x$ therein by $|\ang{\w^*,\x}|$. 
Now, by construction, $\x\in\cQ_{\ell}\subset\cP_{\ell-1}$, so that $|\ang{\w_{\ell-1}, \x}| \leq R_{\ell-1}$ which, combined with the assumption that $\ep_{\ell - 1}^\sigma$ holds, yields  $|\ang{\w^*,\x}| \leq 2R_{\ell - 1}$. 
Since $|\cQ_{\ell}| = T_\ell$, we have 
\[
R_T(\cup_{\ell=1}^L\cQ_\ell) \leq 2\sum_{\ell=1}^L T_\ell R_{\ell}
\]
and Lemma \ref{lma: bound of sample complexity of greedy approach} allows us to write
\[
R_T(\cup_{l=1}^L\cQ_{\ell}) 
\leq
16d\sum_{l=1}^L \frac{R_{\ell}}{\epsilon_\ell^2}\,\log\left(\frac{1}{\epsilon_\ell}\right)
=
d\,\max{\ell\in[L]}\log\left(\frac{1}{\epsilon_\ell}\right) O\left(\frac{e^{16R_\ell}K^2_d(\delta, L)}{R_\ell} + \frac{R^2e^{8R_\ell}}{R_\ell}\right)~.
\]
Similar to the argument in theorem \ref{thm: query bound for linear model}, we have
\begin{align*}
  \sum_{\ell=1}^L\frac{e^{16R_\ell}}{R_\ell}
  &= \sum_{R_\ell > 1}\frac{e^{16R_\ell}}{R_\ell} + \sum_{R_\ell \leq 1}\frac{e^{16R_\ell}}{R_\ell}\\
  &\leq e^{8R}\lceil\log_2 R\rceil + e^{16}\sum_{R_\ell \leq 1}\frac{1}{R_\ell}\\
  &\leq e^{8R}\lceil\log_2 R\rceil + \frac{2e^{16}}{R_L}~.
\end{align*}
and
\[
\sum_{\ell=1}^L\frac{e^{8R_\ell}}{R_\ell} \leq e^{4R}\lceil\log_2 R\rceil + \frac{2e^8}{R_L}~.
\]

Piecing together, we conclude that the total regret is bounded as
\begin{align*}
R_T(\cP) = d\,\max{\ell\in[L]}\log\left(\frac{1}{\epsilon_\ell}\right) O\left(K^2_d(\delta, L)e^{8R}\lceil\log_2 R\rceil + e^{4R}\lceil\log_2 R\rceil R^2 + \frac{K^2_d(\delta, L) + R^2}{R_L}\right)~,
\end{align*}
thereby concluding the proof.
\end{proof}

As in the linear case, adding the ingredient of the low noise condition (\ref{e:tsy}) helps us exploit the randomness in $\cP$ to further bound from above the number of stages $L$ in the logistic case.

Specifically, assume the low noise condition (\ref{e:tsy}) holds for $f^*(\x) = \sigma(\ang{\w^*,\x})$, for some unknown exponent $\alpha \geq 0$ and unknown constant $\epsilon_0\in (0, 1]$. Similar to linear case
we define the event 
\[
\bar{\ep}^\sigma = \bigcap_{\epsilon_*\in(0, \epsilon_0]}\left\{|\cT_{\epsilon_*}^\sigma| < 3\left(T\epsilon_*^\alpha + \log\left(\frac{M}{\delta}\right)\right)\right\}~.
\]
Then
\begin{equation}\label{ineq: high prob event on x, logistic case}
    \rP\left(\bar{\ep}^\sigma\right) \geq 1 - \delta~, 
\end{equation}
for $M=\log_2 T$.

%



%
\begin{lemma}\label{lma: control over epsilon_* for logistic model}
Let $\epsilon_*$ be defined through (\ref{eqn: definition of epsilon_*}), with $T > \frac{2}{3}d$. Assume both $\bar{\ep}^\sigma$ and $\bigcap_{\ell=1}^L \ep_\ell$ hold. Then the number of stages $L$ of Algorithm \ref{a:batch_al_logistic} is upper bounded as

\begin{align*}
L 
&\leq \mathrm{max}\left(\log_2\left(\frac{R}{\epsilon_*}\right),~\log_2\left(\frac{R}{\epsilon_0}\right)\right) + 2\\
&\leq 
\mathrm{max}\left(
\log_2\left[R\left(
\left(\frac{3T}{d}\right)^{\frac{1}{\alpha+2}}
+
3\left(\frac{1}{d}\right)^{\frac{\alpha+1}{\alpha+2}}\,\left(\frac{1}{3T}\right)^{\frac{1}{\alpha+2}}\,\log(\frac{\log_2 T}{\delta})\right)\right], \log_2\left(\frac{R}{\epsilon_0}\right)
\right)
+2\\ 
&= \mathrm{max}\left(
O\left(\frac{1}{\alpha+2}\log\left(\frac{T}{d} \right) + \log\left(\frac{R\log T}{\delta}\right)\right),~\log_2\left(\frac{4R}{\epsilon_0}\right)
\right)~,
\end{align*}
where the $O$-notation only hides absolute constants.
\end{lemma}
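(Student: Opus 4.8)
The plan is to reproduce the argument of Lemma~\ref{lma: control over epsilon_* for linear model} almost verbatim, making two substitutions: the confidence threshold $2^{-\ell}$ is replaced by $R_\ell = R\,2^{-\ell}$, and the margin $|\ang{\w^*,\x}|$ is replaced by the sigmoidal gap $|2\sigma(\ang{\w^*,\x})-1|$ that enters the definition of $\cT^\sigma_\epsilon$ and of the event $\bar{\ep}^\sigma$. Throughout I work on the event $\bar{\ep}^\sigma \cap \bigcap_{\ell=1}^L \ep_\ell$, which is assumed to hold, and I may assume $L \geq 2$ since for $L=1$ the claimed bound holds trivially (because $\epsilon_0 \le 1 \le R$). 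The existence of $\epsilon_*$ solving (\ref{eqn: definition of epsilon_*}) is inherited from the assumption $T > \tfrac23 d$.

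First I would use that the algorithm did not stop at stage $L-1$: negating the stopping test of Algorithm~\ref{a:batch_al_logistic} gives $d/(2R_{L-1}) \le 2R_{L-1}\,|\cP_{L-1}|$. Next I would pass from $|\cP_{L-1}|$ to the size of a level set of the sigmoidal gap. Any surviving point $\x \in \cP_{L-1}$ satisfies $|\ang{\w_{L-1},\x}| \le R_{L-1}$; combined with $\ep_{L-1}$ (so that $|\ang{\w_{L-1}-\w^*,\x}| \le R_{L-1}$) this yields $|\ang{\w^*,\x}| \le 2R_{L-1}$. Invoking the elementary inequality $|2\sigma(z)-1| \le |z|/2$ recalled in the main text, I get $|2\sigma(\ang{\w^*,\x})-1| \le R_{L-1}$, i.e.\ $\cP_{L-1} \subseteq \cT^\sigma_{R_{L-1}}$, and hence $d/(2R_{L-1}) \le 2R_{L-1}\,|\cT^\sigma_{R_{L-1}}|$.

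Then I would split on whether $R_{L-1} > \epsilon_0$ or $R_{L-1} \le \epsilon_0$. In the first case $R\,2^{-L+1} > \epsilon_0$ gives directly $L \le \log_2(R/\epsilon_0) + 2$. In the second case $R_{L-1} \le \epsilon_0$ permits invoking $\bar{\ep}^\sigma$ at the value $R_{L-1}$ to bound $|\cT^\sigma_{R_{L-1}}| < 3\bigl(T R_{L-1}^\alpha + \log(M/\delta)\bigr)$ with $M=\log_2 T$; writing $\eta = R_{L-1}$ and substituting produces $d/\eta \le 12\bigl(T\eta^{\alpha+1} + \eta\log(M/\delta)\bigr)$, which is the defining relation (\ref{eqn: definition of epsilon_*}) of $\epsilon_*$ up to an absolute multiplicative constant. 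A one-line monotonicity comparison finishes the job: setting $c = \eta/\epsilon_*$, if $c < 1$ then $c^{\alpha+1} \le c$ and the definition of $\epsilon_*$ turn the displayed inequality into $(1/c)(d/\epsilon_*) \le 4c\,(d/\epsilon_*)$, forcing $c \ge \tfrac12$; hence $R_{L-1} = \eta \ge \tfrac12\epsilon_*$ in all cases, so $2^{L-1} \le 2R/\epsilon_*$ and $L \le \log_2(R/\epsilon_*) + 2$. This proves the first displayed inequality. The second then follows by plugging in the explicit upper bound on $1/\epsilon_*$ already established in Lemma~\ref{lma: control over epsilon_* for linear model} (both lemmas use the same $\epsilon_*$) and multiplying by $R$, and the third by writing $\log_2(R/\epsilon_*) = \log_2 R + \log_2(1/\epsilon_*)$ and absorbing $\log_2 R$ into the logarithmic term.

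The only genuinely new ingredient relative to the linear case is the passage through the sigmoidal gap, and the main point requiring care is precisely the bookkeeping of constants there: the factor $\tfrac12$ coming from $|2\sigma(z)-1|\le|z|/2$ together with the coefficient $2R_{L-1}$ from the stopping test inflate the leading constant in the $\epsilon_*$-relation by a bounded factor, and I must check that this does not spoil the additive constant in the exponent. As the calculation above shows, it contributes exactly the factor $\tfrac12$ in $R_{L-1} \ge \tfrac12\epsilon_*$, which is harmless and still yields the stated additive $+2$. Everything else — the Chernoff/covering construction underlying $\bar{\ep}^\sigma$ and the explicit solution of (\ref{eqn: definition of epsilon_*}) — is carried over unchanged from the linear analysis.
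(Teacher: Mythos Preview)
Your proposal is correct and follows essentially the same route as the paper. The only cosmetic difference is in the handling of the factor of $2$: the paper uses the coarser inclusion $\cP_{L-1}\subseteq\cT^\sigma_{2R_{L-1}}$ (via $|2\sigma(z)-1|=\tanh(|z|/2)\le 2R_{L-1}$), which makes the resulting inequality match the defining equation of $\epsilon_*$ \emph{exactly} with $2R_{L-1}$ in place of $\epsilon_*$, immediately giving $2R_{L-1}\ge\epsilon_*$; you instead use the tighter inclusion $\cP_{L-1}\subseteq\cT^\sigma_{R_{L-1}}$, pick up an extra constant, and recover the same conclusion $R_{L-1}\ge\epsilon_*/2$ via your short monotonicity comparison. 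Both paths yield the identical bound $L\le\log_2(R/\epsilon_*)+2$.
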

\begin{proof}
If at stage $L-1$ the algorithm has not stopped, then we must have
\begin{align*}
    d/2R_{L - 1} \leq 2R_{L - 1}|\cP_{L-1}|~.
\end{align*}
Notice that if $\x \in \cP_{L-1}$ then $|\ang{\w_{L-1}, \x}|\leq R_{L-1}$. Combining it with the assumption that $\ep_{L-1}$ holds, we have $|\ang{\w^*,\x}|\leq 2R_{L-1}$, which implies $|\cP_{L-1}| \leq |\cT_{\mathrm{tanh}(R_{L-1})}^\sigma| \leq |\cT_{2R_{L-1}}^\sigma|$. 

We split the analysis into two cases. On one hand, when $2R_{L-1} >\epsilon_0$, this condition gives us directly 
\[
L \leq \log_2(\frac{R}{\epsilon_0}) + 2~.
\]
On the other hand if $2R_{L-1} \leq \epsilon_0$, then given that $\bar{\ep}^\sigma$ holds, $|\cT_{2R_{L-1}}^\sigma|$ is upper bounded as
\[
|\cT_{2R_{L-1}}^\sigma| 
\leq 
3\left(T (2R_{L-1})^\alpha   + \log \left(\frac{M}{\delta}\right)\right)~,
\]
with $M = \log_2 T$.
Plugging into the first display results in
\[
d/2R_{L-1} \leq 3\left(T(2R_{L-1})^{\alpha + 1} + 2R_{L-1}\log(\frac{M}{\delta})\right)~,
\]
which resembles (\ref{eqn: definition of epsilon_*}) with $2R_{L-1}$ here playing the role of $\epsilon^*$ therein. Then, from the definition of $\epsilon^*$ in (\ref{eqn: definition of epsilon_*}) we immediately obtain
%
$2R_{L-1} \geq \epsilon_*$, thus $L \leq \log_2(\frac{R}{\epsilon_*})+2$. 
Moreover,
from (\ref{eqn: definition of epsilon_*}) we see that $d/\epsilon_* \geq 3T\epsilon_*^{\alpha+1}$, which is equivalent to $\epsilon_*\leq (\frac{d}{3T})^{\frac{1}{\alpha+2}}$. 
Replacing this upper bound on $\epsilon^*$ back into the right-hand side of (\ref{eqn: definition of epsilon_*}), dividing by $d$ and multiply by $R$ yields
\[
\frac{R}{\epsilon_*} 
\leq  
R\left(
\left(\frac{3T}{d}\right)^{\frac{1}{\alpha+2}}
+
3\left(\frac{1}{d}\right)^{\frac{\alpha+1}{\alpha+2}}\,\left(\frac{1}{3T}\right)^{\frac{1}{\alpha+2}}\,\log(\frac{M}{\delta})\right)~,
\]
which gives the claimed upper bound on $L$ through $L \leq \log_2(\frac{R}{\epsilon_*})+2$.
\end{proof}

\begin{corollary}\label{thm: upper bounds, logistic case}
Let $T > \frac{2}{3}d$. Then with probability at least $1-2\delta$ over the random draw of $(\x_1,y_1),\ldots, (\x_T,y_T) \sim \cD$ the label complexity $N_T(\cP)$ and the weighted cumulative regret $R_T(\cP)$ of Algorithm \ref{a:batch_al_linear} simultaneously satisfy the following:
\begin{align*}
    N_T(\cP) 
    &=
    C_{d,R}(\delta, T, \epsilon_0)
    O\left(
    \mathrm{max}
    \left\{
    d^\frac{\alpha}{
    \alpha+2}T^{\frac{2}{\alpha+2}} ,~
    \frac{d}{\epsilon_0^2}
    \right\}
    + \log^2\left(\frac{\log T}{\delta}\right) + de^{8R}\lceil\log_2 R\rceil
    \right)~,
\\
    R_T(\cP) 
    &= 
    C_{d,R}(\delta, T, \epsilon_0)
    O\left(
     \mathrm{max}
    \left\{
    d^\frac{\alpha+1}{
    \alpha+2}T^{\frac{1}{\alpha+2}},~
    \frac{d}{\epsilon_0}
    \right\}
    + \log\left(\frac{\log T}{\delta}\right) + de^{8R}\lceil\log_2 R\rceil
    \right)~,
\end{align*}
where the $O$-notation hiding absolute constants and 
\[
C_{d,R}(\delta, T, \epsilon_0) = \left(1+\log^2\left(\frac{1}{\epsilon_0}\right)\right)\left(
    d\log\left(\frac{T}{\delta}\right) + R^2
    \right)
    \left(
    R + \log\left(\frac{T}{\delta}\right)
    \right)~.
\]
\end{corollary}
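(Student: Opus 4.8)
The plan is to mirror the argument used for the linear case in Corollary~\ref{thm: upper bounds, linear case}, substituting the logistic ingredients for their linear counterparts. First I would fix the good events: by Lemma~\ref{lma: control of gap, logistic model} we have $\rP\bigl(\bigcap_{\ell=1}^L \ep_\ell\bigr) > 1-\delta$, and by (\ref{ineq: high prob event on x, logistic case}) we have $\rP(\bar{\ep}^\sigma) \geq 1-\delta$, so a union bound guarantees that both events hold simultaneously with probability at least $1-2\delta$. Everything that follows is carried out deterministically on this intersection, exactly as in the linear case.

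Conditioned on these events, Lemma~\ref{lma: control over epsilon_* for logistic model} controls the number of stages $L$, and I would split into the same two regimes used there, translating each into a bound on $1/R_L$ and $1/R_L^2$ (recall $R_L = R\,2^{-L}$). In the regime $2R_{L-1} \leq \epsilon_0$ the lemma gives $2R_{L-1} \geq \epsilon_*$; since $R_{L-1} = 2R_L$ this yields $R_L \geq \epsilon_*/4$, hence $1/R_L \leq 4/\epsilon_*$ and $1/R_L^2 \leq 16/\epsilon_*^2$. Using $\epsilon_* \leq (d/3T)^{1/(\alpha+2)}$ together with the explicit bound on $1/\epsilon_*$ from the proof of that lemma, one obtains $d/R_L^2 = O\bigl(d^{\alpha/(\alpha+2)} T^{2/(\alpha+2)}\bigr)$ and $d/R_L = O\bigl(d^{(\alpha+1)/(\alpha+2)} T^{1/(\alpha+2)}\bigr)$ up to lower-order logarithmic terms. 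In the complementary regime $2R_{L-1} > \epsilon_0$ the lemma gives $L \leq \log_2(R/\epsilon_0) + 2$, hence $R_L \geq \epsilon_0/4$ and the corresponding quantities become $d/R_L^2 = O(d/\epsilon_0^2)$ and $d/R_L = O(d/\epsilon_0)$. Taking the maximum over the two regimes reproduces the $\mathrm{max}\{\cdot,\cdot\}$ expressions in the statement.

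Next I would substitute these bounds into Theorem~\ref{thm: query bound for logistic model} and Theorem~\ref{thm: regret bound for logistic model}, and collect the remaining factors into $C_{d,R}$. The key simplifications are: (i) $K_d^2(\delta,L) = d\log\bigl(2dL(L+1)/\delta\bigr) = O\bigl(d\log(T/\delta)\bigr)$, using $d \leq T$ and the fact that $L$ is only polylogarithmic; (ii) the prefactor $\max_{\ell\in[L]}\log(1/\epsilon_\ell) = O\bigl(R + \log(T/\delta)\bigr)$, which follows by inspecting $\epsilon_\ell = R_\ell/\bigl(16e^{8R_\ell}K_d(\delta,\ell) + 4Re^{4R_\ell}\bigr)$ --- in the early stages $R_\ell$ is of order $R$, contributing the additive $R$, while in the late stages $1/R_\ell = 2^\ell/R$ contributes $O(L) = O(\log(T/\delta))$; and (iii) the small-$\epsilon_0$ regime supplies the $\bigl(1+\log^2(1/\epsilon_0)\bigr)$ factor, exactly as in the linear case. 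Multiplying $\bigl(R+\log(T/\delta)\bigr)$, $\bigl(d\log(T/\delta)+R^2\bigr)$ (this being $K_d^2 + R^2$) and $\bigl(1+\log^2(1/\epsilon_0)\bigr)$ recovers $C_{d,R}(\delta,T,\epsilon_0)$, while the two $R_L$-independent summands $K_d^2 e^{8R}\lceil\log_2 R\rceil$ and $e^{4R}\lceil\log_2 R\rceil R^2$ furnish the additive $de^{8R}\lceil\log_2 R\rceil$ term.

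The main obstacle will be the careful bookkeeping of the exponential factors $e^{16R_\ell}$, $e^{8R_\ell}$ and the count $\lceil\log_2 R\rceil$. The crucial point --- already exploited inside Theorems~\ref{thm: query bound for logistic model} and \ref{thm: regret bound for logistic model} --- is that since $R_\ell = R\,2^{-\ell} \leq R/2$ one has $e^{16R_\ell} \leq e^{8R}$, and $e^{16R_\ell}$ attains this worst-case size only in the early stages where $R_\ell$ is close to $R$. There the geometric sums $\sum_\ell e^{16R_\ell}/R_\ell^2$ and $\sum_\ell e^{16R_\ell}/R_\ell$ split into a part over stages with $R_\ell > 1$, bounded by $e^{8R}\lceil\log_2 R\rceil$ because there are at most $\lceil\log_2 R\rceil$ such stages and $1/R_\ell^2 < 1$ on them, and a part over stages with $R_\ell \leq 1$, controlled respectively by $1/R_L^2$ and $1/R_L$. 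This separation is precisely what keeps the exponential $e^{8R}$ factor confined to the additive, $T$-independent summand rather than multiplying the dominant $T^{2/(\alpha+2)}$ term, and checking that it survives the substitution is the delicate part of the argument. Once the bounds on $N_T(\cP)$ and $R_T(\cP)$ are in place, the excess-risk statement of Theorem~\ref{thm:main_logistic} would follow from the $R_T(\cP)$ bound by the same uniform-convergence step invoked at the end of the linear analysis.
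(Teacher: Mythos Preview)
Your proposal is correct and follows essentially the same approach as the paper's proof: fix the two high-probability events via Lemma~\ref{lma: control of gap, logistic model} and (\ref{ineq: high prob event on x, logistic case}), split into the two regimes $2R_{L-1} \lessgtr \epsilon_0$ as in Lemma~\ref{lma: control over epsilon_* for logistic model}, then substitute the resulting bounds on $1/R_L$ into Theorems~\ref{thm: query bound for logistic model} and~\ref{thm: regret bound for logistic model}, simplifying $K_d^2(\delta,L) = O(d\log(T/\delta))$ and $\max_\ell \log(1/\epsilon_\ell) = O(R + \log(T/\delta))$ to assemble $C_{d,R}$. Your discussion of how the exponential factor $e^{8R}$ stays confined to the additive term (via the split of $\sum_\ell e^{16R_\ell}/R_\ell^2$ at $R_\ell = 1$) is exactly the mechanism the paper relies on inside those theorems.
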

\begin{proof}
Assume both $\bar{\ep}^\sigma$ and $\bigcap_{\ell=1}^L \ep_\ell$ hold. Recalling the definition of $K_d(\delta, \ell)$, we have
\begin{align*}
K_{d}(\delta,L) 
=
O\left(
\sqrt{d\log\left(\frac{d}{\delta}\right) + L}
\right)~.
\end{align*}

Similar to Lemma \ref{lma: control over epsilon_* for logistic model}, we split the analysis into two cases depending on whether or not $2R_{L-1}$ is bigger than $\epsilon_0$. If $2R_{L-1} \leq \epsilon_0$, we have
\[
L\leq\log_2\left[R\left(
\left(\frac{3T}{d}\right)^{\frac{1}{\alpha+2}}
+
3\left(\frac{1}{d}\right)^{\frac{\alpha+1}{\alpha+2}}\,\left(\frac{1}{3T}\right)^{\frac{1}{\alpha+2}}\,\log(\frac{\log_2 T}{\delta})\right)\right]~,
\]
therefore
\[
\frac{1}{R_L} =  
O\left(\left(\frac{T}{d}\right)^{\frac{1}{\alpha+2}} + \left(\frac{1}{d}\right)^{\frac{\alpha+1}{\alpha+2}}\,\left(\frac{1}{T}\right)^{\frac{1}{\alpha+2}}\log\left(\frac{\log T}{\delta}\right)\right)
~.  
\]
Moreover, we have
\[
K_{d}(\delta,L) = O\left(
\sqrt{d\log\left(\frac{d}{\delta}\right)} + \sqrt{\log \left(\frac{T}{d}\right)} + \sqrt{\log \left(\frac{R\log T}{\delta}\right)} 
\right)~.
\]
and
\begin{align*}
\max{\ell\in[L]}\log\left(\frac{1}{\epsilon_\ell}\right) 
&\leq
\log\left(
\frac{16e^{4R}K_d(\delta, L)+ 4Re^{2R}}{R_L}\right)\\
&=
O\left(R  + \log K_d(\delta, L) + \log\left(\frac{T}{d}\right)  + \log\left(\frac{R\log T}{\delta}\right)
\right)\\
&=
O\left(R + \log\left(\frac{T}{\delta}\right)
\right)~,
\end{align*}
where the last equality is because $T>\frac{2}{3}d$.

Plugging these bounds together back into factor
\[
K^2_d(\delta, L)e^{8R}\lceil\log_2 R\rceil + e^{4R}\lceil\log_2 R\rceil R^2 + \frac{K^2_d(\delta, L) + R^2 }{R_L^2}
\]
of Theorem \ref{thm: query bound for logistic model} yields
\begin{align*}
K^2_d(\delta, L)&e^{8R}\lceil\log_2 R\rceil + e^{4R}\lceil\log_2 R\rceil R^2 + \frac{K^2_d(\delta, L) + R^2}{R_L^2}\\
&=
O\left(
\left(
K^2_d(\delta, L) + R^2 
\right)
\left(
e^{8R}\lceil\log_2 R\rceil + \frac{1}{R_L^2}
\right)
\right)\\
&=
O\left(
\left(
d\log\left(\frac{T}{\delta}\right) + R^2
\right)
\left(
\left(\frac{T}{d}\right)^{\frac{2}{\alpha+2}} + \left(\frac{1}{d}\right)^{\frac{2\alpha+2}{\alpha+2}}\left(\frac{1}{T}\right)^{\frac{2}{\alpha+2}}\log^2\left(\frac{\log T}{\delta}\right) + e^{8R}\lceil\log_2 R\rceil 
\right)
\right)~,
\end{align*}
where the last equality is due to the assumption that $T > \frac{2}{3}d$. Combining the above estimates gives
\begin{align*}
    N_T(\cP) 
    =
    O\left(
    \left(
    d\log\left(\frac{T}{\delta}\right) + R^2
    \right)
    \left(
    R + \log\left(\frac{T}{\delta}\right)
    \right)
    \left(
    d^\frac{\alpha}{
    \alpha+2}T^{\frac{2}{\alpha+2}} + \log^2\left(\frac{\log T}{\delta}\right) + de^{8R}\lceil\log_2 R\rceil
    \right)
    \right)
\end{align*}

A similar argument gives
\begin{align*}
    R_T(\cP) = O\left(
   \left(
    d\log\left(\frac{T}{\delta}\right) + R^2
    \right)
    \left(
    R + \log\left(\frac{T}{\delta}\right)
    \right)
    \left(
    d^\frac{\alpha+1}{
    \alpha+2}T^{\frac{1}{\alpha+2}} + \log\left(\frac{\log T}{\delta}\right) + de^{8R}\lceil\log_2 R\rceil
    \right)
    \right)~.
\end{align*}
If $2R_{L-1} > \epsilon_0$, then $\frac{1}{R_L} \leq \frac{4}{\epsilon_0}$. Applying these bounds into Theorem \ref{thm: query bound for logistic model} we get
\begin{align*}
    N_T(\cP) 
    =&
    O\left(
    \left(
    d\log\left(\frac{T}{\delta}\right) + R^2 + \log\left(\frac{1}{\epsilon_0}\right)
    \right)
    \left(
    R + \log\left(\frac{T}{\delta\epsilon_0}\right)
    \right)
    \left(
    \frac{d}{\epsilon_0^2} + de^{8R}\lceil\log_2 R\rceil
    \right)
    \right)\\
    =&
    \left(
    d\log\left(\frac{T}{\delta}\right) + R^2 
    \right)
    \left(
    R + \log\left(\frac{T}{\delta}\right)
    \right)
    \left(1+\log^2\left(\frac{1}{\epsilon_0}\right)
    \right)
    O\left(
    \frac{d}{\epsilon_0^2} + de^{8R}\lceil\log_2 R\rceil
    \right)~.
\end{align*}
Similarly
\begin{align*}
    R_T(\cP) 
    =
    \left(
    d\log\left(\frac{T}{\delta}\right) + R^2 
    \right)
    \left(
    R + \log\left(\frac{T}{\delta}\right)
    \right)
    \left(1+\log^2\left(\frac{1}{\epsilon_0}\right)
    \right)
    O\left(
    \frac{d}{\epsilon_0} + de^{8R}\lceil\log_2 R\rceil
    \right)
    ~.
\end{align*}

Lastly, (\ref{ineq: high prob event on x, logistic case}) and Lemma \ref{lma: control of gap, logistic model} together yield
\[
\rP\left(\bar{\ep}^\sigma\bigcap\left(\bigcap_{\ell=1}^L \ep_\ell\right)\right) \geq 1 - 2\delta~,
\]
which concludes the proof.
\end{proof}

We now turn the bound on the weighted cumulative regret $R_T(\cP)$ in the previous corollary into a bound on the excess risk. As in the linear case, we have
\begin{align*}
\cL(\hw) - \cL(\w^*) 
&= 
\rE_{(\x,y) \sim \cD}\Bigl[\ind{y \neq \sign(\ang{\hw,\x})} - \ind{y \neq \sign(\ang{\w^*,\x})}\Bigl]\\
&= 
\rE_{\x \sim \cD_{\cX}} \Bigl[\rE_{y \sim \cD_{\cY|\cX}}\bigl[\ind{y \neq \sign(\ang{\hw,\x})} - \ind{y \neq \sign(\ang{\w^*,\x})}\bigl]\Bigl]\\
&= 
\rE_{\x \sim \cD_{\cX}} \Bigl[\ind{\sign(\ang{\hw,\x}) \neq \sign(\ang{\w^*,\x})}\,|2
\sigma(\ang{\w^*,\x}) - 1| \Bigl]~,
\end{align*}
where $\hw$ is the hypothesis returned by Algorithm \ref{a:batch_al_logistic}. Now, simply observe that
\[
\ind{\sign(\ang{\hw,\x}) \neq \sign(\ang{\w^*,\x})}\,|2
\sigma(\ang{\w^*,\x}) - 1| 
\]
has the same form as the function $\phi(\hw,\x)$ in Appendix \ref{as:ancillary} on which Theorem \ref{thm:uniformconvergence} applies, with $\widehat{\epsilon}(\delta)$ therein replaced by the bound on $R_T(\cP)$ deriving from Corollary \ref{thm: upper bounds, logistic case}.
This allows us to conclude that with probability at least $1-\delta$
\[
\cL(\hw) - \cL(\w^*) 
=
C_{d,R}(\delta, T, \epsilon_0)
O\left(
    \mathrm{max}\left\{
    \left(\frac{d}{T}\right)^{\frac{\alpha+1}{\alpha+2}},~ \frac{d}{T\epsilon_0}
    \right\}+ 
    \frac{
    \log\left(\frac{\log T}{\delta}\right) + de^{8R}\lceil\log_2 R\rceil
    }{T} 
\right)~,
\]
as claimed in Theorem \ref{thm:main_logistic} in the main body of the paper.

\section{Proofs for Section \ref{s:nonlinear}}\label{sa:nonlinear}

The proof has the very same structure as the one in Section \ref{sa:proofs}. Hence we only emphasize the main differences.

The starting point of the analysis is Proposition 2 in \cite{rvr13} which, with our assumptions and notation, reads as follows.
\begin{lemma}[\cite{rvr13}]\label{lma: control of quadratic difference}
Let ${\widehat f}_\ell$ be the estimator computed by Algorithm \ref{a:batch_al_nonlinear} at the end of stage $\ell$, and $\cQ_\ell = \{\x_{\ell,1},\ldots,\x_{\ell,T_\ell}\}$ be the queried points in stage $\ell$.
Then with probability at least $1-\frac{\delta}{\ell(\ell+1)}$ we have
\[
\sum_{t = 1}^{T_\ell} (f_\star(\x_{\ell,t}) - \widehat{f}_{\ell}(\x_{\ell,t}))^2 
\leq
8 \log \frac{2\ell(\ell+1)\,N_\infty(\cF,  \cQ_\ell, \gamma)}{\delta} 
+ 
2\gamma T_\ell \left(8 + \sqrt{8\log \frac{8\ell(\ell+1)T_\ell^2}{\delta}} \right)~,
\]
where $N_\infty(\cF, \cQ_\ell, \gamma)$ is the size of an $\gamma$-cover of function class $\cF$ with respect to the infinity norm.
\end{lemma}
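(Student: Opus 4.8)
The plan is to prove this as a standard least-squares confidence bound for a general function class, following the structure of Proposition 2 in \cite{rvr13}, specialized to our bounded binary-label noise and the per-stage confidence schedule $\delta/(\ell(\ell+1))$. As a preliminary reduction, I would condition on everything revealed up to the end of stage $\ell-1$. Because the greedy selection inside the \textbf{while} loop of Algorithm \ref{a:batch_al_nonlinear} uses only the diversity measure $D(\cdot,\cQ_\ell)$ and never the labels, conditioned on the past the queried points $\x_{\ell,1},\dots,\x_{\ell,T_\ell}$ form a fixed design and the centered variables $\eta_t := \frac{1+y_{\ell,t}}{2}-f_\star(\x_{\ell,t})$ are independent, mean-zero, and bounded in $[-1,1]$ (hence $1$-sub-Gaussian) -- exactly the situation exploited in Lemma \ref{lma: control of gap}. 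Writing $\|g\|^2 = \sum_{t=1}^{T_\ell} g(\x_{\ell,t})^2$, optimality of $\widehat f_\ell$ together with realizability ($f_\star\in\cF$) gives the basic inequality
\[
\|\widehat f_\ell - f_\star\|^2 \;\le\; 2\sum_{t=1}^{T_\ell}\bigl(\widehat f_\ell(\x_{\ell,t})-f_\star(\x_{\ell,t})\bigr)\,\eta_t~,
\]
obtained by expanding $\sum_t(\frac{1+y_{\ell,t}}{2}-\widehat f_\ell(\x_{\ell,t}))^2 \le \sum_t(\frac{1+y_{\ell,t}}{2}-f_\star(\x_{\ell,t}))^2$ and rearranging.

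For a single fixed $f\in\cF$, the noise term $\sum_t (f(\x_{\ell,t})-f_\star(\x_{\ell,t}))\eta_t$ is sub-Gaussian with variance proxy $\|f-f_\star\|^2$, so Lemma \ref{lma: concentration of sub-gaussian r.v.} bounds it by $\sqrt{2\|f-f_\star\|^2\log(1/\delta')}$ with probability $1-\delta'$. The obstacle is that $\widehat f_\ell$ itself depends on the $\eta_t$, so this tail bound cannot be applied to it directly. I would therefore take a minimal $\gamma$-cover of $\cF$ in the $\|\cdot\|_\infty$ norm over $\cQ_\ell$, of cardinality $N_\infty(\cF,\cQ_\ell,\gamma)$, apply the fixed-$f$ bound to every cover element with $\delta' = \delta/(2\ell(\ell+1)N_\infty(\cF,\cQ_\ell,\gamma))$, and union bound; this is where $N_\infty$ and the $8\log\frac{2\ell(\ell+1)N_\infty(\cF,\cQ_\ell,\gamma)}{\delta}$ term enter. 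Let $f^\gamma$ be the cover element with $|f^\gamma(\x_{\ell,t})-\widehat f_\ell(\x_{\ell,t})|\le\gamma$ for all $t$.

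The remaining work is discretization. Replacing $\widehat f_\ell$ by $f^\gamma$ costs at most $2\gamma T_\ell$ in the quadratic term (using $|a^2-b^2|\le 2\gamma$ for $a,b\in[-1,1]$ with $|a-b|\le\gamma$) and at most $\gamma\sum_t|\eta_t|$ in the noise inner product; controlling $\sum_t|\eta_t|$ (equivalently $\sum_t\eta_t^2$) by its mean $\le T_\ell$ plus a Bernstein-type deviation is what produces the $\sqrt{8\log\frac{8\ell(\ell+1)T_\ell^2}{\delta}}$ factor and hence the whole $2\gamma T_\ell\bigl(8+\sqrt{8\log\frac{8\ell(\ell+1)T_\ell^2}{\delta}}\bigr)$ term. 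Substituting the fixed-$f^\gamma$ tail bound and $\|f^\gamma-f_\star\|^2\le \|\widehat f_\ell - f_\star\|^2 + 2\gamma T_\ell$ into the basic inequality yields a self-referential bound of the form $V\le c_1\sqrt{(V+2\gamma T_\ell)\log\frac{1}{\delta'}}+(\text{discretization})$ for $V=\|\widehat f_\ell - f_\star\|^2$, which I would solve by AM-GM ($2\sqrt{xy}\le \tfrac12 x+2y$) to isolate $V$ and collect constants into the stated form.

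The main difficulty, then, is not any single estimate but the interplay between the data-dependence of $\widehat f_\ell$ and the infinite cardinality of $\cF$: the $L_\infty$-covering/union-bound device is essential, and one must check that $\gamma$-closeness in sup-norm simultaneously controls both the quadratic gap and the noise cross-term, with the latter requiring the concentration of $\sum_t\eta_t^2$ that generates the logarithmic discretization factor. Finally, choosing the per-stage failure probability as $\delta/(\ell(\ell+1))$ (doubled for the two-sided tail, giving the $2\ell(\ell+1)$ and $8\ell(\ell+1)$ inside the logarithms) makes a union bound over all stages cost at most $\delta\sum_{\ell\ge1}\frac{1}{\ell(\ell+1)}<\delta$, consistent with its use in the subsequent analysis.
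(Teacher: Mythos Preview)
The paper does not actually prove this lemma: it is stated verbatim as ``Proposition 2 in \cite{rvr13}'' with the confidence level specialized to $\delta/(\ell(\ell+1))$, and is used as a black box. Your sketch correctly reconstructs the standard argument behind that proposition --- the basic inequality from least-squares optimality and realizability, an $L_\infty$ $\gamma$-cover of $\cF$ plus union bound for the sub-Gaussian noise term, discretization error of order $\gamma T_\ell$, and an AM--GM step to resolve the self-referential bound --- so you are taking exactly the route the cited reference (and hence, implicitly, the paper) takes.

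One minor remark: your explanation of the $\sqrt{8\log\frac{8\ell(\ell+1)T_\ell^2}{\delta}}$ factor as arising from concentration of $\sum_t|\eta_t|$ is slightly off, since $|\eta_t|\le 1$ already gives $\sum_t|\eta_t|\le T_\ell$ deterministically. In \cite{rvr13} that logarithmic term comes from the \emph{martingale} version of the argument (the $\x_t$ there may depend on past labels), where a union bound over the random stopping time/length is needed; in the present fixed-design situation one could in fact simplify, but quoting the general bound as the paper does is of course valid.
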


Note that from Lemma \ref{lma: bound for covering number} we can further bound $N_\infty(\cF, \cQ_\ell, \gamma)$ by $\left(\frac{eT}{V\gamma}\right)^V$, where $d$ is the dimension of the input space and $V$ is the VC-subgraph dimension of $\cF$. To simplify the math display we denote
\[
K_T(\delta, \ell, \gamma) = \sqrt{8 \log \frac{2\ell(\ell+1)\,\left(\frac{eT}{V\gamma}\right)^V}{\delta} 
+ 
2\gamma T \left(8 + \sqrt{8\log \frac{8\ell(\ell+1)T^2}{\delta}} \right)} + 1~,
\]
so that $\epsilon_\ell = \frac{1}{2^\ell K_T(\delta,\ell,\gamma)}$.

Moreover, we define, for any $\epsilon>0$,
\begin{align*}
        \cT_\epsilon=\{\x\in\cP\,:\,|f_*(\x) - 1/2|\leq \epsilon\}~.
\end{align*}
Recall that in Algorithm \ref{a:batch_al_nonlinear} variable $L$ counts the total number of stages, while $T$ denotes the size of the original pool $\cP$.

Similar to Appendix \ref{sa:proofs}, we define the events
\[
\ep_\ell=\left\{\max{\x\in\cP_{\ell-1}\setminus\cQ_\ell} |f_*(\x)- \widehat{f}_\ell(\x)|\leq 2^{-\ell}\right\}~,
\]
for $\ell=1,\ldots,L$.
\begin{lemma}\label{lma: control of gap: nonlinear case}
For any positive $L$, we have 
\[
\rP\left(\bigcap_{\ell=1}^L\ep_\ell\right) > 1 - \delta~.
\]
\end{lemma}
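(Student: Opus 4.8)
The plan is to mirror the proof of Lemma \ref{lma: control of gap} in the linear case, replacing the least-squares concentration argument there by the nonlinear regression bound of Lemma \ref{lma: control of quadratic difference}, and replacing the Mahalanobis diversity $\|\x\|_{A^{-1}}$ by the function-class diversity $D(\x,\cQ_\ell)$. Throughout I would work on a single stage $\ell$ for which $\cP_{\ell-1}\setminus\cQ_\ell\neq\emptyset$ (the empty case can only occur at the final stage and is handled exactly as in the linear proof), and condition on the event of Lemma \ref{lma: control of quadratic difference}, which holds with probability at least $1-\delta/(\ell(\ell+1))$. Using the covering-number estimate of Lemma \ref{lma: bound for covering number} to bound $N_\infty(\cF,\cQ_\ell,\gamma)\le (eT/(V\gamma))^V$, together with $T_\ell\le T$, the right-hand side of Lemma \ref{lma: control of quadratic difference} is at most $K_T^2(\delta,\ell,\gamma)-1$, so that
\[
\sum_{t=1}^{T_\ell}\bigl(f_\star(\x_{\ell,t})-\widehat{f}_\ell(\x_{\ell,t})\bigr)^2+1\le K_T^2(\delta,\ell,\gamma)~.
\]

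The key step is to convert this cumulative in-sample bound into a pointwise out-of-sample bound via the diversity measure. Since realizability gives $f_\star\in\cF$ and the estimator $\widehat{f}_\ell\in\cF$ by construction, I may take $f=f_\star$ and $g=\widehat{f}_\ell$ in the supremum defining $D^2$, obtaining for every $\x$
\[
\bigl(f_\star(\x)-\widehat{f}_\ell(\x)\bigr)^2\le D^2(\x,\cQ_\ell)\left(\sum_{t=1}^{T_\ell}\bigl(f_\star(\x_{\ell,t})-\widehat{f}_\ell(\x_{\ell,t})\bigr)^2+1\right)\le D^2(\x,\cQ_\ell)\,K_T^2(\delta,\ell,\gamma)~.
\]
Now I would invoke the termination condition of the while-loop in Algorithm \ref{a:batch_al_nonlinear}: upon exiting stage $\ell$, every surviving point $\x\in\cP_{\ell-1}\setminus\cQ_\ell$ satisfies $D(\x,\cQ_\ell)\le\epsilon_\ell$. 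Recalling that $\gamma=1/T$ and $\epsilon_\ell=2^{-\ell}/K_T(\delta,\ell,1/T)$, this yields $\bigl(f_\star(\x)-\widehat{f}_\ell(\x)\bigr)^2\le\epsilon_\ell^2\,K_T^2=4^{-\ell}$, i.e. $|f_\star(\x)-\widehat{f}_\ell(\x)|\le 2^{-\ell}$, which is precisely the event $\ep_\ell$.

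Finally I would assemble the stages by a union bound. Since each $\ep_\ell$ fails with probability at most $\delta/(\ell(\ell+1))$ and $\sum_{\ell\ge1}1/(\ell(\ell+1))=1$, the total failure probability is strictly less than $\delta$, giving $\rP\bigl(\bigcap_{\ell=1}^L\ep_\ell\bigr)>1-\delta$ as claimed. I expect the only genuinely delicate point to be the coupling step: one must verify that the very same quantity $\sum_{t}(f_\star(\x_{\ell,t})-\widehat{f}_\ell(\x_{\ell,t}))^2$ controlled by Lemma \ref{lma: control of quadratic difference} appears in the denominator of $D^2$, which is exactly what makes the class-dependent diversity measure ``coupled'' to the least-squares estimator in the sense of (\ref{e:diversity_bound}); the covering-number substitution and the union-bound bookkeeping are then routine.
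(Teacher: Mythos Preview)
Your proposal is correct and follows essentially the same argument as the paper's own proof: bound $\sum_t(f_\star(\x_{\ell,t})-\widehat f_\ell(\x_{\ell,t}))^2$ via Lemma~\ref{lma: control of quadratic difference} (after the covering-number substitution and $T_\ell\le T$), couple this to a pointwise bound through the definition of $D^2(\x,\cQ_\ell)$ with the specific choice $f=f_\star$, $g=\widehat f_\ell$, invoke the while-loop exit condition $D(\x,\cQ_\ell)\le\epsilon_\ell$, and finish with the telescoping union bound $\sum_\ell 1/(\ell(\ell+1))<1$. The only cosmetic difference is the order of presentation; the paper writes the ratio identity first and then takes the sup, whereas you state the concentration bound first and then multiply by $D^2$, but the content is identical.
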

\begin{proof}
We assume $\cP_{\ell-1}\backslash\cQ_\ell$ is not empty (it could be empty only in the final stage $L$). Then for any $\ell$ and $\x\in\cP_{\ell-1}\backslash\cQ_\ell$ we can write
\begin{eqnarray*}
(f_\star(\x) &-& \widehat{f}_{\ell}(\x))^2 \\
&=&
\frac{(f_\star(\x) - \widehat{f}_{\ell}(\x))^2}{\sum_{t= 1}^{T_\ell} (f_\star(\x_{\ell, t}) - \widehat{f}_{\ell}(\x_{\ell, t}))^2 + 1 } \, \left(\sum_{t= 1}^{T_\ell} (f_\star(\x_{\ell, t}) - \widehat{f}_{\ell}(\x_{\ell, t}))^2 + 1\right) \\
&\leq&
\sup_{f,g \in \cF} \frac{(f(\x) - g(\x))^2}{\sum_{t= 1}^{T_\ell} (f(\x_{\ell, t}) - g(\x_{\ell, t}))^2 + 1 }\,\left(\sum_{t= 1}^{T_\ell} (f_\star(\x_{\ell, t}) - \widehat{f}_{\ell}(\x_{\ell, t}))^2 + 1\right)\\
&=&
D^2(\x; \cQ_\ell)\,\left(\sum_{t= 1}^{T_\ell} (f_\star(\x_{\ell, t}) - \widehat{f}_{\ell}(\x_{\ell, t}))^2 + 1 \right)\\
&\leq&
\epsilon^2_\ell K^2_T\left(\delta, \ell, \gamma\right)~,
\end{eqnarray*}
holds with probability at least $1-\frac{\delta}{\ell(\ell+1)}$. This is due to lemma \ref{lma: control of quadratic difference} and the fact that $\max{\x\in\cP_{\ell-1}\backslash\cQ_\ell}  D(\x, \cQ_\ell) \leq \epsilon_\ell$.

Using the definition of $\epsilon_\ell$ and union bound we get the desired result.
\end{proof}
Similar to Appendix \ref{sa:proofs}, this yields the following consequence.
\begin{lemma}\label{lma: no regret on confidence set, nonlinear case}
Assume $\bigcap_{\ell=1}^L\ep_\ell$ holds. Then Algorithm \ref{a:batch_al_nonlinear} generates pseudo-labels such that, on all points $\x \in \cup_{\ell=1}^L\cC_{\ell}$, $\sign(\widehat{f}_\ell(\x) - 1/2) = \sign(f_*(\x) - 1/2)$.
\end{lemma}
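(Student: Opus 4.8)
The plan is to mirror the linear-case argument of Lemma~\ref{lma: no regret on confidence set} essentially verbatim, since the nonlinear confident set $\cC_\ell$ and the event $\ep_\ell$ are defined through exactly matching thresholds. The whole statement reduces to an elementary case analysis on the sign of the plug-in margin $\widehat{f}_\ell(\x)-1/2$, so no probabilistic content is needed beyond the assumption that $\bigcap_{\ell=1}^L\ep_\ell$ holds (which is supplied by Lemma~\ref{lma: control of gap: nonlinear case}).

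First I would fix an arbitrary $\x\in\cup_{\ell=1}^L\cC_\ell$ and let $\ell$ be a stage for which $\x\in\cC_\ell$. By the definition of the confident set in Algorithm~\ref{a:batch_al_nonlinear}, this means $\x\in\cP_{\ell-1}\setminus\cQ_\ell$ and $|\widehat{f}_\ell(\x)-1/2|>2^{-\ell}$, and the pseudo-label assigned is $\hy=\sign(\widehat{f}_\ell(\x)-1/2)$. Since we are conditioning on $\bigcap_{\ell=1}^L\ep_\ell$, in particular $\ep_\ell$ holds, and because $\x\in\cP_{\ell-1}\setminus\cQ_\ell$ we may invoke the bound $|f_*(\x)-\widehat{f}_\ell(\x)|\leq 2^{-\ell}$. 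The key step is then the two-sided comparison: if $\widehat{f}_\ell(\x)-1/2>2^{-\ell}$, the triangle inequality gives $f_*(\x)-1/2\geq(\widehat{f}_\ell(\x)-1/2)-|f_*(\x)-\widehat{f}_\ell(\x)|>2^{-\ell}-2^{-\ell}=0$, so $\sign(f_*(\x)-1/2)=+1=\sign(\widehat{f}_\ell(\x)-1/2)$; symmetrically, if $\widehat{f}_\ell(\x)-1/2<-2^{-\ell}$, then $f_*(\x)-1/2<0$ and both signs equal $-1$. Since membership in $\cC_\ell$ forces one of these two cases, the signs always agree.

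There is no substantive obstacle here; the only thing worth highlighting is \emph{why} the argument closes, namely that the margin threshold $2^{-\ell}$ defining $\cC_\ell$ is chosen to coincide exactly with the $L_\infty$ confidence radius guaranteed by $\ep_\ell$. This exact matching is what ensures that any point on which $\widehat{f}_\ell$ is confident (margin exceeding $2^{-\ell}$) cannot have its Bayes sign flipped by the estimation error (bounded by $2^{-\ell}$), so the pseudo-labels generated on $\cup_{\ell=1}^L\cC_\ell$ are consistent with $f_*$. I would conclude by noting that this is the direct nonlinear analogue of the linear statement, with $\ang{\w_\ell,\x}$ replaced by $\widehat{f}_\ell(\x)-1/2$ and $\ang{\w^*,\x}$ replaced by $f_*(\x)-1/2$, and that it feeds into the regret decomposition exactly as in Theorem~\ref{thm: regret bound for linear model} to show that $\cup_{\ell=1}^L\cC_\ell$ contributes zero weighted regret.
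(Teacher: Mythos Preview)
Your proposal is correct and matches the paper's approach exactly: the paper does not give a separate proof for this lemma, merely noting that it follows from Lemma~\ref{lma: control of gap: nonlinear case} in the same way Lemma~\ref{lma: no regret on confidence set} follows in the linear case, and your case analysis on the sign of $\widehat{f}_\ell(\x)-1/2$ is precisely that argument with $\ang{\w_\ell,\x}$ replaced by $\widehat{f}_\ell(\x)-1/2$.
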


\begin{lemma}\label{lma: bound of sample complexity of greedy approach, nonlinear case}
The length $T_{\ell}$ of stage $\ell$ in Algorithm \ref{a:batch_al_nonlinear} is (deterministically) upper bounded as
\[
T_\ell \leq \frac{\Dim(\cF,\cQ_\ell)}{\epsilon_\ell^2} \leq \frac{\Dim(\cF, \cP)}{\epsilon_\ell^2}~.
\]
\end{lemma}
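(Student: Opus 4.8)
The plan is to prove this deterministically, by a simple counting argument that is the nonlinear analogue of Lemma~\ref{lma: bound of sample complexity of greedy approach}; here, however, the bookkeeping is lighter because the quantity $\Dim(\cF,\cQ_\ell)$ has been defined precisely so as to absorb the role played by the log-determinant potential $\log|A_{\ell,T_\ell}|$ in the linear proof. Throughout I read the sum defining $\Dim(\cF,S)$ as a sum of \emph{squared} diversities $D^2$, which is the reading consistent both with the $\epsilon_\ell^2$ appearing in the denominator of the claimed bound and with the RKHS identity $\Dim(\cF,S)=\log|I+G(S)|$ recorded earlier.

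First I would exploit the greedy selection rule. At every step $t\le T_\ell$ the while-loop of Algorithm~\ref{a:batch_al_nonlinear} is entered, so the chosen point satisfies $D(\x_{\ell,t},\cQ_\ell)>\epsilon_\ell$, where at that moment $\cQ_\ell=\langle \x_{\ell,1},\ldots,\x_{\ell,t-1}\rangle$. Squaring and summing over $t=1,\ldots,T_\ell$ gives
\[
\sum_{t=1}^{T_\ell} D^2\bigl(\x_{\ell,t};\langle \x_{\ell,1},\ldots,\x_{\ell,t-1}\rangle\bigr) > T_\ell\,\epsilon_\ell^2~.
\]
The crucial observation is then that the adaptively generated insertion order $\x_{\ell,1},\ldots,\x_{\ell,T_\ell}$ is simply one particular permutation of the queried set $\cQ_\ell$, and that $\Dim(\cF,\cQ_\ell)$ is a supremum of exactly this kind of cumulative sum over all permutations. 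Hence the left-hand side above is at most $\Dim(\cF,\cQ_\ell)$, and combining the two displays yields $T_\ell\,\epsilon_\ell^2<\Dim(\cF,\cQ_\ell)$, which is the first claimed inequality.

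For the second inequality I would establish monotonicity of $\Dim$ under set inclusion, since $\cQ_\ell\subseteq\cP$. Take a permutation of $\cQ_\ell$ that (nearly) attains the supremum defining $\Dim(\cF,\cQ_\ell)$ and extend it to an ordering of $\cP$ by appending the remaining points of $\cP\setminus\cQ_\ell$ in arbitrary order. For the first $|\cQ_\ell|$ positions the prefixes are unchanged, so those terms reproduce the sum equal to $\Dim(\cF,\cQ_\ell)$, while each appended point contributes a nonnegative $D^2$ term. Since $\Dim(\cF,\cP)$ is a supremum over all orderings of $\cP$, it dominates the value of this particular ordering, giving $\Dim(\cF,\cQ_\ell)\le\Dim(\cF,\cP)$.

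No genuine analytic obstacle arises: the statement is purely combinatorial. The one conceptual point to get right is that the supremum over permutations in the definition of $\Dim$ is exactly what allows the algorithm's own greedy insertion order to serve as a valid lower bound for $\Dim(\cF,\cQ_\ell)$; this is where the tailored notion of dimension pays off, cleanly replacing the explicit determinant/telescoping argument used in the linear case, with nonnegativity of $D^2$ supplying the only inequality needed for the monotonicity step.
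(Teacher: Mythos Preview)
Your proof is correct and follows essentially the same approach as the paper: sum the per-step inequality $D^2(\x_{\ell,t};\langle\x_{\ell,1},\ldots,\x_{\ell,t-1}\rangle)>\epsilon_\ell^2$ over $t=1,\ldots,T_\ell$ and bound the resulting sum by $\Dim(\cF,\cQ_\ell)$, then by $\Dim(\cF,\cP)$. You are more explicit than the paper in two respects that are worth keeping: you flag the $D^2$ reading of the sum in $\Dim$ (the paper's definition literally writes $D$, but its own proof and the RKHS identity $\Dim(\cF,S)=\log|I+G(S)|$ only make sense with $D^2$, so your caveat is well placed), and you spell out the monotonicity $\Dim(\cF,\cQ_\ell)\le\Dim(\cF,\cP)$ via extending a near-optimal permutation, which the paper simply asserts.
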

\begin{proof}
Since in stage $\ell$ the algorithm terminates at $T_\ell$, any round $t < T_{\ell} $ is such that
\[
D^2\left(\x_{\ell,t+1}; \ang{\x_{\ell,1},\ldots,\x_{\ell, t}}\right) > \epsilon^2_\ell~.
\]
Summing this inequality up we get
\[
\Dim(\cF,\cQ_\ell) \geq \epsilon_\ell^2T_\ell~,
\]
as a consequence 
\[
T_\ell \leq \frac{\Dim(\cF,\cQ_\ell)}{\epsilon_\ell^2} \leq \frac{\Dim(\cF, \cP)}{\epsilon_\ell^2}~.
\]
\end{proof}
We then bound the {\em label complexity}
\[
N_T(\cP) = \sum_{\ell=1}^L|\cQ_\ell|~,
\]
and the {\em weighted cumulative regret} over pool $\cP$ of size $T$,
\[
R_T(\cP) = \sum_{\x\in\cP}  \ind{\sign(\widehat{f}_\ell(\x) - 1/2) \neq \sign(f_*(\x) - 1/2)}|f_*(\x) - 1/2|~.
\]
We will first present intermediate bounds on $R_T(\cP)$ and $N_T(\cP)$ as a function of $L$, 
and then rely on the properties of the noise (hence the randomness on $\cP$) to complete the proofs.

Lemma \ref{lma: bound of sample complexity of greedy approach, nonlinear case} immediately delivers the following bound on $N_T(\cP)$:
\begin{theorem}\label{thm: query bound for nonlinear model}
For any pool realization $\cP$, the label complexity $N_T(\cP)$ of Algorithm \ref{a:batch_al_nonlinear} operating on a pool $\cP$ of size $T$ is bounded deterministically as
\[
N_T(\cP) \leq \frac{4^{L+1}}{3}K_T^2(\delta, L, \gamma)\Dim(\cF, \cP)~.
\]
\end{theorem}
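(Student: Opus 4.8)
The plan is to simply chain the deterministic per-stage bound of Lemma \ref{lma: bound of sample complexity of greedy approach, nonlinear case} with a geometric summation over the stages, exactly mirroring the argument already used for the linear case in Theorem \ref{thm: query bound for linear model}. Since the label complexity decomposes as $N_T(\cP) = \sum_{\ell=1}^L |\cQ_\ell| = \sum_{\ell=1}^L T_\ell$, the whole statement reduces to controlling each $T_\ell$ and summing.

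First I would invoke Lemma \ref{lma: bound of sample complexity of greedy approach, nonlinear case} to write $T_\ell \leq \Dim(\cF,\cP)/\epsilon_\ell^2$ for each stage $\ell$. Substituting the algorithm's definition $\epsilon_\ell = 1/(2^\ell K_T(\delta,\ell,\gamma))$ turns this into $T_\ell \leq \Dim(\cF,\cP)\, 4^\ell K_T^2(\delta,\ell,\gamma)$. Summing over $\ell=1,\dots,L$ then gives
\[
N_T(\cP) \leq \Dim(\cF,\cP)\sum_{\ell=1}^L 4^\ell K_T^2(\delta,\ell,\gamma).
\]
The next step is to pull the confidence factor out of the sum by replacing $K_T^2(\delta,\ell,\gamma)$ with its value at the last stage, which is legitimate provided $K_T(\delta,\ell,\gamma)$ is nondecreasing in $\ell$. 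Finally, I would bound the remaining geometric sum via $\sum_{\ell=1}^L 4^\ell = \tfrac{4^{L+1}-4}{3} < \tfrac{4^{L+1}}{3}$, yielding
\[
N_T(\cP) \leq \frac{4^{L+1}}{3}\,K_T^2(\delta,L,\gamma)\,\Dim(\cF,\cP),
\]
which is exactly the claimed bound.

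The only point requiring a line of justification — and the closest thing to an obstacle — is the monotonicity of $K_T(\delta,\ell,\gamma)$ in $\ell$, needed to factor out $K_T^2(\delta,L,\gamma)$. This is immediate from the explicit form of $K_T$ used in the analysis (after bounding the covering number $N_\infty(\cF,\cQ_\ell,\gamma)$ by $(eT/(V\gamma))^V$ via Lemma \ref{lma: bound for covering number}): both logarithmic terms under the square root contain the factor $\ell(\ell+1)$, which strictly increases with $\ell$, while the remaining terms are constant in $\ell$, so $K_T(\delta,\ell,\gamma)$ is increasing. Note that this bound is genuinely deterministic: it holds for every pool realization $\cP$, since Lemma \ref{lma: bound of sample complexity of greedy approach, nonlinear case} is a purely combinatorial consequence of the greedy stopping rule $D(\x,\cQ_\ell) > \epsilon_\ell$ and the definition of $\Dim(\cF,\cP)$, and no high-probability event is invoked at this stage.
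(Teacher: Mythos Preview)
Your proposal is correct and matches the paper's approach exactly: the paper states that Lemma~\ref{lma: bound of sample complexity of greedy approach, nonlinear case} ``immediately delivers'' the bound, and the implicit argument is precisely the one you wrote out---per-stage bound $T_\ell \leq \Dim(\cF,\cP)/\epsilon_\ell^2$, substitution of $\epsilon_\ell = 2^{-\ell}/K_T(\delta,\ell,\gamma)$, monotonicity of $K_T$ in $\ell$ to pull it out of the sum, and the geometric series $\sum_{\ell=1}^L 4^\ell < 4^{L+1}/3$, mirroring Theorem~\ref{thm: query bound for linear model}. Your extra remark justifying the monotonicity of $K_T(\delta,\ell,\gamma)$ via the explicit form (after the covering-number bound of Lemma~\ref{lma: bound for covering number}) is exactly the version of $K_T$ the appendix adopts, so nothing is missing.
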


As for the regret $R_T(\cP)$, we have the following high probability result.
\begin{theorem}\label{thm: regret bound for nonlinear model}
For any pool realization $\cP$, the weighted cumulative regret $R_T(\cP)$ of Algorithm \ref{a:batch_al_linear} operating on a pool $\cP$ of size $T$ is bounded as
\[
R_T(\cP) 
\leq 
2^{L+4}K^2_T(\delta, L, \gamma)\Dim(\cF, \cP)~,
\]
assuming $\bigcap_{\ell=1}^L\ep_\ell$ holds.
\end{theorem}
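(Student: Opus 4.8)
The plan is to transcribe the proof of Theorem~\ref{thm: regret bound for linear model} almost verbatim, replacing the spectral diversity $\|\x\|_{A^{-1}}$ by $D(\x,\cQ_\ell)$ and the ambient dimension $d$ by $\Dim(\cF,\cP)$. Throughout I condition on $\bigcap_{\ell=1}^L \ep_\ell$, and I partition the pool into the three disjoint sets
\[
\cP = \Bigl(\cup_{\ell=1}^L \cC_\ell\Bigr) \cup \Bigl(\cup_{\ell=1}^L \cQ_\ell\Bigr) \cup \cP_L,
\]
splitting $R_T(\cP) = R(\cup_{\ell}\cC_\ell) + R(\cup_{\ell}\cQ_\ell) + R(\cP_L)$ and bounding each piece separately.

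First I would dispose of the confident points. By Lemma~\ref{lma: no regret on confidence set, nonlinear case}, on every $\x\in\cup_\ell\cC_\ell$ the pseudo-label $\hy=\sign(\hf_\ell(\x)-1/2)$ agrees with $\sign(f_*(\x)-1/2)$; in particular $f_*$ is consistent with the entire pseudo-labelled set, so a zero-empirical-error interpolant exists in $\cF$ and the trained $\hf$ realizes the pseudo-labels. Hence $\sign(\hf(\x)-1/2)=\hy=\sign(f_*(\x)-1/2)$ there and $R(\cup_\ell\cC_\ell)=0$. For the leftover pool $\cP_L$, any surviving $\x$ was not placed in $\cC_L$, so $|\hf_L(\x)-1/2|\le 2^{-L}$; combined with $\ep_L$ this gives $|f_*(\x)-1/2|\le 2^{-L+1}$, and the stopping test $\Dim(\cF,\cP)/2^{-L+1}>2^{-L+1}|\cP_L|$ yields $|\cP_L|<\Dim(\cF,\cP)\,4^{L-1}$, whence $R(\cP_L)\le 2^{-L+1}|\cP_L|<\Dim(\cF,\cP)\,2^{L-1}$.

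The queried points carry the main mass. Since $\cQ_\ell\subset\cP_{\ell-1}$, each such $\x$ escaped $\cC_{\ell-1}$, so $|\hf_{\ell-1}(\x)-1/2|\le 2^{-\ell+1}$, and $\ep_{\ell-1}$ upgrades this to $|f_*(\x)-1/2|\le 2^{-\ell+2}$. With $|\cQ_\ell|=T_\ell$ and Lemma~\ref{lma: bound of sample complexity of greedy approach, nonlinear case} giving $T_\ell\le \Dim(\cF,\cP)/\epsilon_\ell^2 = \Dim(\cF,\cP)\,4^\ell K_T^2(\delta,\ell,\gamma)$, I would bound
\[
R(\cup_\ell\cQ_\ell)\le 4\sum_{\ell=1}^L 2^{-\ell}T_\ell \le 4\,\Dim(\cF,\cP)\sum_{\ell=1}^L 2^\ell K_T^2(\delta,\ell,\gamma)\le 2^{L+3}K_T^2(\delta,L,\gamma)\,\Dim(\cF,\cP),
\]
using that $K_T(\delta,\ell,\gamma)$ is nondecreasing in $\ell$ once $N_\infty$ is replaced by its $\ell$-free bound $(eT/(V\gamma))^V$, together with $\sum_{\ell\le L}2^\ell<2^{L+1}$. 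Adding the three pieces and absorbing the $\cP_L$ term via $K_T^2\ge 1$ gives $R_T(\cP)\le (2^{L+3}+2^{L-1})K_T^2(\delta,L,\gamma)\Dim(\cF,\cP)\le 2^{L+4}K_T^2(\delta,L,\gamma)\Dim(\cF,\cP)$, as claimed.

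The only genuinely new point relative to the linear case is the zero-regret claim on $\cup_\ell\cC_\ell$: I must ensure the final training step produces an $\hf$ with zero empirical error on the pseudo-labelled set, which rests on $f_*$ being a feasible interpolant under $\bigcap_\ell\ep_\ell$ (Lemma~\ref{lma: no regret on confidence set, nonlinear case}). This is exactly where the realizability of $f^*\in\cF$ is used, and it is the step I expect to require the most care; everything else is a mechanical substitution $d\to\Dim(\cF,\cP)$ and $\|\x\|_{A^{-1}}\to D(\x,\cQ_\ell)$ into the linear argument.
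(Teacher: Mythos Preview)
Your proposal is correct and follows essentially the same argument as the paper: the same three-way decomposition of $\cP$, the same use of Lemma~\ref{lma: no regret on confidence set, nonlinear case} for $\cup_\ell\cC_\ell$, the stopping condition for $\cP_L$, and Lemma~\ref{lma: bound of sample complexity of greedy approach, nonlinear case} together with monotonicity of $K_T(\delta,\ell,\gamma)$ in $\ell$ and the geometric sum for $\cup_\ell\cQ_\ell$. Your explicit remark about why the final interpolant $\hf$ achieves zero error on the pseudo-labels (via realizability of $f_*$) is a point the paper leaves implicit but is exactly the right justification.
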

\begin{proof}
We decompose the pool $\cP$ as the union of following disjoint sets
\[
\cP= \left(\cup_{l=1}^L\cC_{\ell}\right)\cup \left(\cup_{l=1}^L\cQ_{\ell} \right) \cup\cP_L~
\]
and, correspondingly, the weighted cumulative regret $R_T(\cP)$ as the sum of the three components 
\[
R_T(\cP) = R(\cup_{l=1}^L\cC_{\ell}) + R(\cup_{l=1}^L\cQ_{\ell}) +  R(\cP_L)~.
\]
Assume $\bigcap_{\ell=1}^L\ep_\ell$ holds.  
First, notice that on $\cC_\ell$, 
\[
\sign(\widehat{f}(\x) - 1/2)=\sign(\widehat{f}_\ell(\x) - 1/2)=\sign(f_*(\x) - 1/2)
\]
under the assumption that $\ep_\ell$ holds, thus points in $\cup_{\ell=1}^L\cC_{\ell}$ do not contribute weighted regret for $\widehat{f}$, i.e.,
\[
R(\cup_{l=1}^L\cC_{\ell}) = 0~.
\]
Next, on $\cP_L$, we have $ |{\widehat f}_L(\x)-1/2| \leq 2^{-L}$. Combining this with the assumption that $\ep_L$ holds, we get $|f_*(\x) - 1/2|\leq 2^{-L+1}$, which implies that the weighted cumulative regret on $\cP_L$ is bounded as
\[
R(\cP_L) \leq 2^{-L+1}|\cP_L| < \Dim(\cF, \cP)\,2^{L-1}~,
\]
the second inequality deriving from the stopping condition defining $L$ in Algorithm \ref{a:batch_al_nonlinear}.

Finally, on the queried points $\cup_{l=1}^L\cQ_{\ell}$, it is unclear whether $\sign(\widehat{f}(\x) - 1/2)=\sign(f_*(\x) - 1/2)$ or not, so we bound the weighted cumulative regret contribution of each data item $\x$ therein by $|f_*(\x) - 1/2|$. 
Now, by construction, $\x\in\cQ_{\ell}\subset\cP_{\ell-1}$, so that $|f_{\ell-1}(\x) - 1/2| \leq 2^{-\ell+1}$ which, combined with the assumption that $\ep_{\ell - 1}$ holds, yields  $|f_*(\x) - 1/2| \leq 2^{-\ell + 2}$. 
Since $|\cQ_{\ell}| = T_\ell$, we have 
\[
R(\cup_{\ell=1}^L\cQ_\ell) \leq 4\sum_{\ell=1}^L T_\ell\,2^{-\ell}
\]
and Lemma \ref{lma: bound of sample complexity of greedy approach, nonlinear case} allows us to write
\[
R(\cup_{l=1}^L\cQ_{\ell}) 
\leq
4\sum_{l=1}^L \frac{2^{-\ell}\Dim(\cF, \cP)}{\epsilon_\ell^2}
\leq 
2^{L+3}K^2_T(\delta, L, \gamma)\Dim(\cF, \cP) ~,
\]
the last inequality following from a reasoning similar to the one that lead us to theorem \ref{thm: query bound for nonlinear model}.
\end{proof}

%
Given any pool realization $\cP$, both the label complexity and weighted regret are bounded by a function of $L$. Adding the ingredient of the low noise condition (\ref{e:tsy}) helps us leverage the randomness in $\cP$ and further bound from above the number of stages $L$.

Specifically, assume the low noise condition (\ref{e:tsy}) holds for $f^*(\x)$, for some unknown exponent $\alpha \geq 0$ and unknown constant $\epsilon_0\in (0, 1]$. We set $\epsilon^*$ to be the unique solution of the equation\footnote
{
We need to further assume $T>\frac{2}{3}\Dim(\cF, \cP)$ so as to make sure the solution exists.
}
\begin{equation}\label{eqn: definition of epsilon_* for nonlinear case}
\Dim(\cF, \cP)/\epsilon_* = 3\left(T\epsilon_*^{\alpha + 1} + \epsilon_* \log\left(\frac{M}{\delta}\right)\right)~.   
\end{equation}
Eq. (\ref{e:uniform_epsilon}) holds in this case by low noise condition and will be applied, in particular, to the margin value $2^{-L+2}$ when $2^{-L+2} \leq \epsilon_0$. 

Armed with Eqs. (\ref{e:uniform_epsilon}) and (\ref{eqn: definition of epsilon_* for nonlinear case}) with $M = \log_2 T$, we prove a lemma that upper bounds the number of stages $L$. Recall the definition of event $\bar{\ep}$:
\[
\bar{\ep} = \bigcap_{\epsilon_*\in(0, \epsilon_0]}\left\{|T_{\epsilon_*}| < 3\left(T\epsilon_*^\alpha + \log\left(\frac{M}{\delta}\right)\right)\right\}~.
\] 
\begin{lemma}\label{lma: control over epsilon_* for nonlinear model}
Let $\epsilon_*$ be defined through (\ref{eqn: definition of epsilon_*}), with $T > \frac{2}{3}\Dim(\cF, \cP)$. 
Assume both $\bar{\ep}$ and $\bigcap_{\ell=1}^L \ep_\ell$ hold. Then the number of stages $L$ of Algorithm \ref{a:batch_al_linear} is upper bounded as
\begin{align*}
L 
&\leq \mathrm{max}\left( \log_2\left(\frac{1}{\epsilon_*}\right),~\log_2\left(\frac{1}{\epsilon_0}\right)\right) + 2\\
&\leq 
\mathrm{max}\left(
\log_2\left[
\left(\frac{3T}{\Dim(\cF, \cP)}\right)^{\frac{1}{\alpha+2}}
+
3\left(\frac{1}{\Dim(\cF, \cP)}\right)^{\frac{\alpha+1}{\alpha+2}}\,\left(\frac{1}{3T}\right)^{\frac{1}{\alpha+2}}\,\log(\frac{\log_2 T}{\delta})\right],~
\log_2\left(\frac{1}{\epsilon_0}\right)
\right) + 2\\ 
&= \mathrm{max}\left(
O\left(\frac{1}{\alpha+2}\log\left(\frac{T}{\Dim(\cF, \cP)} \right) + \log\left(\frac{\log T}{\delta}\right)\right),~
\log\left(\frac{4}{\epsilon_0}\right) 
\right)
~.
\end{align*}
Here the $O$-notation only omits absolute constants.
\end{lemma}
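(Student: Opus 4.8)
The plan is to reproduce the argument of Lemma~\ref{lma: control over epsilon_* for linear model} almost verbatim, with the three substitutions dictated by Algorithm~\ref{a:batch_al_nonlinear}: the dimension $d$ becomes $\Dim(\cF,\cP)$, the margin $\ang{\w_{\ell-1},\x}$ becomes $\hf_{\ell-1}(\x)-1/2$, and $\ang{\w^*,\x}$ becomes $f_*(\x)-1/2$. First I would exploit the fact that the for-loop did not terminate at stage $L-1$. Reading the exit test of Algorithm~\ref{a:batch_al_nonlinear} at $\ell=L-1$, its failure amounts to
\[
\Dim(\cF,\cP)/2^{-L+2} \leq 2^{-L+2}\,|\cP_{L-1}|~.
\]

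Next I relate $|\cP_{L-1}|$ to a sublevel set of $f_*$. Any $\x\in\cP_{L-1}$ was \emph{not} placed in the confident set $\cC_{L-1}$, whose threshold is $2^{-(L-1)}=2^{-L+1}$, so $|\hf_{L-1}(\x)-1/2|\leq 2^{-L+1}$; since $\ep_{L-1}$ holds by assumption, $|f_*(\x)-\hf_{L-1}(\x)|\leq 2^{-(L-1)}=2^{-L+1}$, and the triangle inequality gives $|f_*(\x)-1/2|\leq 2^{-L+2}$. Hence $\cP_{L-1}\subseteq\cT_{2^{-L+2}}$ and $|\cP_{L-1}|\leq|\cT_{2^{-L+2}}|$. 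I would then split into two cases according to the position of $2^{-L+2}$ relative to $\epsilon_0$. If $2^{-L+2}>\epsilon_0$, the bound $L\leq\log_2(1/\epsilon_0)+2$ is immediate.

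If instead $2^{-L+2}\leq\epsilon_0$, I invoke the event $\bar{\ep}$ (together with (\ref{e:uniform_epsilon}), which holds here by the low-noise condition) to get $|\cT_{2^{-L+2}}|<3\bigl(T(2^{-L+2})^{\alpha}+\log(M/\delta)\bigr)$ with $M=\log_2 T$. Substituting into the displayed inequality yields exactly the shape of the defining equation (\ref{eqn: definition of epsilon_* for nonlinear case}), but as an inequality with $2^{-L+2}$ in the role of $\epsilon_*$:
\[
\Dim(\cF,\cP)/2^{-L+2} \leq 3\bigl(T(2^{-L+2})^{\alpha+1}+2^{-L+2}\log(M/\delta)\bigr)~.
\]
Under the standing hypothesis $T>\tfrac{2}{3}\Dim(\cF,\cP)$ the map $\epsilon\mapsto\Dim(\cF,\cP)/\epsilon$ is strictly decreasing while $\epsilon\mapsto 3(T\epsilon^{\alpha+1}+\epsilon\log(M/\delta))$ is strictly increasing, so the solution $\epsilon_*$ of (\ref{eqn: definition of epsilon_* for nonlinear case}) is unique and the inequality forces $2^{-L+2}\geq\epsilon_*$, i.e.\ $L\leq\log_2(1/\epsilon_*)+2$. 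Combining the two cases gives the first displayed bound of the lemma.

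Finally I would make $1/\epsilon_*$ explicit. Dropping the second term of (\ref{eqn: definition of epsilon_* for nonlinear case}) gives $\Dim(\cF,\cP)/\epsilon_*\geq 3T\epsilon_*^{\alpha+1}$, hence $\epsilon_*\leq(\Dim(\cF,\cP)/(3T))^{1/(\alpha+2)}$; re-inserting this upper bound on $\epsilon_*$ into the right-hand side of (\ref{eqn: definition of epsilon_* for nonlinear case}) and dividing by $\Dim(\cF,\cP)$ produces the middle bound, and taking $\log_2$ and collecting terms yields the $O$-form. The whole argument is routine once it is aligned with the linear template; the one step meriting care is the monotonicity-and-uniqueness reasoning that turns the inequality into $2^{-L+2}\geq\epsilon_*$, as this is precisely where the existence of the solution $\epsilon_*$, and thus the hypothesis $T>\tfrac{2}{3}\Dim(\cF,\cP)$ flagged in the footnote, is used.
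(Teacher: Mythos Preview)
Your proposal is correct and follows essentially the same approach as the paper's own proof: the paper explicitly states that the argument ``remains unchanged compared to Lemma \ref{lma: control over epsilon_* for linear model}, except that we replace $d$ by $\Dim(\cF, \cP)$,'' and your write-up carries out precisely that substitution, including the same case split on $2^{-L+2}$ versus $\epsilon_0$ and the same monotonicity comparison against the defining equation for $\epsilon_*$.
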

\begin{proof}
If at stage $L-1$ the algorithm has not stopped, then we must have
\begin{align*}
    \Dim(\cF, \cP)/2^{-L+2} \leq 2^{-L+2}|\cP_{L-1}|~.
\end{align*}
Notice that if $\x \in \cP_{L-1}$ then $|\widehat{f}_{L-1}(\x) - 1/2|\leq 2^{-L+1}$. Combining it with the assumption that $\ep_{L-1}$ holds, we have
$|f_*(\x) - 1/2|\leq 2^{-L+2}$ which implies 
$|\cP_{L-1}| \leq |\cT_{2^{-L+2}}|$.

The rest of the proof remains unchanged compared to Lemma \ref{lma: control over epsilon_* for linear model}, except that we replace $d$ by $\Dim(\cF, \cP)$.
\end{proof}

\begin{corollary}\label{thm: upper bounds, nonlinear case}
Let $T > \Dim(\cF, \cP)$. Then with probability at least $1-2\delta$ over the random draw of $(\x_1,y_1),\ldots, (\x_T,y_T) \sim \cD$ the label complexity $N_T(\cP)$ and the weighted cumulative regret $R_T(\cP)$ of Algorithm \ref{a:batch_al_nonlinear} simultaneously satisfy the following:
\begin{align*}
N_T(\cP) 
&= 
V\log^2\left(\frac{T}{\delta}\right)
\left(1 + \log^2\left(\frac{1}{\epsilon_0}\right)\right)
O\left(
\mathrm{max}
\left\{
\Dim(\cF, \cP)^{\frac{\alpha}{\alpha+2}}T^{\frac{2}{\alpha+2}},~ \frac{\Dim(\cF, \cP)}{\epsilon_0^2}
\right\}
+ \log^2\left(\frac{\log T}{\delta}\right)
\right)
\\
R_T(\cP) 
&=
V\log^2\left(\frac{T}{\delta}\right)
\left(1 + \log^2\left(\frac{1}{\epsilon_0}\right)\right)
O\left(
\mathrm{max}
\left\{
\Dim(\cF, \cP)^{\frac{\alpha+1}{\alpha+2}}T^{\frac{1}{\alpha+2}} ,~
\frac{\Dim(\cF, \cP)}{\epsilon_0}
\right\}
+ \log\left(\frac{\log T}{\delta}
\right)
\right)~.
\end{align*}

where the $O$-notation only omits absolute constants .
\end{corollary}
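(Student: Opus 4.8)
The plan is to transcribe the proof of Corollary \ref{thm: upper bounds, linear case} essentially verbatim, making two substitutions throughout: replace the ambient dimension $d$ by $\Dim(\cF,\cP)$, and replace the linear confidence factor $K_T(\delta,L)$ by its nonlinear analogue $K_T(\delta,L,\gamma)$ evaluated at $\gamma = 1/T$ (the value used by Algorithm \ref{a:batch_al_nonlinear}). The backbone results are already in place: Theorem \ref{thm: query bound for nonlinear model} and Theorem \ref{thm: regret bound for nonlinear model} bound $N_T(\cP)$ and $R_T(\cP)$ deterministically (resp.\ on $\bigcap_{\ell=1}^L\ep_\ell$) as functions of $L$ and $K_T^2(\delta,L,\gamma)$, while Lemma \ref{lma: control over epsilon_* for nonlinear model} controls $L$ under the low-noise condition. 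I would condition on the joint event $\bar{\ep}\cap\bigcap_{\ell=1}^L\ep_\ell$, which has probability at least $1-2\delta$ by Lemma \ref{lma: control of gap: nonlinear case} together with the bound $\rP(\bar{\ep})\ge 1-\delta$ (cf.\ (\ref{ineq: high prob event on x})).

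The one genuinely new computation is to expand $K_T^2(\delta,L,1/T)$. Plugging $\gamma=1/T$ and the covering-number bound $N_\infty(\cF,\cQ_L,\gamma)\le (eT/(V\gamma))^V$ from Lemma \ref{lma: bound for covering number} into its definition, the term inside the radical contributes $8\log(2L(L+1)/\delta) + 8V\log(eT^2/V)$ from the cover and $2(8+\sqrt{8\log(8L(L+1)T^2/\delta)})$ from the variance correction; using $\log L\le L$ this yields
\[
K_T^2(\delta,L,1/T) = O\left(V\log\frac{T}{\delta} + L\right).
\]
This is exactly the linear bound $O(\log(T/\delta)+L)$ inflated by a single factor of $V$, which is precisely the source of the extra $V$ appearing in the stated $C(\delta,T,\epsilon_0)$.

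With this in hand I would split on whether $2^{-L+2}\le\epsilon_0$. In the noise-dominated case $2^{-L+2}\le\epsilon_0$, Lemma \ref{lma: control over epsilon_* for nonlinear model} gives $2^L = O((T/\Dim(\cF,\cP))^{1/(\alpha+2)}+\ldots)$, so that $4^L\,\Dim(\cF,\cP) = O(\Dim(\cF,\cP)^{\alpha/(\alpha+2)}T^{2/(\alpha+2)}+\ldots)$ and $2^L\,\Dim(\cF,\cP)=O(\Dim(\cF,\cP)^{(\alpha+1)/(\alpha+2)}T^{1/(\alpha+2)}+\ldots)$; substituting these together with the $K_T^2$ bound into Theorems \ref{thm: query bound for nonlinear model} and \ref{thm: regret bound for nonlinear model} and collecting the factor $(L+\log K_T^2)\,K_T^2 = O(V\log^2(T/\delta))$ produces the two leading terms. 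In the margin-dominated case $2^{-L+2}>\epsilon_0$ one has $2^L\le 4/\epsilon_0$, hence $L=O(\log(1/\epsilon_0))$ and $K_T^2 = O(V\log(T/(\delta\epsilon_0)))$, yielding the $\Dim(\cF,\cP)/\epsilon_0^2$ and $\Dim(\cF,\cP)/\epsilon_0$ terms with the $(1+\log^2(1/\epsilon_0))$ factor. Taking the maximum over the two cases and folding everything into $C(\delta,T,\epsilon_0)=V\log^2(T/\delta)(1+\log^2(1/\epsilon_0))$ gives the claim.

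The work here is almost entirely bookkeeping; the only point requiring care — and the main potential obstacle — is the $K_T^2(\delta,L,1/T)$ expansion, specifically verifying that the covering number contributes the factor $V$ only \emph{once} (linearly), rather than entering quadratically through the product $(L+\log K_T^2)\,K_T^2$. Because $\log K_T^2$ is only $O(\log(V\log(T/\delta)))$ and $L=O(\log(T/\delta))$, the product carries a single power of $V$, matching the advertised $C(\delta,T,\epsilon_0)$; everything else is identical to the linear argument with $d$ replaced by $\Dim(\cF,\cP)$.
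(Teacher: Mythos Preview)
Your proposal is correct and follows essentially the same approach as the paper's proof: condition on $\bar\ep\cap\bigcap_\ell\ep_\ell$, expand $K_T^2(\delta,L,1/T)=O(V\log(T/\delta)+L)$, split on $2^{-L+2}\lessgtr\epsilon_0$, and substitute the resulting bounds on $2^L$ and $K_T^2$ into Theorems \ref{thm: query bound for nonlinear model} and \ref{thm: regret bound for nonlinear model}. One minor remark: Theorem \ref{thm: query bound for nonlinear model} actually reads $N_T(\cP)\le\frac{4^{L+1}}{3}K_T^2\,\Dim(\cF,\cP)$ \emph{without} the extra $(L+\log K_T^2)$ factor you mention (that factor is present in the linear bound because of the $\log(1/\epsilon_\ell)$ in Lemma \ref{lma: bound of sample complexity of greedy approach}, which is absent from Lemma \ref{lma: bound of sample complexity of greedy approach, nonlinear case}); the paper's own proof also inserts this harmless over-bound, so your bookkeeping matches and the stated result follows.
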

\begin{proof}
Assume both $\bar{\ep}$ and $\bigcap_{\ell=1}^L \ep_\ell$ hold. Recalling the definition of $K_T(\delta, L, \gamma)$, we have
\begin{align*}
K_{T}(\delta, L, \gamma) &= \sqrt{8 \log \frac{2\ell(\ell+1)\,\left(\frac{eT}{V\gamma}\right)^V}{\delta} 
+ 
2\gamma T \left(8 + \sqrt{8\log \frac{8\ell(\ell+1)T^2}{\delta}} \right)} + 1\\
&= O\left(\sqrt{\log\left(\frac{\left(\frac{eT}{V\gamma}\right)^V}{\delta}\right) + \gamma T \sqrt{\log L + \log\left(\frac{T}{\delta}\right)} + \log L}\right)~.
\end{align*}
We choose $\gamma=\frac{1}{T}$ and simplify the above display as
\begin{align*}
K_{T}(\delta, L, \frac{1}{T})=  O\left(\sqrt{V\log\left(\frac{T}{\delta}\right) + \log L}\right) = O\left(\sqrt{V\log\left(\frac{T}{\delta}\right) + L}\right)~.
\end{align*}

Similar to lemma \ref{lma: control over epsilon_* for nonlinear model}, we split the analysis into two cases depending on whether or not $2^{-L+2}$ is bigger than $\epsilon_0$. If $2^{-L+2} \leq \epsilon_0$, we have
\[
L \leq \log_2\left[
\left(\frac{3T}{\Dim(\cF, \cP)}\right)^{\frac{1}{\alpha+2}}
+
3\left(\frac{1}{\Dim(\cF, \cP)}\right)^{\frac{\alpha+1}{\alpha+2}}\,\left(\frac{1}{3T}\right)^{\frac{1}{\alpha+2}}\,\log(\frac{\log_2 T}{\delta})\right]~,
\]
therefore,
\[
2^L =  
O\left(\left(\frac{T}{\Dim(\cF, \cP)}\right)^{\frac{1}{\alpha+2}} + \left(\frac{1}{\Dim(\cF, \cP)}\right)^{\frac{\alpha+1}{\alpha+2}}\,\left(\frac{1}{T}\right)^{\frac{1}{\alpha+2}}\log\left(\frac{\log T}{\delta}\right)
\right)~.
\]

Plugging the above bounds into Theorem \ref{thm: query bound for nonlinear model} gives
\begin{align*}
N_T(\cP) 
&=
O\left(\Dim(\cF, \cP)\left(L + \log K_T\left(\delta, L, \frac{1}{T}\right)\right)K_T^2(\delta, L, \frac{1}{T}) 4^{L}\right)\\
&=
O\left(\left(L + \log K_T\left(\delta, L, \frac{1}{T}\right)\right)K_T^2(\delta, L, \frac{1}{T}) \left(\Dim(\cF, \cP)^{\frac{\alpha}{\alpha+2}}T^{\frac{2}{\alpha+2}} + \log^2\left(\frac{\log T}{\delta}\right)\right)\right)\\
&=
O\left(\left(L + V\log\left(\frac{T}{\delta}\right)\right)\left(L + \log\left(\frac{T}{\delta}\right)\right) \left(\Dim(\cF, \cP)^{\frac{\alpha}{\alpha+2}}T^{\frac{2}{\alpha+2}} + \log^2\left(\frac{\log T}{\delta}\right)\right)\right)\\
&=
\left(L + V\log\left(\frac{T}{\delta}\right)\right)\left(L + \log\left(\frac{T}{\delta}\right)\right)
O\left(
\Dim(\cF, \cP)^{\frac{\alpha}{\alpha+2}}T^{\frac{2}{\alpha+2}} + \log^2\left(\frac{\log T}{\delta}\right)
\right)~.
\end{align*}

Similarly applying them to Theorem \ref{thm: regret bound for nonlinear model},
\begin{align*}
R_T(\cP) 
&=
O\left(\Dim(\cF, \cP)\left(L + \log K_T\left(\delta, L, \frac{1}{T}\right)\right)K_T^2(\delta, L, \frac{1}{T}) 2^{L}\right)\\
&=
O\left(\left(L + K_T\left(\delta, L, \frac{1}{T}\right)\right)K_T^2(\delta, L, \frac{1}{T}) \left(\Dim(\cF, \cP)^{\frac{\alpha+1}{\alpha+2}}T^{\frac{1}{\alpha+2}} + \log\left(\frac{\log T}{\delta}\right)\right)\right)\\
&=
O\left(\left(L + V\log\left(\frac{T}{\delta}\right)\right)\left(L + \log\left(\frac{T}{\delta}\right)\right) \left(\Dim(\cF, \cP)^{\frac{\alpha+1}{\alpha+2}}T^{\frac{1}{\alpha+2}} + \log\left(\frac{\log T}{\delta}\right)\right)\right)\\
&=
V\log^2\left(\frac{T}{\delta}\right)
O\left(
\Dim(\cF, \cP)^{\frac{\alpha+1}{\alpha+2}}T^{\frac{1}{\alpha+2}} + \log\left(\frac{\log T}{\delta}\right)
\right)~,
\end{align*}
where in the second equality we used the assumption that $\Dim(\cF, \cP)<T$.

If $2^{-L+2} > \epsilon_0$, then $2^L \leq \frac{4}{\epsilon_0}$. Plugging these bounds into Theorem \ref{thm: query bound for nonlinear model} and Theorem \ref{thm: regret bound for nonlinear model} gives
\begin{align*}
    N_T(\cP) 
    =&
    O\left(V
    \log^2
    \left(
    \frac{T}{\delta\epsilon_0}\right)\frac{\Dim(\cF, \cP)}{\epsilon_0^2}
    \right)
    =V\log^2
    \left(
    \frac{T}{\delta}\right)\left(1 + \log^2\left(\frac{1}{\epsilon_0}\right)\right)
    O\left(
    \frac{\Dim(\cF, \cP)}{\epsilon_0^2}
    \right)
    \\
    R_T(\cP) 
    =&
    O\left(V
    \log^2
    \left(
    \frac{T}{\delta\epsilon_0}\right)\frac{\Dim(\cF, \cP)}{\epsilon_0}\right)
    =V\log^2
    \left(
    \frac{T}{\delta}\right)\left(1 + \log^2\left(\frac{1}{\epsilon_0}\right)\right)
    O\left(
    \frac{\Dim(\cF, \cP)}{\epsilon_0}
    \right)
\end{align*}

Lastly, (\ref{ineq: high prob event on x}) and lemma \ref{lma: control of gap: nonlinear case} together yield
\[
\rP\left(\bar{\ep}\bigcap\left(\bigcap_{\ell=1}^L \ep_\ell\right)\right) \geq 1 - 2\delta~,
\]
which concludes the proof.
\end{proof}

We now turn the bound on the weighted cumulative regret $R_T(\cP)$ in the previous corollary into a bound on the excess risk. We can write
\begin{align*}
\cL(\widehat{f}) - \cL(f_\star) 
&= 
\rE_{(\x,y) \sim \cD}\Bigl[\ind{y \neq \sign(\widehat{f}(\x) - 1/2)} - \ind{y \neq \sign(f_\star(\x) - 1/2)}\Bigl]\\
&= 
\rE_{\x \sim \cD_{\cX}} \Bigl[\rE_{y \sim \cD_{\cY|\cX}}\bigl[\ind{y \neq \sign(\widehat{f}(\x) - 1/2)} - \ind{y \neq \sign(f_\star(\x) - 1/2)}\bigl]\Bigl]\\
&= 
\rE_{\x \sim \cD_{\cX}} \Bigl[\ind{\sign(\widehat{f}(\x) - 1/2) \neq \sign(f_\star(\x) - 1/2)}\,|f_\star(\x) - 1/2| \Bigl]~,
\end{align*}
where $\widehat{f}$ is the hypothesis returned by Algorithm \ref{a:batch_al_nonlinear}. Now, simply observe that
\[
\ind{\sign(\widehat{f}(\x) - 1/2) \neq \sign(f_\star(\x) - 1/2)}\,|f_\star(\x) - 1/2| 
\]
has the same form as the function $\phi(\widehat{f},\x)$ in Appendix \ref{as:ancillary} on which the uniform convergence result of Theorem \ref{thm:uniformconvergence} (with $\langle\hw,\x \rangle$ replaced by $\widehat{f}(\x)-1/2$) applies, with $\widehat{\epsilon}(\delta)$ therein replaced by the bound on $R_T(\cP)$ borrowed from Corollary \ref{thm: upper bounds, nonlinear case}.
This allows us to conclude that with probability at least $1-\delta$
\begin{align*}
\cL(\widehat{f}) - \cL(f_\star)
=&
V\log^2\left(\frac{T}{\delta}\right)\left(1 + \log^2\left(\frac{1}{\epsilon_0}\right)\right)O\left(
\mathrm{max}
\left\{
\left(\frac{\Dim(\cF, \cP)}{T}\right)^{\frac{\alpha+1}{\alpha+2}}
,~\frac{\Dim(\cF, \cP)}{T\epsilon_0}
\right\} 
\right)\\ 
&+
O\left(\frac{\log\left(\frac{\log T}{\delta}\right) + V \log\left(2eT\right)}{T}\right)~,
\end{align*}
as claimed in Theorem \ref{thm:main_nonlinear} in the main body of the paper.

\section{Ancillary technical results}\label{as:ancillary}
This section collects ancillary technical results that are used throughout the appendix.

We first recall the following version of the Hoeffding's bound.
\begin{lemma}\label{l:rademacher}
Let $a_1,\ldots,a_T$ be $T$ arbitrary real numbers, and $\{\sigma_1,\ldots,\sigma_T\}$ be $T$ i.i.d. Rademacher variables, each taking values $\pm 1$ with equal probability. Then for any $\epsilon \geq 0$
\[
\rP\left(\sum_{t=1}^T \sigma_t a_t \geq \epsilon \right) \leq \exp \left(-\frac{\epsilon^2}{2\sum_{t=1}^T a_t^2}\right)~,
\]
where the probability is with respect to $\{\sigma_1,\ldots,\sigma_T\}$. 
\end{lemma}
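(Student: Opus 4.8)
The plan is to use the standard Chernoff (exponential moment) method, which is tailor-made for tail bounds on sums of independent bounded variables. First I would fix any $\lambda > 0$ and apply Markov's inequality to the exponentiated sum, writing
\[
\rP\left(\sum_{t=1}^T \sigma_t a_t \geq \epsilon\right)
= \rP\left(e^{\lambda \sum_{t=1}^T \sigma_t a_t} \geq e^{\lambda \epsilon}\right)
\leq e^{-\lambda \epsilon}\,\rE\!\left[e^{\lambda \sum_{t=1}^T \sigma_t a_t}\right].
\]
Since the $\sigma_t$ are independent, the expectation factorizes as $\prod_{t=1}^T \rE[e^{\lambda \sigma_t a_t}]$, and each factor is computed explicitly from the two-point distribution of a Rademacher variable: $\rE[e^{\lambda \sigma_t a_t}] = \tfrac{1}{2}(e^{\lambda a_t} + e^{-\lambda a_t}) = \cosh(\lambda a_t)$.

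The one genuine inequality to invoke is the scalar bound $\cosh(x) \leq e^{x^2/2}$, valid for all real $x$. I would justify this by a term-by-term comparison of the power series, using that $(2k)! \geq 2^k\,k!$ so that the coefficient $\tfrac{1}{(2k)!}$ of $x^{2k}$ in $\cosh(x)$ is dominated by the coefficient $\tfrac{1}{2^k k!}$ of $x^{2k}$ in $e^{x^2/2}$. Applying this to each factor gives $\rE[e^{\lambda \sigma_t a_t}] \leq e^{\lambda^2 a_t^2/2}$, and hence
\[
\rP\left(\sum_{t=1}^T \sigma_t a_t \geq \epsilon\right)
\leq \exp\!\left(-\lambda \epsilon + \frac{\lambda^2}{2}\sum_{t=1}^T a_t^2\right).
\]

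Finally I would optimize the free parameter $\lambda$. The exponent is a convex quadratic in $\lambda$ minimized at $\lambda = \epsilon / \sum_{t=1}^T a_t^2$ (which is indeed positive when $\epsilon > 0$); substituting this value collapses the exponent to $-\epsilon^2/\bigl(2\sum_{t=1}^T a_t^2\bigr)$, yielding the claimed bound. The edge case $\epsilon = 0$ (or $\sum_t a_t^2 = 0$) is handled trivially since the right-hand side is then $1$. I do not expect any serious obstacle here: the only step requiring care is the power-series justification of $\cosh(x) \leq e^{x^2/2}$, and the rest is a routine Chernoff optimization.
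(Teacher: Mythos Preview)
Your proof is correct and is exactly the standard Chernoff argument for Hoeffding's inequality on Rademacher sums. The paper itself does not prove this lemma at all --- it is simply stated as a recalled version of Hoeffding's bound in the ancillary results section --- so there is nothing to compare; your proposal supplies precisely the routine justification one would expect.
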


Let us consider the linear case first. 
Define the function $\phi\,:\, \rR^d\times\cP \rightarrow [0,1]$ as
\[
\phi(\hw,\x) = \ind{\sign\ang{\hw,\x}\neq\sign\ang{\w^*,\x}}\rho(\ang{\w^*,\x})~,
\]
where $\rho(\cdot)$ has range in $[0,1]$, and does not depend on $\hw$.
We have the following standard covering result, which is a direct consequence of Sauer-Shelah lemma (e.g., \cite{sa72}).
\begin{lemma}\label{lma:sauer}
Consider any given $S_T=\{\x_1,\ldots,\x_T\} \in \rR^d$, and let 
\[
\Phi(S_T) = \big|\{[\phi(\hw,\x_1),\ldots,\phi(\hw,\x_T)]\,:\, \hw \in \rR^d\}\big|~.
\]
We have, when $T \geq d$,
\[
\Phi(S_T) \leq \left(\frac{eT}{d}\right)^d~.
\]

\end{lemma}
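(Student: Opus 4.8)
The plan is to observe that the map $\hw \mapsto [\phi(\hw,\x_1),\ldots,\phi(\hw,\x_T)]$ factors through the \emph{sign pattern} of $\hw$ on $S_T$, thereby reducing the counting problem to the classical one of counting dichotomies realized by homogeneous half-spaces. Indeed, since $\w^*$ and the map $\rho(\cdot)$ do not depend on $\hw$, for each fixed $\x_i$ the quantity $\rho(\ang{\w^*,\x_i})$ is a fixed number $r_i \in [0,1]$ and $\sign\ang{\w^*,\x_i}$ is a fixed sign $s_i$. Hence
\[
\phi(\hw,\x_i) = r_i\,\ind{\sign\ang{\hw,\x_i} \neq s_i}~,
\]
so the $i$-th coordinate takes only the two values $0$ and $r_i$, and its value is completely determined by the single bit $\sign\ang{\hw,\x_i}$.

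First I would make this factorization precise: the vector $[\phi(\hw,\x_1),\ldots,\phi(\hw,\x_T)]$ is a function of the sign vector $(\sign\ang{\hw,\x_1},\ldots,\sign\ang{\hw,\x_T}) \in \{-1,+1\}^T$ alone. This map need not be injective, so $\Phi(S_T)$ is \emph{at most} the number of distinct sign vectors produced as $\hw$ ranges over $\rR^d$, i.e., the number of dichotomies of $S_T$ induced by the class of homogeneous linear threshold functions $\{\x \mapsto \sign\ang{\hw,\x} : \hw \in \rR^d\}$. A fixed tie-breaking convention for $\sign(0)$ can be adopted; this only affects a measure-zero set of $\hw$ and does not increase the count.

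Next I would invoke the fact that the class of homogeneous half-spaces in $\rR^d$ has VC dimension exactly $d$, so that by the Sauer--Shelah lemma the number of realizable dichotomies on $T \geq d$ points is bounded by $\sum_{i=0}^{d} \binom{T}{i}$. Combining this with the standard binomial estimate $\sum_{i=0}^{d} \binom{T}{i} \leq (eT/d)^d$, valid for $T \geq d$, yields the claimed bound $\Phi(S_T) \leq (eT/d)^d$.

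The step carrying essentially all the content is the first reduction, namely recognizing that $\phi$ depends on $\hw$ only through the sign pattern; after that the argument is a textbook application of Sauer--Shelah. The only mild technical points to handle are the tie-breaking at $\ang{\hw,\x_i}=0$ and the confirmation that it is the homogeneous (bias-free) VC dimension $d$, rather than $d+1$, that is relevant here, since the functional $\ang{\hw,\x}$ carries no intercept term.
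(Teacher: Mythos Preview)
Your proposal is correct and is precisely the unpacking of what the paper means when it says the lemma ``is a direct consequence of Sauer--Shelah'': the paper gives no further proof, and your factorization through the sign pattern of $\hw$ followed by the VC bound for homogeneous half-spaces is exactly the intended argument.
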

The following result gives a bound on $L_\infty$ covering number for VC subgraph class. 
\begin{lemma}\label{lma: bound for covering number}
Let $f \in \cF$ be a $[0,1]$-valued function of $\x \in \cX$. 
Assume the binary function $\ind{z \leq f(\x)}$ defined on $\rR \times \cX$ has VC-dimension $V$, i.e. $\cF$ has VC-subgraph dimension $V$. Then given $S_T=\{\x_1,\ldots,\x_T\} \subset \cX$, with $T \geq V$,
\[
N(\epsilon, \cF, L_\infty(S_T)) \leq \left(\frac{eT}{V \epsilon}\right)^V .
\]
\end{lemma}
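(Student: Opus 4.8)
The plan is to reduce the $L_\infty(S_T)$ covering number to a count of sign patterns of the subgraph class and then invoke the Sauer--Shelah lemma; throughout I may assume $\epsilon \le 1$, since otherwise the covering number is trivially $1$. First I would discretize the range $[0,1]$. Fix the thresholds $z_k = k\epsilon$ lying in $(0,1)$; there are $m \le 1/\epsilon$ of them, and together with the endpoints they cut $[0,1]$ into consecutive intervals each of length at most $\epsilon$. To each $f \in \cF$ associate its \emph{threshold pattern} on $S_T$, namely the binary vector
\[
\bigl(\ind{z_k \le f(\x_i)}\bigr)_{i\in\{1,\ldots,T\},\, k\in\{1,\ldots,m\}} \in \{0,1\}^{mT}~.
\]
The key observation is that if $f,g\in\cF$ share the same threshold pattern, then for each $i$ the values $f(\x_i)$ and $g(\x_i)$ lie above exactly the same thresholds, hence fall in a common interval of length at most $\epsilon$, so $|f(\x_i)-g(\x_i)| \le \epsilon$ for all $i$, i.e. $\|f-g\|_{L_\infty(S_T)} \le \epsilon$. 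Selecting one representative $f$ per realized threshold pattern therefore produces an $\epsilon$-cover of $\cF$ in $L_\infty(S_T)$, so that $N(\epsilon,\cF,L_\infty(S_T))$ is at most the number of distinct threshold patterns.

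Next I would count these patterns. Each coordinate is the value of $\ind{z \le f(\x)}$ at the augmented point $(z_k,\x_i)\in\rR\times\cX$, of which there are $mT \le T/\epsilon$; by hypothesis the class $\{(z,\x)\mapsto\ind{z\le f(\x)} : f\in\cF\}$ has VC dimension $V$. Hence the number of threshold patterns equals its shatter coefficient evaluated on these $mT$ points, which by Sauer--Shelah is at most $(e\,mT/V)^V$, valid because $mT \ge T \ge V$. Substituting $mT \le T/\epsilon$ yields $N(\epsilon,\cF,L_\infty(S_T)) \le (eT/(V\epsilon))^V$, as claimed.

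The argument is entirely standard, so there is no deep obstacle. The two points demanding care are (i) verifying that equality of threshold patterns forces $L_\infty$-closeness, which relies on the thresholds being spaced by exactly $\epsilon$, and (ii) controlling the number of interior thresholds by $1/\epsilon$ so that the final constant lands on $(eT/(V\epsilon))^V$ rather than carrying an extra additive slack from rounding.
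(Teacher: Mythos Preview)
Your proposal is correct and follows essentially the same argument as the paper: discretize the range $[0,1]$ at spacing $\epsilon$, observe that functions with the same threshold pattern on $S_T$ are $\epsilon$-close in $L_\infty(S_T)$, and bound the number of patterns by Sauer--Shelah applied to the $mT$ augmented points. The paper simplifies by assuming $1/\epsilon$ is an integer (using monotonicity of the covering number), whereas you handle the general $\epsilon$ directly via $m \le 1/\epsilon$; otherwise the proofs are the same.
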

\begin{proof}
As $N(\cdot, \cF, L_\infty(S_T))$ is a decreasing function of $\epsilon$, we can assume that $1/\epsilon$ is an integer. Consider discretization of $[0,1]$ by $m=1/\epsilon$ points $U_1,\ldots,U_m$ so that $\forall U \in [0,1]$, $\min_i |U-U_i| \leq \epsilon$. Consider $mT$ points 
$\{(U_j,x_i):i,j\}$, by Sauer's lemma, the cardinality of  $\{\left(\ind{U_j \leq f(\x_i)}\right)_{i,j}\,:\, f \in \cF\}$ is at most $\left(\frac{eT}{V \epsilon}\right)^V$. Let $f_1,\ldots,f_N$ attain all distinctive values of $\{\left(\ind{U_j \leq f(\x_i)}\right)_{i,j}: f \in \cF\}$. 
Given any $f \in \cF$, there exists $f_k$ such that 
$\ind{U_j \leq f(\x_i)}= \ind{U_j \leq f_k(\x_i)}$ for all $i$ and $j$. It is easy to check that this implies that $|f(\x_i)-f_k(\x_i)| \leq \epsilon$ for all $\x_i \in S_T$. 
\end{proof}

The next result follows from a standard symmetrization argument. 
\begin{lemma}\label{lma:symmetrization}
Let $\cX = \rR^d$, $S_T = (\x_1,\ldots, \x_T)$ be a sample drawn i.i.d. according to $\cD_{\cX}$ and $S'_T = (\x'_1,\ldots, \x'_T)$ be another sample drawn according to $\cD_{\cX}$, with $T \geq d$. Then with probability at least $1-\delta$
\[
\sum_{t=1}^T \phi(\hw,\x'_t) \leq 3\sum_{t=1}^T \phi(\hw,\x_t) +  8 \log(1/\delta) + 8 d \log (2e T/d)~,
\]
uniformly over $\hw \in \rR^d$.
\end{lemma}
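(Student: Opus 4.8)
The plan is to run a classical two-sample (``ghost sample'') symmetrization, reduce the supremum over $\hw\in\rR^d$ to a union bound over a finite effective function class via Lemma~\ref{lma:sauer}, and then control each fixed function with the Rademacher tail bound of Lemma~\ref{l:rademacher}, in which the variance proxy is self-bounded by the mean. The structural fact that drives everything is that $\phi(\hw,\cdot)$ is $[0,1]$-valued and, on any finite sample, its vector of values is determined by the sign pattern $\sign\ang{\hw,\x}$ alone, because the weight $\rho(\ang{\w^*,\x})$ does not depend on $\hw$; this is exactly what lets Lemma~\ref{lma:sauer} bound the number of realizable value-vectors.

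First I would symmetrize by pairing. Since the $2T$ variables $\x_t,\x'_t$ are i.i.d.\ under $\cD_\cX$, the law of $(S_T,S'_T)$ is invariant under independently swapping $\x_t\leftrightarrow\x'_t$ within each pair, so I condition on the $T$ unordered pairs and introduce i.i.d.\ Rademacher signs $\sigma_t$ encoding the within-pair assignment. Writing $C=8\log(1/\delta)+8d\log(2eT/d)$, the bad event is $\{\exists\hw:\sum_t\phi(\hw,\x'_t)>3\sum_t\phi(\hw,\x_t)+C\}$. For a fixed $f=\phi(\hw,\cdot)$ let $S=\sum_t\big(f(\x_t)+f(\x'_t)\big)$, which is fixed once the pairs are fixed, and let $W=\sum_t\sigma_t a_t$ be the Rademacher sum whose coefficient $a_t$ is half the within-pair difference of $f$. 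Then $\sum_t f(\x'_t)=S/2+W$ and $\sum_t f(\x_t)=S/2-W$, so the bad event collapses to the one-dimensional condition $4W>S+C$, i.e.\ $W>(S+C)/4$.

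Next I would reduce to a finite class and apply Hoeffding. Conditioned on the $2T$ sampled points, Lemma~\ref{lma:sauer} (valid since $2T\ge d$) shows that $\phi(\hw,\cdot)$ realizes at most $N:=(2eT/d)^d$ distinct value-vectors, so a union bound over $N$ functions suffices. For each such $f$ we have $|a_t|\le\tfrac12$ and, from the elementary inequality $(u-v)^2\le u+v$ for $u,v\in[0,1]$, the self-bounding estimate $\sum_t a_t^2=\tfrac14\sum_t\big(f(\x'_t)-f(\x_t)\big)^2\le S/4$. Lemma~\ref{l:rademacher} then gives $\rP_\sigma\big(W>(S+C)/4\big)\le\exp\big(-(S+C)^2/(2S)\big)\le e^{-C}$, where the last step uses $(S+C)^2/(2S)=S/2+C+C^2/(2S)\ge C$. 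Finally $N e^{-C}=e^{\log N-C}\le\delta$, because $\log N-C=d\log(2eT/d)-8\log(1/\delta)-8d\log(2eT/d)\le\log\delta$; taking expectation over the random pairs preserves this, yielding the claim.

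The crux — and the only genuinely non-mechanical step — is the self-bounding inequality $\sum_t a_t^2\le S/4$, which converts the factor-$3$ slack built into the event into a variance proxy controlled by the combined-sample mean $S$, and thereby produces the dimension-free tail $e^{-C}$ instead of a $\sqrt{S}$-scale Gaussian deviation. The factor $3$ in the statement is precisely what is needed to turn $4W>S+C$ into something whose Hoeffding exponent dominates $C$; the pairing symmetrization and the Sauer--Shelah cardinality count are otherwise routine, and I would only verify $2T\ge d$ (immediate from $T\ge d$) and the $[0,1]$-valuedness of $\phi$.
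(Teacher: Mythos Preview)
Your approach is essentially the paper's: symmetrize, cover via Lemma~\ref{lma:sauer}, apply Lemma~\ref{l:rademacher}, and use the self-bounding inequality to turn the variance proxy into the mean; the only structural difference is that the paper splits the symmetrized event into two one-sample events (so Sauer is applied on $T$ points, giving $(eT/d)^d$), whereas you keep the paired form and apply Sauer on the $2T$ combined points, both landing on the stated constants. One arithmetic slip: plugging $\epsilon=(S+C)/4$ and $\sum_t a_t^2\le S/4$ into Lemma~\ref{l:rademacher} yields $\exp\bigl(-(S+C)^2/(8S)\bigr)$, not $\exp\bigl(-(S+C)^2/(2S)\bigr)$; this only gives $e^{-C/4}$ rather than $e^{-C}$, but since $N e^{-C/4}=(2eT/d)^d\,\delta^2(2eT/d)^{-2d}\le\delta$, the conclusion is unaffected.
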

\begin{proof}
Let $\{\sigma_1,\ldots,\sigma_T\}$ be independent Rademacher variables as in Lemma \ref{l:rademacher}.
We can write, for any $\epsilon \geq 0$,\\
\begin{align*}
\rP&\left(\exists \hw \in \rR^d : \sum_{t=1}^T \phi(\hw,\x'_t) \geq 3\sum_{t=1}^T \phi(\hw,\x_t) + 2\epsilon\right)\\
&=
\rP\left(\exists \hw \in \rR^d : \sum_{t=1}^T \Bigl[\phi(\hw,\x'_t)-\phi(\hw,\x_t)\Bigl] \geq \frac{1}{2}\,\sum_{t=1}^T \Bigl[\phi(\hw,\x_t) +\phi(\hw,\x'_t)\Bigl] +\epsilon\right)\\
&=
\rP\left(\exists \hw \in \rR^d: \sum_{t=1}^T \sigma_t \Bigl[\phi(\hw,\x'_t)-\phi(\hw,\x_t)\Bigl] \geq \frac{1}{2}\,\sum_{t=1}^T \Bigl[\phi(\hw,\x_t) +\phi(\hw,\x'_t)\Bigl] + \epsilon\right) \\
&\leq
\rP\left(\exists \hw \in \rR^d: \sum_{t=1}^T \sigma_t \phi(\hw,\x'_t) \geq \frac{1}{2}\,\sum_{t=1}^T \phi(\hw,\x'_t) + \frac{\epsilon}{2}\right) \\
&\qquad + \rP\left(\exists \hw \in \rR^d : \sum_{t=1}^T -\sigma_t \phi(\hw,\x_t) \geq \frac{1}{2}\,\sum_{t=1}^T \phi(\hw,\x_t) + \frac{\epsilon}{2}\right) \\
&\leq 
2 \sup_{S_T} \rP\left(\exists \hw \in \rR^d : \sum_{t=1}^T \sigma_t \phi(\hw,\x_t) \geq \frac{1}{2}\,\sum_{t=1}^T \phi(\hw,\x_t) + \frac{1}{2}\,\epsilon\,\Bigl|\,S_T \right)\\
&\leq 
2 \left(\frac{eT}{d}\right)^d \sup_{S_T, \hw \in \rR^d} \rP\left(\sum_{t=1}^T \sigma_t \phi(\hw,\x_t) \geq \frac{1}{2}\,\sum_{t=1}^T \phi(\hw,\x_t) + \frac{1}{2}\,\epsilon\,\Bigl|\,S_T \right) \\
&{\mbox{(from the union bound and Lemma \ref{lma:sauer})}}\\
&\leq 
2 \left(\frac{eT}{d}\right)^d \sup_{S_T, \hw \in \rR^d} \exp\left(-\frac{(\sum_{t=1}^T \phi(\hw,\x_t) + \epsilon)^2}{8\sum_{t=1}^T \phi(\hw,\x_t)^2}\right) \\
&{\mbox{(from Lemma \ref{l:rademacher})}}\\
&\leq 
2 \left(\frac{eT}{d}\right)^d \exp(-\epsilon/4)~,
\end{align*}
the last inequality deriving from the fact that, since $\phi(\hw,\x_t) \in [0,1]$,
\[
\frac{(\sum_{t=1}^T \phi(\hw,\x_t) + \epsilon)^2}{\sum_{t=1}^T \phi(\hw,\x_t)^2} \geq \frac{(\sum_{t=1}^T \phi(\hw,\x_t) + \epsilon)^2}{\sum_{t=1}^T \phi(\hw,\x_t)} \geq 2\epsilon~.
\]
Take $\epsilon$ such that 
$\delta=2 \left(\frac{eT}{d}\right)^d \exp(-\epsilon/4)$, to obtain the claimed bound.
\end{proof}
%
\begin{theorem}\label{thm:uniformconvergence}
With the same notation and assumptions as in Lemma \ref{lma:symmetrization}, let $\hw \in \rR^d$ be a function of $S_T$ such that
\[
\frac{1}{T} \sum_{t=1}^T \phi(\hw,\x_t) \leq \widehat{\epsilon}(\delta) 
\]
holds with probability at least $1-\delta$, for some $\widehat{\epsilon}(\delta) \in [0,1]$. 
Then with probability at least $1-3\delta$:
\[
\rE_{\x \sim \cD}\phi(\hw,\x) \leq 4\widehat{\epsilon}(\delta) + \frac{22 \log\left(\frac{1}{\delta}\right) + 11 d \log\left(\frac{2eT}{d}\right)}{T}~.
\]
\end{theorem}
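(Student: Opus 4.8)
The plan is to run a ghost-sample (symmetrization) argument, crucially relying on a \emph{multiplicative} (relative) deviation bound on the independent sample so as to recover the fast $1/T$ rate rather than the $1/\sqrt{T}$ rate that an additive Hoeffding bound would give. Introduce an independent ghost sample $S'_T = (\x'_1,\ldots,\x'_T)$ drawn i.i.d.\ from $\cD_\cX$, and assemble three facts, each holding with probability at least $1-\delta$: (i) a relative Chernoff bound for the data-dependent $\hw$, namely $\tfrac34\,T\,\rE_{\x\sim\cD}\phi(\hw,\x) \le \sum_{t=1}^T \phi(\hw,\x'_t) + 8\log(1/\delta)$; (ii) the uniform symmetrization inequality of Lemma~\ref{lma:symmetrization}, $\sum_{t=1}^T \phi(\hw,\x'_t) \le 3\sum_{t=1}^T \phi(\hw,\x_t) + 8\log(1/\delta) + 8d\log(2eT/d)$, valid simultaneously over all $\hw\in\rR^d$; and (iii) the standing hypothesis $\sum_{t=1}^T \phi(\hw,\x_t) \le T\,\widehat{\epsilon}(\delta)$. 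A union bound makes all three hold at once with probability at least $1-3\delta$.

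For ingredient (i) I would condition on $S_T$. Since $\hw$ is a function of $S_T$ alone, the variables $\phi(\hw,\x'_1),\ldots,\phi(\hw,\x'_T)$ are, conditionally on $S_T$, i.i.d.\ and $[0,1]$-valued with common mean $\mu := \rE_{\x\sim\cD}\phi(\hw,\x)$. I would then invoke the lower-tail multiplicative Chernoff bound $\rP\bigl(\sum_t \phi(\hw,\x'_t) \le (1-\gamma)\,T\mu\bigr)\le e^{-\gamma^2 T\mu/2}$ with $\gamma = \tfrac14$, splitting into the regime where $T\mu$ is small (there $\tfrac34 T\mu$ is already dominated by the additive $8\log(1/\delta)$ term, since $\sum_t\phi(\hw,\x'_t)\ge 0$) and the regime where $T\mu$ is large enough to push the tail below $\delta$. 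Averaging this conditional statement over $S_T$ gives the stated inequality over the joint draw of $(S_T,S'_T)$.

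To conclude I would chain the three inequalities in the order (i)$\to$(ii)$\to$(iii),
\[
\tfrac34\,T\,\rE_\x\phi(\hw,\x) \;\le\; 3\,T\,\widehat{\epsilon}(\delta) + 16\log(1/\delta) + 8d\log(2eT/d),
\]
and divide through by $\tfrac34 T$; the factor $3/(3/4)=4$ produces the claimed $4\widehat{\epsilon}(\delta)$, while $16/(3/4)<22$ and $8/(3/4)<11$ yield the two additive terms. The main point to get right --- and the only genuinely delicate one --- is the interplay of the data dependence across the three events: ingredient (i) may treat $\hw$ as fixed precisely because the ghost sample is independent of it; ingredient (ii) must be \emph{uniform} over $\hw$, since symmetrization cannot know which $\hw$ the algorithm selected; and ingredient (iii) supplies the small-training-error guarantee for the specific realized $\hw$. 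Using the multiplicative rather than the additive concentration form in (i) is exactly what allows $\rE_\x\phi(\hw,\x)$ to be absorbed on the left with a constant factor, which is the mechanism delivering the fast $1/T$ rate.
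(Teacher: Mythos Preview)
Your proposal is correct and mirrors the paper's own proof almost exactly: the paper simply says ``Use the multiplicative Chernoff bound $\rE_{\x\sim\cD}\phi(\hw,\x)\le\tfrac{4}{3T}\sum_t\phi(\hw,\x'_t)+\tfrac{32}{3T}\log(1/\delta)$ and then apply Lemma~\ref{lma:symmetrization},'' which is precisely your three-ingredient chain (i)$\to$(ii)$\to$(iii) together with the same arithmetic ($3\cdot\tfrac43=4$, $16\cdot\tfrac43<22$, $8\cdot\tfrac43<11$). Your added justification for why $\hw$ may be treated as fixed in (i) via conditioning on $S_T$, and why (ii) must be uniform, is exactly the reasoning the paper leaves implicit. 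One small quibble: in your sketch of (i), fixing $\gamma=\tfrac14$ in $\rP(\sum_t\phi\le(1-\gamma)T\mu)\le e^{-\gamma^2T\mu/2}$ leaves a gap between the ``small'' regime $T\mu\le\tfrac{32}{3}\log(1/\delta)$ and the ``large'' regime $T\mu\ge 32\log(1/\delta)$ needed for the tail to fall below $\delta$; to hit the constant $8$ cleanly you would instead apply Bernstein and split $\sqrt{2T\mu\log(1/\delta)}\le\tfrac14 T\mu+2\log(1/\delta)$, but this is cosmetic and does not affect the structure of the argument.
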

\begin{proof}
Use the multiplicative Chernoff bound
\[
\rE_{\x \sim \cD} \phi(\hw,\x) \leq \frac{4}{3T}\,\sum_{t=1}^T \phi(\hw,\x'_t) +  \frac{32}{3T}\, \log(1/\delta)~,
\]
and then apply Lemma \ref{lma:symmetrization} to further bound the right-hand side.
\end{proof}

To control noise terms, which are 1-subgaussian random variables, we provide the following lemma which is a direct implication of Chernoff bound. 
\begin{lemma}\label{lma: concentration of sub-gaussian r.v.}
Suppose $\xi$ is a $\sigma$-subgaussian random variable, then for any $\delta>0$,
\[
\rP\left(|\xi| \geq \sqrt{2\sigma^2\log(2/\delta)} \right) \leq \delta~.
\]
\end{lemma}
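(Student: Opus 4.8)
The plan is to apply the standard Chernoff-bounding technique to each tail separately and then combine the two tails with a union bound. Recall that a $\sigma$-subgaussian random variable $\xi$ satisfies $\mathbb{E}[e^{\lambda \xi}] \leq e^{\lambda^2 \sigma^2/2}$ for every $\lambda \in \rR$, and that this two-sided moment-generating-function bound also holds for $-\xi$.

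First I would control the upper tail. For any $t > 0$ and any $\lambda > 0$, Markov's inequality applied to the nonnegative random variable $e^{\lambda \xi}$ gives
\[
\rP(\xi \geq t) = \rP\bigl(e^{\lambda \xi} \geq e^{\lambda t}\bigr) \leq e^{-\lambda t}\,\mathbb{E}\bigl[e^{\lambda \xi}\bigr] \leq e^{-\lambda t + \lambda^2 \sigma^2/2}~.
\]
Minimizing the exponent over $\lambda > 0$ (the optimal choice being $\lambda = t/\sigma^2$) yields $\rP(\xi \geq t) \leq e^{-t^2/(2\sigma^2)}$. Applying the identical argument to $-\xi$ produces the matching lower-tail estimate $\rP(\xi \leq -t) \leq e^{-t^2/(2\sigma^2)}$, and a union bound over the two one-sided events gives
\[
\rP(|\xi| \geq t) \leq 2\,e^{-t^2/(2\sigma^2)}~.
\]

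Finally I would specialize to $t = \sqrt{2\sigma^2 \log(2/\delta)}$, so that $t^2/(2\sigma^2) = \log(2/\delta)$ and hence $e^{-t^2/(2\sigma^2)} = \delta/2$. Substituting into the two-sided bound gives $\rP(|\xi| \geq t) \leq 2\cdot(\delta/2) = \delta$, which is precisely the claimed inequality. There is no genuine obstacle in this argument: the only points requiring a little care are that subgaussianity must be invoked in its symmetric (two-sided) form so that the lower tail is handled on the same footing as the upper tail, and that the factor $2$ appearing inside the logarithm is exactly what absorbs the union-bound factor of $2$, leaving a clean bound of $\delta$.
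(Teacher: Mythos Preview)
Your proof is correct and is precisely the standard Chernoff-bound argument the paper has in mind; the paper itself does not spell out the details but simply states that the lemma ``is a direct implication of Chernoff bound.'' Your write-up fills in exactly those details.
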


\begin{lemma}\label{lma: concentration of square of sub-gaussians}
Let $A=[a_{ij}]\in\mathbb{R}^{m\times n}$ be a matrix. Suppose $\xi_1,\ldots,\xi_n$ are independent $\sigma$-subgaussian random variables. Then for any $\delta>0$, 
\[
\rP\left(\|A\xi\|^2 >2\sigma^2\log\frac{2m}{\delta}\tr(A A^\top)\right) \leq \delta~,
\]
where $\xi=(\xi_1,\ldots,\xi_n)^\top$.
\end{lemma}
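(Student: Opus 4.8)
The plan is to control $\|A\xi\|^2 = \sum_{i=1}^m (A\xi)_i^2$ coordinate by coordinate, rather than attempting to handle the quadratic form directly. First I would observe that the $i$-th coordinate of $A\xi$ is $(A\xi)_i = \sum_{j=1}^n a_{ij}\xi_j$, a fixed linear combination of the independent $\sigma$-subgaussian variables $\xi_j$. A standard fact is that such a linear combination is itself subgaussian with parameter $\sigma\|a_i\|_2$, where $a_i = (a_{i1},\ldots,a_{in})$ denotes the $i$-th row of $A$; this follows by multiplying the individual subgaussian moment-generating-function bounds and using independence.

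Next I would apply Lemma \ref{lma: concentration of sub-gaussian r.v.} to each coordinate with confidence level $\delta/m$ in place of $\delta$. This gives, for each fixed $i$,
\[
\rP\left(|(A\xi)_i| \geq \sqrt{2\sigma^2\|a_i\|_2^2\log\tfrac{2m}{\delta}}\right) \leq \frac{\delta}{m}~,
\]
equivalently $(A\xi)_i^2 \leq 2\sigma^2\|a_i\|_2^2\log\tfrac{2m}{\delta}$ with probability at least $1-\delta/m$. A union bound over the $m$ coordinates then ensures that, with probability at least $1-\delta$, this squared bound holds simultaneously for all $i = 1,\ldots,m$.

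On that event I would simply sum over coordinates:
\[
\|A\xi\|^2 = \sum_{i=1}^m (A\xi)_i^2 \leq 2\sigma^2\log\tfrac{2m}{\delta}\sum_{i=1}^m \|a_i\|_2^2 = 2\sigma^2\log\tfrac{2m}{\delta}\,\tr(AA^\top)~,
\]
where the last identity uses $\sum_{i=1}^m \|a_i\|_2^2 = \sum_{i,j} a_{ij}^2 = \tr(AA^\top)$. Taking complements yields the claimed tail bound.

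The proof is essentially routine once the right decomposition is chosen, so there is no serious obstacle; the only point requiring a little care is the choice to bound each coordinate at level $\delta/m$ (so that the union bound over the $m$ coordinates returns $\delta$) together with the verification that the resulting sum of per-row parameters collapses \emph{exactly} to $\tr(AA^\top)$ rather than to a looser quantity. I would deliberately avoid any Hanson--Wright type direct treatment of the quadratic form, since the coordinatewise argument already delivers the stated constant and the clean $\tr(AA^\top)$ dependence needed by the calling lemma.
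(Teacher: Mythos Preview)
Your proposal is correct and follows essentially the same approach as the paper: bound each coordinate $(A\xi)_i$ as a $\sigma\|a_i\|_2$-subgaussian variable via Lemma~\ref{lma: concentration of sub-gaussian r.v.} at level $\delta/m$, union bound over the $m$ rows, and sum the squared per-row bounds to recover $\tr(AA^\top)$.
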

\begin{proof}
Consider
\[
(A\xi)_i = \sum_{j=1}^n a_{ij}\xi_j~,
\]
the $i$th component of vector $A\xi$.
Note that $(A\xi)_i$ is a $\sigma\sqrt{\sum_{j=1}^na_{ij}^2}$-subgaussian random variable, by lemma \ref{lma: concentration of sub-gaussian r.v.} we have
\[
\rP\left(\left|\sum_{j=1}^n a_{ij}\xi_j\right| \geq \sqrt{2\sigma^2\sum_{j=1}^{n}a_{ij}^2\log\frac{2m}{\delta}}\right) \leq \frac{\delta}{m}~.
\]
A union bound over $i$ gives, with probability at least $1-\delta$, 
\[
\left(\sum_{j=1}^n a_{ij}\xi_j\right)^2 \leq 2\sigma^2\sum_{j=1}^{n}a_{ij}^2\log\frac{2m}{\delta}~,
\]
uniformly over $i=1,\ldots,m$.
Therefore, with probability at least $1-\delta$,
\[
\|A\xi\|^2 =\sum_{i=1}^m \left(\sum_{j=1}^n a_{ij}\xi_j\right)^2 \leq 2\sigma^2\log\frac{2m}{\delta}\sum_{i, j}a_{ij}^2 = 2\sigma^2\log\frac{2m}{\delta}\tr(A A^\top)~,
\]
as claimed.
\end{proof}

Let us now consider the nonlinear case analyzed in Section \ref{sa:nonlinear}. Similar to the linear case, define the function $\phi\,:\, \cF\times\cP \rightarrow [0,1]$ as
\[
\phi(\widehat{f},\x) = \ind{\sign(\widehat{f}(\x) - 1/2) \neq \sign(f_\star(\x) - 1/2)}\,\rho(f_\star(\x))~,
\]
and $\rho(\cdot)$ has range in $[0, 1]$.

As for the counterparts to Theorem \ref{thm:uniformconvergence}, simply observe that
Lemmas \ref{lma:sauer} and \ref{lma:symmetrization} hold in the more general case with $d$ replaced by $V$, where $\cF$ is a function class having finite VC-subgraph dimension $V$.

In particular, given any $S_T=\{\x_1,\ldots,\x_T\} \in \cX$, define 
\[
\Phi(S_T) = \big|\{[\phi(\widehat{f},\x_1),\ldots,\phi(\widehat{f},\x_T)]\,:\, \widehat{f} \in \cF\}\big|~.
\]
We then have, when $T \geq V'$,
\[
\Phi(S_T) \leq \sum_{i=0}^{V'} \binom{T}{i}
\]
where $V'$ is the VC dimension of the binary-valued class
\[
\left\{\sign(\widehat{f}(\x) - 1/2)\,:\, \widehat{f} \in \cF \right\}~.
\]
Now it is easy to see that $V' \leq V$, so that we also have
\[
\Phi(S_T) \leq  \sum_{i=0}^{V} \binom{T}{i} \leq
\left(\frac{eT}{V}\right)^{V}~.
\]
%
With this modification, Theorem \ref{thm:uniformconvergence} holds with factor
$
d\log\left(\frac{2eT}{d}\right)
$
therein replaced by
$
V\log\left(\frac{2eT}{V}\right),
$
once we also replace $\langle\hw,\x \rangle$ by $\widehat{f}(\x)-1/2$.

\end{document}